\DeclareMathOperator*{\argmax}{argmax}
\newcommand{\expect}{\mathbb{E}}
\newcommand{\reals}{\mathbb{R}}
\newcommand{\figref}[1]{{Figure~\ref{#1}}}
\DeclareSymbolFont{symbolsC}{U}{txsyc}{m}{n}
\newcommand{\red}[1]{{\textcolor{red}{#1}}}
\newcommand{\todo}[1]{\red{TODO: {#1}}}
\algnewcommand\algorithmicinput{\textbf{Input:}}
\algnewcommand\INPUT{\item[\algorithmicinput]}
\algnewcommand\algorithmicoutput{\textbf{Output:}}
\algnewcommand\OUTPUT{\item[\algorithmicoutput]}
\algnewcommand\algorithmicinitialize{\textbf{Initialize:}}
\algnewcommand\INIT{\item[\algorithmicinitialize]}
\providecommand{\customgenericname}{}
\newcommand{\newcustomtheorem}[2]{%
  \newenvironment{#1}[1]
  {%
   \renewcommand\customgenericname{#2}%
   \renewcommand\theinnercustomgeneric{##1}%
   \innercustomgeneric
  }
  {\endinnercustomgeneric}
}
\newtheorem*{rep@theorem}{\rep@title}
\newcommand{\newreptheorem}[2]{%
\newenvironment{rep#1}[1]{%
 \def\rep@title{#2 \ref{##1}}%
 \begin{rep@theorem}}%
 {\end{rep@theorem}}}
\newtheorem{thm}{Theorem}
\newtheorem{lem}{Lemma}
\newtheorem{theorem}{Theorem}
\newtheorem{lemma}{Lemma}
\newtheorem{corollary}[theorem]{Corollary}
\newtheorem{definition}{Definition}
\begin{document}

%
%
%
%
%
%
%
%
%
%
%

\title{Efficient Structured Surrogate Loss and Regularization in  Structured Prediction}

\author{Heejin Choi}
\department{Toyota Technological Institute of Technology}

 \degree{Doctor of Philosophy in Computer Science}

\degreemonth{August}
\degreeyear{2018}
\thesisdate{August 31, 2018}


\supervisor{Nathan Srebro}{Professor}

\chairman{Avrim Blum}{Chief Academic Officer}

\maketitle



\cleardoublepage
\setcounter{savepage}{\thepage}
\begin{abstractpage}
%
%
%
In this dissertation, we focus on several important problems in structured prediction. 
In structured prediction, the label has a rich intrinsic substructure, and the loss varies with respect to the predicted label and the true label pair.  Structured SVM is an extension of binary SVM\ to adapt to such structured tasks.

In the first part of the dissertation, we study the surrogate losses and its efficient methods. To minimize the empirical risk,  a surrogate loss which upper bounds the loss, is used as a proxy to minimize the actual loss. Since the objective function is written in terms of the surrogate loss,  the choice of the surrogate loss is important, and the performance depends on it. Another issue regarding the surrogate loss is  the efficiency of the argmax label inference for the surrogate loss. Efficient inference is necessary for the optimization since it is often the most time-consuming step. We present a new class of surrogate losses named  {\em bi-criteria surrogate loss}, which is a generalization of the popular surrogate losses.  We first investigate an efficient method for a slack rescaling formulation as a starting point utilizing decomposability of the model. Then, we extend the algorithm to the bi-criteria surrogate loss, which is very efficient and also shows  performance improvements.

In the second part of the dissertation, another important issue of regularization is studied. Specifically,  we investigate  a problem of regularization in hierarchical classification when a structural imbalance exists in the label structure. We present a method to normalize the structure, as well as a new norm, namely shared Frobenius norm. It is suitable for hierarchical classification that adapts to the data in addition to the label structure. 

\end{abstractpage}


\cleardoublepage
\section*{Acknowledgments}
The way to this dissertation has been quite a journey. I sincerely thank my advisor Nathan Srebro for giving me advice from the start and till the end. He gave me challenging questions to develop idea further. Also, I deeply appreciate Ofer Meshi who introduced me to the new exciting problem of surrogate losses and structured prediction, and has helped me through new the ideas. I also thank my other committee members. Greg Shakhnarovicha in TTIC gave me  kindly gave advices and answered my questions when I\ try to apply my ideas to new area of computer vision problems. Yutaka Sasaki in TTI Japan helped me develop my ideas in hierarchical classification.
 Other faculties in TTIC, especially, Kevin Gimpel, 
Karl Stratos, and  Madhur Tulsiani were always encouraging and gave me interesting problems and ideas, and I thank them for the opportunity and help. I thank all the good friends in TTIC and University of Chicago; Lifu Tu, Wooseok Ha, Qingming Tang, Taewhan Kim,  Payman Yadollahpour, Jialei Wang, and Behnam Neyshabur, to  name a few. Lastly, I cannot thank my family, Soo Bong Choi, Yoon Hee Yeom, Hyung Jin Choi, and Woo Hyun Choi, enough since it was the encouragement and support of them which made me overcome difficulties on the way.


%
%
%

\begin{titlepage}
\begin{large}
This doctoral thesis has been examined by a Thesis Committee as follows:

\signature{Professor Nathan Srebro}{Thesis Supervisor \\
   Professor}

\signature{Professor Greg Shakhnarovich}{Member, Thesis Committee \\
   Associate Professor}

\signature{Professor Yutaka Sasaki}{Member, Thesis Committee \\
   Faculty of Engineering}

\signature{Ofer Meshi}{Member, Thesis Committee }

\end{large}
\end{titlepage}

\pagestyle{plain}
\tableofcontents
\newpage
\listoffigures
\newpage
\listoftables

\chapter{Introduction}
   
Departing from the dataset with simple labels, many real-world problems can be formulated as problems that deal with a rich internal label structure. For instance, in a multi-label problem, the labels are sets of dependent micro-labels and in a dependency parsing task,  a label forms a tree of micro-labels. Such problems are called {\em structured prediction}\cite{bakir2007predicting} in machine learning.  This framework is applicable widely to computer vision\cite{forsyth2003modern, nowozin2011structured}, natural language processing\cite{charniak1996statistical,manning1999foundations, daume2006practical}, computational biology\cite{gusfield1997algorithms,durbin1998biological}, and many other fields. 
 \emph{Structured SVM}\cite{tsochantaridis2004support} is a generalization of binary SVM to such structured outputs, and is a prevalent method with much success.

 The complex structure of the label in the structured prediction often involves a {\em cost-sensitive learning}
\cite{bradford1998pruning,elkan2001foundations,margineantu2002class}
 where the loss is a function of a correct label and a predicted label rather than a $0-1$ loss. For  instance, consider a multi-label classification problem.  The cost of predicting to a label with one error micro-label should have less penalty than the label with entirely different labels. 
The cost is called the performance measure or {\em task loss} and there are many performance measures in the literature, for instance, Micro-F1 or Hamming loss, to name a few.

 In structured SVM,  {\em surrogate loss} is used in the objective to optimize the task loss. It eases the optimization by removing the complex non-convex dependence on weight vector $w$. Structured SVM\ uses  the surrogate loss that is written as a sum of the losses for each instance, and the loss for an instance is defined to be the  maximum loss over the labels, i.e., $\sum_i\max_y \tilde{\Phi}(w,y,y_i,x_i)$ where  $\tilde{\Phi}$ is the loss for the instance $x_i$, the correct label   $y_i$, the predicted label $y$, and the learned weight vector  $w$. Surrogate loss is connected to the task loss by the fact that  surrogate loss upper bounds the task loss. Then, minimizing the surrogate loss results minimizing the task loss.
Sometimes when task loss is even hard to upper bound, a surrogate loss that is highly correlated with the task is used. For instance, Hamming loss is used as a surrogate loss for optimizing Macro-F1. 

Surrogate loss introduces a critical issue: the efficiency of loss augmented inference, which amounts to finding the label whose loss attains the maximum given an instance. It is required in the optimization and often the most time-consuming step in the structured prediction due to the exponentially large size of the labels.

 In the first part of the dissertation, we study the surrogate loss for structured prediction. Since  surrogate loss defines the objective function, the choice of the surrogate loss is essential for the good performance. Since the efficiency of the inference is crucial, we study the surrogate loss that its efficiency of inference benefits from decomposability. Decomposability is an important property of the model that its potential and task loss decompose respect to the same substructure. Often if the model is decomposable, loss augmented inference of additive form, which is called margin rescaling, is efficiently solvable. This is widely used in practice for efficient learning. Using this efficient inference, margin rescaling inference, as a tool, we try to enable efficient inferences for other complex surrogate losses. We present many efficient inference methods in addition to the study of the limitations of this approach. Also in the era of deep networks\cite{
chen2015learning,
schwing2015fully,
krizhevsky2012imagenet,
goodfellow2014generative}, we show that our methods can be successfully applied to deep networks.

In the second part of the dissertation, we investigate an issue of regularization in hierarchical classification, which is one important problem in structured prediction. Regularization is an essential issue in hierarchical classification because structured prediction involves many micro-labels and its parameters, and often the structural relationship is expressed only in the regularization. This is particularly true for the hierarchical classification where the micro-labels form a hierarchy, such as a tree. A hierarchical structure is a natural structure of the labels that expresses its granularity by groupings, which is commonly used in many datasets.  We investigate how to normalize the regularization for the unbalanced structure. We also extend it by presenting a new norm to apply for such a problem.

\section{Main Contributions and the overview}
 We briefly give an overview and summarize the main contributions.
\begin{itemize}
\item Chapter \ref{cha:surrogate}: \nameref{cha:surrogate}. 
 
Task loss is the actual performance measure of interest in test time, e.g. the number of wrongly predicted labels in a multi-label problem.  Since the task loss is mostly non-convex and discontinuous, it is difficult to optimize the task loss directly. Surrogate losses, which are commonly convex and continuous, are used as a proxy term in the place of the task loss in the objective function to ease the optimization. 
We first compare the two common surrogate losses that are commonly used, margin rescaling and slack rescaling. While  inference of the margin rescaling formulation is efficient, it requires a large separating margin from the true label. This causes a problem that a well-separated label in the correct side of the decision boundary can be selected as the most violating label even though there exists a label in the wrong side of the boundary. On the other hand, the slack rescaling formulation only requires a unit margin. Thus it is often suggested as a solution to the previously mentioned problem.  Secondly, we focus on the property that the previously mentioned surrogate losses are defined by a simple relationship, an addition or a multiplication, between two factors, the structural loss (a loss that only depends on the labels) and the classification margin (the margin of score given a feature and a  label). We generalize the surrogate losses to {\em bi-criteria surrogate loss} by extending the relationship to quasi-concave functions, includes the common surrogate losses previously mentioned. We present two interesting examples of the bi-criteria surrogate losses, ProbLoss and Micro-F1 surrogate. 

\begin{itemize}
\item Bi-criteria surrogate loss (Definition \ref{def:gbicriteria}): A new class of surrogate loss is presented that models complicated interaction between the structural loss and the classification margin.

\item ProbLoss (Section \ref{sec:probloss}) and Micro-F1 surrogate (Section \ref{sec:microf1}): Two examples for the bi-criteria surrogate loss are given. ProbLoss is a bi-criteria surrogate loss that directly upper bounds the risk under  normal noise assumptions.
Micro-F1 surrogate is 
a new surrogate loss that optimizes Micro-F1 score.

\end{itemize}

 Contributions:
\begin{enumerate}
 \item  We give a novel analysis of the difference between margin rescaling and slack rescaling. 
 \item We present a new group of surrogate
losses termed bi-criteria surrogate loss. It extends the simple relationship of structural loss and margin loss to a quasi-concave function.

 \item We give two examples of the bi-criteria surrogate loss: ProbLoss and Micro-F1 surrogate.
\end{enumerate}
 
\item Chapter \ref{cha:inference}: \nameref{cha:inference}. (From the previous publication \cite{choi2016fast})
 
  In this chapter, we first focus on the problem of the argmax label inference for the slack rescaling surrogate loss. Our approach is based on an oracle called {\em $\lambda$-oracle}. $\lambda$-oracle is similar to  argmax inference of margin rescaling surrogate loss, but by using different  $\lambda$ iteratively, which controls the weight on the structural loss, we are able to obtain different labels. We study the efficient method to obtain the argmax label of slack rescaling inference using the oracle and its limitations. First, we present a binary search method that is applicable to the stochastic gradient descent method removing the constraint of the method first suggested in \cite{sarawagi2008accurate}. Then, we show that by mapping the labels into the 2D plane, we are able to construct a more efficient binary search method. Additionally, we show a negative result that if only $\lambda$-oracle is used exact inference is not possible. To address the  problem, we introduce a new more powerful oracle that has control over the label with added two linear constraints, which we call a constrained $\lambda$-oracle. We present a new method called angular search  that uses the constrained $\lambda$-oracle. Angular search finds the optimal label with the exponential decaying suboptimality.  Finally, we show another negative result that even with the constrained $\lambda$-oracle or more  powerful oracle with unlimited linear constraints, the number of oracle call can be as large as the number of labels. However, the worst case relies on the extremely high precision of location of labels that increases with the iterations, and  we conclude that it is unlikely to happen in practice.
\begin{itemize}
\item Geometrical interpretation (2D Label mapping)
(Section \ref{sec:geometrical_inter})

 We map labels into the 2D plane, which gives new intuitions for $\lambda$-oracle search.

\item Various new inference methods for slack rescaling formulation with $\lambda$-oracle are presented. 
\begin{enumerate}
\item Binary search for SGD (Section \ref{sec:binary_sgd}):
 A new method using binary search that applies to SGD.

\item Bisecting search (Algorithm \ref{alg:bisecting}): New improved binary search method utilizing the 2D geometrical interpretation of labels, and also applicable to SGD.

\item Angular search (Algorithm \ref{alg:angular}):  New method that utilizes constrained $\lambda$-oracle, which is more powerful than $\lambda$-oracle, to find the exact optimum. \end{enumerate}

\item Limitation of $\lambda$-oracle (Theorem \ref{loracle_l1}):
 In the worst case, $\lambda$-oracle cannot return the optimal label even with the unlimited calls to the oracle. 

\item Lower bound of the number of the constrained $\lambda$-oracle calls to find the optimal label (Theorem \ref{loracle_l2}): In the worst case, the number of the constrained $\lambda$-oracle calls cannot be less than the number of labels. 

\end{itemize}

Contributions:
\begin{enumerate}

 \item  We give a new analysis of the $\lambda$-oracle  by introducing $2$ dimensional mapping of labels. It shows that each call to the $\lambda$-oracle  shrinks the candidate space of the optimal label.
 \item   We present new binary search methods for slack rescaling formulation that are applicable to SGD.
 \item  We present a new method, termed angular search, that finds the exact optimal label in slack rescaling formulation with the constrained $\lambda$-oracle, a $\lambda$-oracle with two additional linear constraints.
 \item We give  analyses of theoretical limitations of the method using $\lambda$-oracle: In the worst case, the optimal label in slack rescaling formulation cannot be returned by $\lambda$-oracle. Also in the worst case, any method that uses the constrained $\lambda$-oracle needs to call the oracle $|\mathcal{Y}|$ times to find the optimal label in slack rescaling formulation where $|\mathcal{Y}|$ is the size of the label set.

 \end{enumerate}

\item Chapter \ref{cha:inference2}: \nameref{cha:inference2}. 

  In this chapter, we focus on the efficient inference method for bi-criteria surrogate loss using $\lambda$-oracle, which we call  {\em convex hull search}. We show that for the bi-criteria surrogate loss, there exists a linear lower bound
of the candidate space of the optimal label in the 2D plane, and the $\lambda$ corresponding to the linear lower bounds gives the optimal $\lambda$ for the method. It is used to find the optimal label achievable using only $\lambda$-oracle. We obtain the maximum label in the fractional label space which is on the one of the neighboring edges.  Then, the integral label is obtained using other  oracles more often available than the constrained $\lambda$-oracle, such as a k-best oracle or a ban-list oracle. We show the optimality result in an adversarial game between two players that the binary search more often fails to find the optimum but our search finds the optimum in a few iterations. In practice, our convex hull search is surprisingly efficient that finds the integral solution in the average of $2$ oracle calls in name entity recognition tasks. We show empirical improvement in multi-label problem, and name entity recognition problem. Finally, in a dependency parsing problem, we show that our approach is also applicable to the deep network. 
\begin{itemize}
\item Convex Hull Search (Algorithm \ref{alg:convexhull_search}): An efficient method for bi-criteria surrogate loss using $\lambda$-oracle. The number of the call is upper bounded by the number of vertices of the convex hull. It uses the optimal $\lambda$ at each iteration.

\item We empirically show the improvement over baseline method in the multi-label problem, name entity recognition problem, and dependency parsing problem.

\end{itemize}

Contributions:
\begin{enumerate}
 \item  We present  a new inference method, termed convex hull search, for bi-criteria surrogate loss. It finds the optimal label in the  fractional label space, which is a convex space of the labels in the $2$D plane.
 \item  We analyze the optimality of choice of the $\lambda$ used in convex hull search in an adversarial setting.

 \item  Empirical evaluations of the convex hull search and the bi-criteria loss are given in the multi-label problem, name entity recognition problem, and dependency parsing problem.
\end{enumerate}

\item Chapter \ref{cha:regularization}: \nameref{cha:regularization}. (From the previous publication \cite{choi2016fast})

 Hierarchical classification is a problem where given a tree or a direct acyclic graph, classifying the instances to the leaves of the graph. We first introduce {\em hierarchical structured SVM}. In hierarchical structured SVM, a linear classifier is assigned to each node with a weight vector, and its  score, which we refer to  potential, is given as the innerproduct between the weight vector and the instance vector. Then, the potential for a label (or  leaves if we consider a multi-label problem) as the sum of the potentials of corresponding the leaf and that of its ancestors. Regularizer is the sum of the $\ell_2$-norm of the weight vectors. We show that the model imposes a bias toward the deeper labels in the hierarchy. That is, $1$ norm ball has a larger classifier for the deeper leaves than the shallow leaves. To address this problem, we propose {\em Normalized Hierarchical SVM} to remove such bias only considering the structure of the graph. We show that it also removes the redundant nodes in the graph.  We extend the formulation to learn the optimal normalization from the data. The extension is presented as a new norm, {\em Shared Frobenius Norm}.  Efficient convex optimization methods are presented. Finally, we show that our normalization is necessary for achieving high performance.

\begin{itemize}

\item Normalized Hierarchical SVM (NHSVM) (Section \ref{sec:NHSVM})
 : Hierarchical SVM\ with normalized regularization respect to the unbalance in the graphical structural of the label.
\item Shared  SVM (SSVM)
(Section \ref{sec:SharedSVM})
: Hierarchical SVM with a data-dependent normalization of regularization, a shared Frobenius Norm.

\item Theorem \ref{thm:invarinace} (Section \ref{sec:invar}): We show that Normalized Hierarchical SVM is invariant to the duplication of the labels.

\item Shared Frobenius Norm(Section \ref{sec:SharedSVM}):
A new norm that is suitable for unbalanced hierarchical structure.

\item 
 We empirically evaluate our method with the improvement in the synthetic datasets and large-scale datasets.

\end{itemize}

Contributions:
\begin{enumerate}
 \item  We identify the problem of regularization in hierarchical structured SVM when there exists an imbalance in the number of ancestors of leaves. \item  We present Normalized Hierarchical SVM that removes the imbalance  considering the label structure.

 \item  A new norm, called shared Frobenius norm, is presented as an alternative to the Frobenius norm in  hierarchical classification. 

 \item  An efficient optimization method is presented.

 \item  We empirically evaluate our method with a large hierarchical dataset to show that our normalization is essential for high performance.
\end{enumerate}

\end{itemize}

\chapter{Preliminary}\label{cha:preliminary}

In this chapter, we briefly clarify our problem of focused, structured prediction. Then, we introduce the basic statistical learning frameworks in this dissertation. Also, we review Structured SVM \cite{tsochantaridis2004support,Taskar03}, which is a popular model to deal with structured output space.  It also serves as our principle approach to the structured prediction.   

\section{Classification}
Classification is one of the most common problems in machine learning. 
 In the usual supervised setting, the goal is to learn a function that maps a description of a data $x\in\mathcal{X}$, usually $\mathcal{X}\subseteq \mathbb{R}^d$, to a label $y\in\mathcal{Y}$ as correctly as possible given dataset $\mathcal{D}=\{x_i\in\mathcal{X},y_i\in\mathcal{Y}\}^N$. One way to distinguish the problems of classification is the size of the label set. 
\begin{enumerate}
\item Binary Classification: The size of the label set is $2$, i.e., $|\mathcal{Y}|=2$, $\mathcal{Y}=\{-1,1\}$. Examples: Is the email spam or not spam? Will the stock price go up or down?
\item Multi-class classification: The size of the label set is $M>2$, i.e., $|\mathcal{Y}|=M$, $\mathcal{Y}=[M]$. Examples: Which digit does this image belong to? Which one of the categories does this document fall into?

\item Structured Prediction: The size of the label set is $M^m$, i.e., $|\mathcal{Y}|=M^m$, $\mathcal{Y}=[M]^m$. The size of the label is exponentially large. Examples: Which tree does this sentence parse into? Which multiple choice of the categories does this document fall into?
\end{enumerate}

 In this dissertation, we are interested in the last category, the structured prediction.

\section{Structured Prediction}

 As in the last section, the structured prediction problem is noted by its exponential label size $|\mathcal{Y}|$. The exponential size commonly comes from its decomposability into micro-labels. That is,  $y$ is a vector of length $m$, and each element is termed micro-labels taking one of $M$ values, i.e.,  $y\in\mathcal{Y}= [M]^m$, $y^{(k)}\in[M]$ where the superscript corresponds to the index of the elements in the vector. Therefore, the structured prediction tries to learn $m$ labels at once. If there is no dependence between the micro-labels, we can change the problem into learning $m$ micro-labels independently, and the problem becomes multi-class classification problem with $m$ independent classifiers. The premise
of structured prediction is that the micro-labels are dependent and it is beneficial to learn the label concurrently by modeling the dependence. 
In such problems, incorporating the structure is important for achieving good performance. Structured Prediction is used widely in computer vision\cite{forsyth2003modern, nowozin2011structured}, natural language processing\cite{charniak1996statistical,manning1999foundations, daume2006practical}, computational biology\cite{gusfield1997algorithms,durbin1998biological}, and many other fields.  Following are examples of the structured prediction problems and the datasets used in the dissertation.

\begin{enumerate}
\item Multi-label Problem

In a multi-label problem, an instance can belong to $m$ multiple labels. This can be formulated in to $m$ independent learning problem. However, in many cases, each labels are dependent on other labels, and gives an important information, e.g., $P(y_1=1|y_2=1,x)\ne P(y_1=1|x)$. In this dissertation,  we only focus on a fully pair-wise potential model\cite{finley2008training}. 
 \begin{table}[H]
\begin{center}
\begin{tabular}{|c|c|}
\hline 
Dataset &  Description   \\
\hline
Yeast & Predict cellular 
          localization sites for bacteria\cite{elisseeff2002kernel}\\ 
RCV1          & News document classification\cite{lewis2004rcv1} \\
Mediamill & Predict labels from extracted features from the video\cite{snoek2006challenge}       \\
Bibtex & Predict tags from the document\cite{katakis2008multilabel}\\

\hline
\end{tabular}
\caption{Multi-label  Dataset.}
\end{center}
\end{table}

\item Sequence Prediction

 In a sequence prediction problem, labels form a linear chain. If we limit the dependence to neighboring nodes, the labels become  Markov random fields. In the named-entity recognition (NER) task, given text data, the problem is to identify entity types; person, location, organization, and miscellaneous.
In this dissertation,  we consider a model with a pair-wise potential of neighboring nodes in the chain\cite{gimpel2010softmax}. 

\begin{table}[H]
\begin{center}
\begin{tabular}{|c|c|}
\hline 
Dataset &  Description   \\
\hline
CoNLL 2003 shared task (NER) data & A subset of
Reuters 1996 news corpus\cite{ratinov2009design}\\
\hline
\end{tabular}
\caption{Sequence Prediction Dataset.}
\end{center}
\end{table}

\item Tree Structure Prediction
 
 Each label forms a tree. In dependency parsing tasks, we assign dependency on the words in a sentence
as a form of a tree. Each arc is labeled as a part of speech (POS) tag\cite{kiperwasser2016simple}. See Figure \ref{fig:tree_structure}\footnote{Natural language processing with Python, O'Reilly, 2009.}.
\begin{figure}[H]
\begin{center}
\includegraphics[width=.5\linewidth]{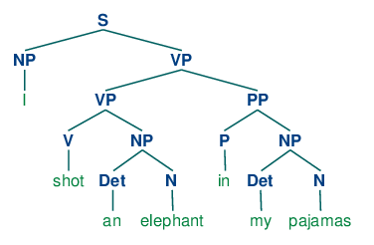}
\caption{Tree Structure in Dependency Parsing Task}
\label{fig:tree_structure}
\end{center}
\end{figure}

\begin{table}[H]
\begin{center}
\begin{tabular}{|c|c|}
\hline 
Dataset &  Description   \\
\hline
Penn Treebank 
 & Selected stories from a three year Wall Street Journal collection \cite{marcus1993building} \\
\hline
\end{tabular}
\caption{Tree Structured Dataset.}
\end{center}
\end{table}

\end{enumerate}

\section{Statistical Learning Framework
}\label{sec:framework}
In this section, we first briefly  review the statistical frameworks based on \cite{friedman2001elements,
mohri2012foundations,
NIPS2016_6485}. 

 Let $\mathcal{X}$ be the domain of instances, $\mathcal{Y}$ be the set of the output label space.  Let $\mathcal{D}$ be the unknown true distribution over $\mathcal{X}\times\mathcal{Y}$. In this dissertation, we are interested in structured prediction, where $\mathcal{Y}$ is a combinatorial space. For instance, for the sequence prediction of length $m$, $|\mathcal{Y}|=M^m$ where $M$ is the size of micro-label. 
We consider a learning framework with a linear potential function $f_w(x,y)=\phi(x,y)^T\cdot w$ for $x\in\mathcal{X}$ and $y\in\mathcal{Y}$ which is parameterized by $w\in \mathcal{H}$. Let $\mathcal{H}=\{w\in \mathbb{R}^d|\;\|w\|_2\le\rho\}$  be the set of hypothesis.  
  Inference function is defined as $\hat{y}_w(x)=\argmax_y f(x,y)$.  Let $L(y,y')$ be the task loss, the specific loss or the performance measure of interest. It defines the loss predicting an instance with true label $y'$ as label $y$. We assume $L(y,y')> 0$  if $\forall y\ne y'$, and $L(y,y')=0 \iff y=y'$.
 Then , the objective is to minimize expected task loss  over the unknown population $\mathcal{D}$,
\begin{align*}
 R(f)=\expect_{(x,y)\sim \mathcal{D}} L(\hat{y}_w(x),y)
\end{align*}
Since the true population is unknown, we take structured risk minimization approach. It minimizes the empirical loss with the surrogate loss $\tilde{L}(w,x_i,y_i)$ that upper bounds $L(\hat{y}_w(x_{i}),y_{i})$ over the identically and independently distributed sample $S=\{(x_i,y_i)\}_{i=1}^{N}$.  Then, with probability of $1-\delta$ , the generalization bound   
\begin{align}
 R(f)\le   \frac{1}{N}\sum_{i=1}^N \tilde{L}(w,x_i,y_i)+\Lambda(\tilde{L},\mathcal{H})+\epsilon(N,\delta) \label{eq:generalization_bound}
\end{align}
holds which upper bounds the  risk where  $\tilde{L}:\mathcal{H}\times\mathcal{X}\times\mathcal{Y}\mapsto \mathbb{R}_+$ is the surrogate loss. The generalization bound has  additional terms: $\Lambda$ that upper bounds maximum difference of two losses given the hypothesis size and the surrogate loss (which is represented by Rademacher complexity), and a probability term $\epsilon$ that is a function of $N$ and probability  $\delta$\cite{NIPS2016_6485}.

To obtain minimum of RHS of \eqref{eq:generalization_bound}, regularization is added with constant $C\ge 0$ to control $\Lambda(\hat{L},w)$, then objective function becomes,   
\begin{align}
 \frac{C}{2}\|w\|^{2}_F+\frac{1}{N}\sum_{i=1}^N \tilde{L}(w,x_i,y_i)
 \label{eq:objective_function_general}
\end{align}
We only focus on the following form of a surrogate loss,
  \begin{align} 
\tilde{L}(w,x_i,y_i)=\max_y\tilde{\Phi}(w,y,y_{i},x_{i})
\label{eq:max_surrogate}
\end{align}
Then, the objective function is
  \begin{align} 
 \frac{C}{2}\|w\|^2_2
        +\frac{1}{N}\!\sum_i\max_{y\in\mathcal{Y}} \tilde{\Phi}(w,y,y_{i},x_{i})
        \label{eq:objective_function_struct_svm}
\end{align}
The optimization methods  can be divided into two categories: the first order method\cite{RatliffSubgradient,shalev2011pegasos,shalev2013accelerated, lacoste2013block} or the cutting plane method\cite{tsochantaridis2004support,joachims2009cutting}. In both optimization methods, the most important step is {\em loss augmented inference}, which amounts to finding the argmax label in \eqref{eq:max_surrogate} which attains the loss.
While our approach can be used in both approaches, we focus on using the stochastic first order method which is more suitable for the large-scale problem with a better convergence guarantee compared to cutting plane methods.

\section{Structured SVM}

  
 We give an overview of our primary approach, Structured SVM. This section is mainly from 
\cite{choi2016fast}. Structured SVM is an elegant max-margin approach which uses a structured hinge loss surrogate \cite{tsochantaridis2004support,Taskar03}.

Structured SVM in constrained from is as follow:
 \begin{align}\label{obj_structured_svm_unconstrained}
        &\min_{w,\xi} \dfrac{C}{2}\|w\|^2_2 +\dfrac{1}{n}\sum_i\xi_i \\
    \text{s.t.} \;\;& \xi_{i}\ge \tilde{\Phi}(w,y,y_{i},x_{i}) &\forall i,y\ne y_i \nonumber\\
     & \xi_i\ge 0 &\forall i\nonumber
 \end{align}
 where $C$ is the regularization constant. 

Two popular surrogates are margin and slack rescaling.
\begin{align*}
\text{Margin Rescaling : } & \tilde{\Phi}(w,y,y_{i},x_{i})=L(y,y_i)+f(x_i,y)-f(x_i,y_i)\\
\text{Slack Rescaling : } & \tilde{\Phi}(w,y,y_{i},x_{i})=L(y,y_i)(1+f(x_i,y)-f(x_i,y_i))
\end{align*}
We dropped the dependence on $w$ when it is clear from the context. In a constrained optimization form, the margin rescaling training  objective is given by:
 \begin{align}\label{obj_margin}
        &\min_{w,\xi} \dfrac{C}{2}\|w\|^2_2 +\dfrac{1}{n}\sum_i\xi_i \\
    \text{s.t.} \;\;& f(x_i,y_i)-f(x_i,y)\ge L(y,y_i)-\xi_i &\forall i,y\ne y_i \nonumber\\
     & \xi_i\ge 0 &\forall i\nonumber
 \end{align}
 
 Similarly, the slack rescaling formulation scales the slack variables by the error term:
 \begin{align}\label{obj_slack}
        &\min_{w,\xi}\ \dfrac{C}{2}\|w\|^2_2 +\dfrac{1}{n}\sum_i\xi_i \\
    \text{s.t.} \;\;& f(x_i,y_i)-f(x_i,y)\ge 1- \dfrac{\xi_i}{L(y,y_i)} &\forall i,y\ne y_i \nonumber\\
     & \xi_i\ge 0 &\forall i \nonumber
 \end{align}

Intuitively, both formulations seek to find a $w$ which assigns high scores to the ground-truth compared to the other possible labelings. When $y$ is very different than the true $y_i$ ($L$ is large) then the difference in scores should also be larger.
There is, however, an important difference between the two forms. In margin rescaling, a high loss can occur for labelings with the high error even though they are already classified correctly with a margin. This may divert training from the interesting labelings where the classifier errors, especially when $L$ can take large values, as in the common case of hamming error. In contrast, in slack rescaling labelings that are classified with a margin incur no loss.
Another difference between the two formulations is that the slack rescaling loss is invariant to scaling of the error term, while in margin rescaling such scaling changes the meaning of the features $\phi$.

In many cases, it is easier to optimize an unconstrained problem:
{ \begin{align} 
\label{unconstrained_obj_margin}
        &\min_{w} \tfrac{C}{2}\|w\|^2_2
        +\!\tfrac{1}{n}\!\sum_i\max_{y\in\mathcal{Y}} \left( L(y,y_i)+f(x_i,y)-f(x_i,y_i)\right) \\
\label{unconstrained_obj_slack}
        &\min_{w} \tfrac{C}{2}\|w\|^2_2 +\!\tfrac{1}{n}\!\sum_i
        \max_{y\in\mathcal{Y}} L(y,y_i)\left(1+f(x_i,y)-f(x_i,y_i)\right) 
\end{align}}

Most of the existing training algorithms for structured SVM require solving the maximization-over-labellings problems for fixed $i$ and $w$ in \eqref{unconstrained_obj_slack} and \eqref{unconstrained_obj_margin}:
\begin{align*}
\begin{array}{c}
\text{Margin}\\
\text{rescaling}
\end{array}:&~\argmax_{y\in\mathcal{Y}} L(y,y_i)+f(x_i,y)-f(x_i,y_i) \\
\begin{array}{c}
\text{Slack}\\
\text{rescaling}
\end{array}:&~\argmax_{y\in\mathcal{Y}} L(y,y_i)\left (1+f(x_i,y)-f(x_i,y_i)\right )
\end{align*}

 As we mentioned previously, for the optimization,  \eqref{eq:margin_argmax} and \eqref{eq:slack_argmax} require a search over the exponential label space $\mathcal{Y}$, which is termed the loss augmented inference. It is often very computationally intensive and time-consuming. Therefore, efficiency is essential, which is an important topic of the dissertation. In the later chapter, we first start with comparing two surrogate losses which serve as an intuition for possible improvement.

\section{Notation on Surrogate Losses}\label{sec:surrogate_loss_notation}

In this section, we want to clarify the notations introduced for the surrogate losses under different context where dropping the irrelevant dependencies   is helpful for the clarity.    

   \begin{align*} 
\tilde{L}(w,x_i,y_i)=\max_y\tilde{\Phi}(w,y,y_{i},x_{i})=\max_y \Phi(y)=\max_y\psi(g(y),h(y))
\end{align*}
 $\tilde{L}(w,x_i,y_i)$ is used to denote the general surrogate loss in the literature  in the objective function \eqref{eq:objective_function_general}
 and the generalization bound \eqref{eq:generalization_bound}. $\max_y\tilde{\Phi}(w,y,y_{i},x_{i})$ refers to the form of surrogate losses used in the structured SVM, which is the only surrogate loss of interest in this dissertation. 

However, when discussing solving above max over $y$, which is called loss augmented inference,  we consider $x_i, y_i,$ and $w$ fixed and  use notation $\Phi$ for  the function only depends on $y$. 

Later on, we show that it is useful to consider the surrogate loss as a bivariate function $\psi$ taking two factor functions $h(y)$ and $g(y)$. For instance, for margin rescaling, $\psi(a,b)=a+b$, $h(y)=m(y)$,  and $g(y)=L(y,y_i)$, and for slack rescaling $\psi(a,b)=a(1+b$), $h(y)=m(y)$,  and $g(y)=L(y,y_i)$.

\title{Efficient Structured Learning with Bi-criterian Surrogate Loss}
\author{%
  Heejin Choi\footremember{alley}{TTIC, heejincs@ttic.edu}%
  \and  Nathan Srebro\footremember{trailer}{TTIC, nati@ttic.edu}%
  }

\date{}
\maketitle

\begin{abstract}
 In structured prediction, structural loss, the cost of misclassification,  varies respect to the label wrongly assigned. This makes the loss in max-margin framework bivariate function of 
structural loss and margin. The bivariate function is called  the surrogate loss, and it expresses the loss or a preference of a label. Since the surrogate loss defines the  objective function, the choice of surrogate loss is important. However, the choice of the the surrogate loss is limited to be mostly a simple addition of the two since only the additive surrogate loss benefits from the decomposability, and makes loss augmented inference efficient. In this paper, we aim to loosen this restriction by providing an efficient practical inference method for a wide classes of the bivariate surrogate loss. Specifically, the inference method is applicable for  any quasi-concave function, which  is considered a natural property of a function that expresses the preference, given inference method for an additive surrogate loss, .   
This enables optimizing a direct upper bound of the task loss and a novel quasi-convex probabilistic loss function that expresses complicated relationship between structural loss and  margin. 
We show that the inference method is exact with relaxation and the algorithm is optimal in runtime within the framework. We empirically validate our method showing the performance improvement in real-world problem using bi-criterian surrogate losses. 

 a surrogate loss to direct upper bound  Empirically, we show that this approach of directly optimizing the micro-f1 can be deployed to existing complex systems easily providing improvement over simple surrogate losses. As for the efficiency,  our result includes surprising result that slack rescaling inference (which is considered much harder problem) is done almost as same runtime as margin rescaling. Specifically,  only 1.3 to 1.0048 calls to the margin rescaling oracle is neeed to the slack rescaling inference in average.
Margin rescaling (or additive loss) has been extensively used in structured prediction due to its computational efficiency which is neccesity due deal with the combinatorially large output space despite its problem that even though the label is well separated, label with a large label cost induces error in the objective function.  Slack rescaling (or multiplicated loss) has gained a focus as an alternative solution, but its practical usage has been prohibited due to the lack of an efficient and practical algorithm  and  the guarantee of the performance benefit. In this paper, we address both problems. We establish a tighter generalization bound by presenting a  new loss generalizing the two.   While the loss is much more complicated,  we present an exact and efficient method for the inference in convex relaxed setting. This method can be applied to any problem where margin rescaling inference is applicable, such as problems using graph cut. Also,  the optimality of a candidate label can be tested with one oracle call, which makes the method very efficient   in settings such as iterative optimization where the optimal label changes smoothly. We verify the improvement empirically in various interesting problems. Its practicability is  highlighted by  the average number of oracle calls which is less than  2 in some cases that the method that the run time is close to that of margin rescaling.   

\end{abstract}
\end{comment}

In Structured SVM\cite{tsochantaridis2004support}, for each label, there is a margin  which describes how far the label is from the decision boundary. And the loss for each instance is given as the maximum value of the bivariate function of the structural loss and the margin over all the labels. The bivariate function is called  surrogate loss. Surrogate loss defines the objective function, and it is important for the performance.
We first visit two common surrogate losses in the literature: margin rescaling and slack rescaling.
We review pros and cons of the approach.

Another critical issue of the surrogate loss is that the loss augmented inference, the argmax operation over the label inside of the surrogate loss, requires to be efficient. Often it is the most challenging step in the optimization since the label space is often   exponentially large in the size of the labels in structured prediction. This restricts complexity of the surrogate loss. Therefore, the surrogate loss need be complicated to guarantee a low test loss, but at the same time, it needs to be simple enough that the argmax operation is efficient. We visit this issue with respect to the two surrogate losses.

To further investigate  the surrogate losses, we observe that the two common surrogate losses can be viewed as a bi-criteria function of margin loss and structural loss. For instance, the margin rescaling and slack rescaling are connected to the simplest bi-criteria: an arithmetic mean with an addition and a geometric mean with a multiplication. The bi-criteria function (or negative utility function) is often studied in economics, and  quasi-concavity is considered to be a natural property of negative utility function. We accept this view, and present a new group of surrogate loss called {\em bi-criteria surrogate loss}. 
  We propose two novel surrogate losses as examples: ProbLoss that directly models the risk under the normal noise assumption and Micro-F1 surrogate that directly upper bounds Micro-F1.

\section{Two Structured Surrogate Losses}
As in \eqref{eq:objective_function_general}, surrogate loss  $\tilde{L}(w,x_i,y_i)$ is used as a proxy to ease the optimization of task loss $L$
, the loss or the performance measure of interest. Two common convex surrogate losses in the literature\cite{tsochantaridis2004support,Taskar03,NIPS2016_6485} are margin rescaling (MR) and slack rescaling (SR). Both are tight upper bound of the task loss.

 As discussed in section \ref{sec:surrogate_loss_notation}, for the notational convenience, and to better emphasize the difference between margin and slack rescaling,
we focus on a single training instance $i$ and drop the dependence on $i$ and $w$, and  analyze the surrogate losses with respect to  function $\Phi$. 
   \begin{align*} 
\tilde{L}(w,x_i,y_i)=\max_y \Phi(y)
\end{align*}

Then, the two common surrogate losses $\tilde{L}(w,x_i,y_i)$ are,
\begin{align}
\text{Margin rescaling}:&& 
&   
\max_y  \Phi^{MR}(y)=\max_{y}
L(y,y_i)+m(y)
\label{eq:mr_loss}
\\
\text{Slack rescaling}:&&
&   
\max_y \Phi^{SR}(y)=\max_{y}L(y,y_i)(m(y)+1)
\label{eq:sr_loss}
\end{align} 
where $m(y)=f_{w}(x_{i},y)-f_{w}(x_i,y_i)$ is the margin error.

 Slack rescaling (or multiplicative loss) \eqref{eq:sr_loss}  first appeared in \cite{tsochantaridis2004support} as an alternative to margin rescaling (or additive loss) \eqref{eq:mr_loss}. 
First define the functions:
$h(y)=m(y)$ and $g(y)=L(y,y_i)$. With these definitions, we
see that the maximization \eqref{eq:mr_loss} for margin
rescaling is $\max_{y\in\mathcal{Y}} h(y)+g(y)$, while the
maximization \eqref{eq:sr_loss} for slack rescaling is
$\max_{y\in\mathcal{Y}} g(y)(h(y)+1)$. Immediately, we can see that margin rescaling is the maximum over sum of two scores, and  slack rescaling is the multiplication of the two. We analyze this with two aspects: Required margin for zero error and sensitivity to the outlier.
\begin{align*}
 \mbox{Required margin for zero error }: M^{req}(y)= & \mbox{ min } |m(y)| \mbox{ s.t } \Phi(y)=0 \\
 \mbox{Sensitivity to the outlier }: \delta^{outlier}(y)= &\left. \dfrac{\partial \Phi(y)}{\partial m(y)} \right \rvert_{m(y)>0}    \end{align*}

  \begin{table}[h]
  \begin{center}
    \begin{tabular}{|l|c|c|}
    \hline
     & $M^{req}(y)$ & $\delta^{outlier}(y)$ \\
    \hline

Margin Rescaling &   $L(y,y_{i})$ & 1 \\
Slack Rescaling &   $1$ & $L(y,y_{i})$\\    
    \hline

    \end{tabular}\\
  \end{center}
\caption{Difference between margin rescaling and slack rescaling.  
} \label{table:difference_mr_sr} 
\end{table}
 The differences are summarized in Table \ref{table:difference_mr_sr}. The problem of margin rescaling is that it requires a large margin  even if the label is well separated. For a problem with a large task loss, this could be a problem. 

For instance, consider two labels $y_1$, $y_2$ with $m(y_1)=-10,$  $m(y_2)=1$, $L(y_1,y_i)=100$,  $L(y_2,y_i)=2$. Then, $\Phi^{MR}(y_1)=90>\Phi^{MR}(y_2)=3$. The loss is defined by $y_1$ which is on the correct side of the classification boundary. However, $y_2$ is on the wrong side of the boundary.
 
 The problem of the above example is that the additive property does not take the classification boundary into the account, the sign of the margin. That is, the overall loss can be significant even if the margin is negative.
 
 One way to resolve this issue is to use the slack rescaling surrogate loss \eqref{eq:sr_loss}. In the previous example, it is easy to  see that $\Phi^{SR}(y_1)=-900<\Phi^{SR}(y_2)=4$.  In slack rescaling formulation, if the margin error is smaller than $-1$, it indicates that the label is well-separated by the unit margin, and the potential is always negative. 
 
 However, the con of the slack rescaling is that the sensitivity of the outlier  $\delta^{outlier}(y)$ is large. This can be a problem for inseparable problems. If there exists an outlier with large  $\delta^{outlier}(y)$, the error is multiplied by $\delta^{outlier}(y)$ in the objective function. This also can be a problem with a large structured loss. For instance, if the loss is as large as $100$, its error will be multiplied by $100$. Furthermore, if there is an outlier or instances that cannot fit into the model, the phenomenon will be much intensified.

\section{Decomposability and Hardness of Inference }\label{sec:decomposability}
We will describe the other important difference between the two surrogate loss: hardness of the inference. We will also point out the significant merit of the margin rescaling that its additive property preserves the substructure, and argmax becomes one operation over the structure. This contrasts to the  slack rescaling inference, which involves interaction between two 
different scores over the labels, thus its inference is much complicated.

Due to the combinatorial size of the label space $\mathcal{Y}$,  often the loss augmented inference is hard and the most time consuming step, and the optimization requires the efficient method for the loss augmented inference.   Decomposability into substructure is one of the common properties that enables the efficient inference. This is when both surrogate task loss $L(y)$ and $m(y)$ decompose over the same substructure (or micro-labels) $S_k\subseteq[m]$, $k\in[K]$, $\bigcup_{k=1}^KS_k=[m]$  that $L(y,y_{i})=\sum_{k}L_{S_k}(y_{S_k})$ and $m(y)=\sum_{k}m_{S_k}(y_{S_k})$. Then, the margin rescaling potential function in \eqref{eq:mr_loss}  also decomposes as a sum over the substructure, $\Phi^{MR}(y)=\sum_{k}L_{S_k}(y_{S_k})+ m_{S_k}(y_{S_k})$.   This substructure can be utilized for efficient inference with many algorithms such as dynamic programming, linear programming, graph-cut, and etc. This results one argmax operation over the label set $\mathcal{Y}$ of function $m'_{S_k}(y_{S_k})=L_{S_k}(y_{S_k})+m_{S_k}(y_{S_k})$. However, the decomposability is maintained only for the margin rescaling. In slack rescaling \eqref{eq:sr_loss}, the potential function does not decompose over the substructures. That is, the interaction between the  task loss and the margin is global, i.e. argmax is attained when both losses are high. 

This also accounts for the popularity of the margin rescaling. In practice, optimizing with respect to the margin rescaling formulation is commonly considered  as the only option and slack rescaling formulation is not considered possible.

To sum up,  when the loss $L$ decompose with respect to the substructures or micro-labels, the inference of the margin rescaling also benefits from the decomposability. The problem of margin rescaling is that to have a zero error, the negative margin has to grow linearly respect to the error, which can be too large.  On the other side, in slack rescaling, the labels only need to be separated by the unit margin or $1$. However, the difficulty of slack rescaling lies in its inferences that multiplicative nature disallows decomposition.

\section{Bi-criteria Surrogate Loss } \label{sec:quasi_convex_surrogate}

Thus far,  we encountered the several problems. To maintain the efficiency of decomposability, 1) Task loss has to decompose respect to the substructures. This prohibits directly optimizing the task loss  that does not decompose. In such a case, rather than optimizing the task loss, a different but highly correlated decomposable loss is used in the objective function. For instance, Hamming loss is used instead of Micro-F1. 2) The choice of surrogate loss is limited to margin rescaling.

 Now we remove both restrictions. Specifically,  we introduce a new class of surrogate losses which later we show that it can be efficiently inferred. For instance, our algorithm extends to efficient inference for  micro-F1 and slack rescaling.


First, we note that both surrogate losses discussed  can be written as one form  using a different bi-criteria function  $\psi:\mathbb{R}\times \mathbb{R}\mapsto \mathbb{R}$ which represents overall loss given two losses, i.e., 
\begin{align*}
\Phi(y)=\psi(m(y),L(y,y_{i}))
\end{align*}
For  margin rescaling, $\psi(a,b)=a+b$, and for slack rescaling, $\psi(a,b)=(a+1)\cdot  b$.

We generalize the surrogate loss to other function $\psi$. Specifically, we consider the surrogate loss of the following form.  
\begin{align*}
\tilde{L}(w,x_i,y_i)=\max_y \Phi(y)=\max_y\psi(h(y),g(y))
\end{align*}
The bi-criteria function takes two losses as an input, for instance, a margin loss and a structural loss, and outputs overall loss.  For instance, for  margin rescaling, $\psi$ is an addition, and for  slack rescaling, $\psi$ is a multiplication of two losses.  These are connected to the simplest bi-criteria, the arithmetic mean and the geometric mean. 
  This bi-criteria function expresses the negative preference of the combination of two variables. In economics, such functions that express the preference are extensively studied, and they are called the utility functions. One central assumption of the utility function is quasi-convexity. Therefore, we adopt this view of the utility function and consider quasi-concavity functions as a natural class of bi-criteria function of surrogate losses since bi-criteria is the negative utility.  On the other hand, in the later chapter, we show that quasi-concavity is an essential property  for the efficiency. 

We formally define the surrogate loss function of interest.

\begin{definition}[Bi-criteria Surrogate Loss] \label{def:gbicriteria}
    Consider a surrogate loss $\tilde{L}_i(w,x_i,y_i)$ with a following form,
\begin{align*}
\tilde{L}_{i}(w,x_i,y_i)=\max_{y\in\mathcal{Y}} \Phi_{i}(y)
\end{align*}
where $\Phi_i:\mathcal{Y}\mapsto \mathbb{R}$ is a potential function. 

Then,  
$\tilde{L}_{i}(w,x_i,y_i)$ is a   bi-criteria surrogate loss if following holds true for a bi-criteria functions $\psi_i:\mathbb{R}\times \mathbb{R}_\mapsto \mathbb{R}$,  two loss factor functions $h_i:\mathcal{Y}\mapsto \mathbb{R}$ and $g_i:\mathcal{Y}\mapsto \mathbb{R}$, 
 
\begin{enumerate}
\item
 $\Phi_{i}(y)= \psi_i(h_i(y), g_i(y))$ 
\item In domain $K_0$.
\begin{enumerate}
\item 
 $\psi_{i}$ is a differentiable function.
\item   $\psi_i$ is a quasi-concave function, i.e., $\forall\beta\ge0,  K_\beta$ is convex. 

\item  
   $\psi_i$ is a monotonically  increasing function for each arguments, and $\forall( a, b)\in K_0,\forall \epsilon>0,$  $\psi_{i}(a+\epsilon,b+\epsilon)>\psi_{i}(a,b)$.
\end{enumerate}
\item Function $h_i$ and $g_i$ decomposes into substructures that $\forall \lambda\ge0$, $\argmax_y \lambda g_i(y)+h_i(y)$ is efficiently solvable. 
 \end{enumerate}
where    $K_\alpha=\{(a,b)\mid a\in\mathbb{R}, b\in\mathbb{R}, \psi_i(a,b)\ge \alpha\}$ is $\alpha$ super level
set. 
\end{definition}

 Note that bi-criteria surrogate losses extend the bi-criteria function $\psi$ to a wide class of functions which was restricted to only addition or multiplication.  
Last property in the definition \ref{def:gbicriteria}  limits the choice of factor function $h_i$ and $g_i$ that  argmax operation of the  $h_i(y)+\lambda g_i(y)$  is efficient. This is very similar to the margin rescaling inference, $\argmax_{y\in\mathcal{Y}}  m(y)+ L(y,y_{i})$, with the choice of $h_i(y)=m(y)$ and $g_i(y)=L(y,y_{i})$,  but has an additional weighting term $\lambda$. $\argmax_{y\in\mathcal{Y}}  m(y)+ \lambda L(y,y_{i})$ is  called  $\lambda$-oracle, and will be extensively used in later chapters. $\lambda$-oracle is often efficient when margin rescaling inference is efficient. Therefore, for bi-criteria surrogate loss the choice of $h_i$ and $g_i$ is free with one condition that  the corresponding  $\lambda$-oracle with $h_i$ and $g_i$ is efficient. Usually $h_i(y)=m(y)$ and $g_i(y)=L(y,y_{i})$, however we show an example using other choices. 

\subsection{New Surrogate Losses} 
  Since our framework includes a variety of surrogate losses, we give some examples of useful surrogate losses. Such specific surrogate losses are summarized in Table \ref{table:surrogate}.
 Generalized scaling and $\beta$-scaling generalizes margin rescaling  (when $\alpha=1,\beta=0$) and slack rescaling (when $\alpha=1,\beta=1$) where $\alpha\le-1$ or $\alpha\ge0$,  $0\le \beta-\alpha\le 1$. The bound of $\alpha$ and $\beta$ is the sufficient condition for $\psi$ to be quasi-concave.  
Generalized scaling and $\beta$-scaling can control the size of the hypothesis class by controlling $\beta$ which directly affects the generalization term in \cite{NIPS2016_6485}. This can lead to a tighter generalization bound than that of \cite{NIPS2016_6485}.
Loss scaled log loss is a cost sensitive version of the log loss. 

  \begin{table}[H]
  \begin{center}
    \begin{tabular}{|l|c|}
    \hline
    Name & $\psi(h,g)$\\
    \hline

Margin Rescaling &   $h+g$ \\
Slack Rescaling &   $(h+1)g$ \\    
    \hline
Generalized scaling &   $hg^{\beta}+g^\alpha$ \\
$\beta$-scaling &   $hg^{\beta}+g$ \\
Loss Scaled Log Loss &   $g\log(\exp h +1)$\\ 
ProbLoss &   $2g \mathcal{N}_{CDF}(h,0,2g/\pi)$ \\    
\hline
Micro-F1 Surrogate   &   $h/(-g)$\ \\

    \hline

    \end{tabular}\\
  \end{center}
\caption{Bi-criteria surrogate losses: $h=m(y)$ and $g=L(y,y_{i})$ except Micro-F1 surrogate loss.  
} \label{table:surrogate} 
\end{table}

\subsection{ProbLoss} \label{sec:probloss}
 Margin rescaling has been widely used with success, however, it has been noted that it requires a high separating margin for the label with a high structural loss even if the label is well separated. To resolve this issue, we propose a new loss function that directly minimizes risk under a noise assumptions.   Let each label $y$ be a sequence of micro-labels of length $R$, and each micro-label has $M$ states, i.e., $y\in\mathcal{Y}=[M]^R$.
 Let $H(y,y_i)$ be  Hamming loss, which is the number of wrong micro-labels, i.e., $H(y,y_{i})=\sum_{r=1}^{R} {\bf 1}[y_r\neq [y_i]_r]$. We will only consider Hamming Loss for ProbLoss. The main idea is that rather than imposing  each micro-classifier to have  a constant safe margin of $1$ (which is equivalent  having a margin of $H(y,y_i)$ for the entire classifier), we impose a noise per micro-classifier, and consider the risk under the noise. A direct consequence of the approach is that the sensitivity (or the slope) of the error on the margin is a function of margin whereas it was constant $1$ for margin rescaling. See Figure \ref{fig:comparison_loss}. The surrogate loss works as an intermediate surrogate loss between margin rescaling and slack rescaling for Hamming loss. For instance, the maximum slope of the surrogate loss is $\sqrt{H(y,y_i)}$\footnote{ Differentiate \eqref{eq:Probloss} with $x$, and we can see that it is maximized at $x=0$.} whereas it is $1$ for margin rescaling, and $H(y,y_i)$ for slack rescaling. Another property is that the loss is never zero, thus solves the problem of returning no violating label discussed in \cite{belanger2016structured}. Thus, this loss can suitable for the deep networks since deep networks are  powerful models that can benefit from further enhancing the margin  larger than  $H(y,y_i)$. In margin rescaling, if all labels reaches margin of $H(y,y_i)$, it returns no violating label (or a zero gradient), and further optimization is not possible. 

Consider  the usual potential of a label $\phi$ that it is  a sum of potentials of  micro-classifiers $\phi_r$, and the margin is the difference of the potential between the predicted label and the correct label, i.e., $m(y)=\sum_{r=1}^{R} \phi_{r}(y_r)-\phi_{r}([y_i]_r)$ where $\phi_r(y_{r})$ is the potential for micro-label $y_{r}$ in a position $r$. Without the structured setting, the potential can be viewed as a sum of potential of $H(y,y_i)$ (not $R$) micro-classifiers due to the cancelation. Assume each micro-classifier has a Gaussian noise $\epsilon_{r}\sim\mathcal{N}(0,2/\pi)$. Let $h(y)=m(y)$ and $g(y)=H(y,y_i)$. Then, we use the loss for $y$ as its risk multiplied by a constant of two,  
\begin{align}
\psi(h(y),g(y))= 2H(y,y_{i})\mathbb{P}(m(y)\ge0) =2H(y,y_{i}) \mathcal{N}_{CDF}(m(y),0,2H(y,y_{i})/\pi)
\label{eq:Probloss}
\end{align}
where $ \mathcal{N}_{CDF}(x,\mu,\sigma^2)$ is a cumulative distribution function for a Gaussian distribution at $x$ with a mean $\mu$, and variance $\sigma^2.$  $2/\pi$ is used to set the slope to be $1$ when $g(y)=1$, which can be a tuning parameter. Multiplication of constant two makes ProbLoss a tight upper bound of structured $0-1$ loss, and matches other loss functions. Comparison with other losses are shown in Figure \ref{fig:comparison_loss}.

\begin{figure}[H]
\centering     
\includegraphics[width=.7\linewidth]{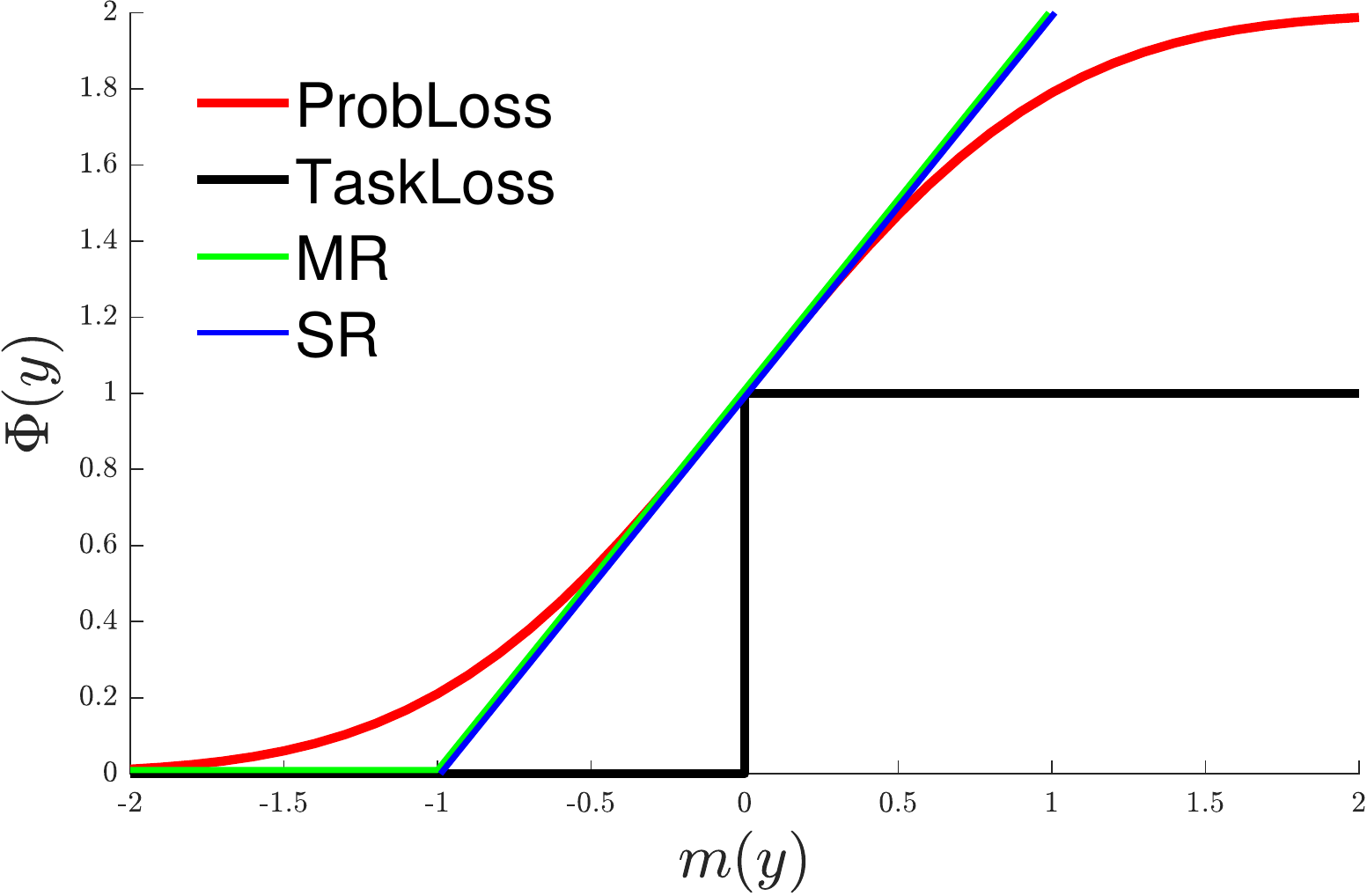}
\caption{$H(y,y_i)=1$}
\end{figure} 
\begin{figure}[H]
\centering   
\includegraphics[width=.7\linewidth]{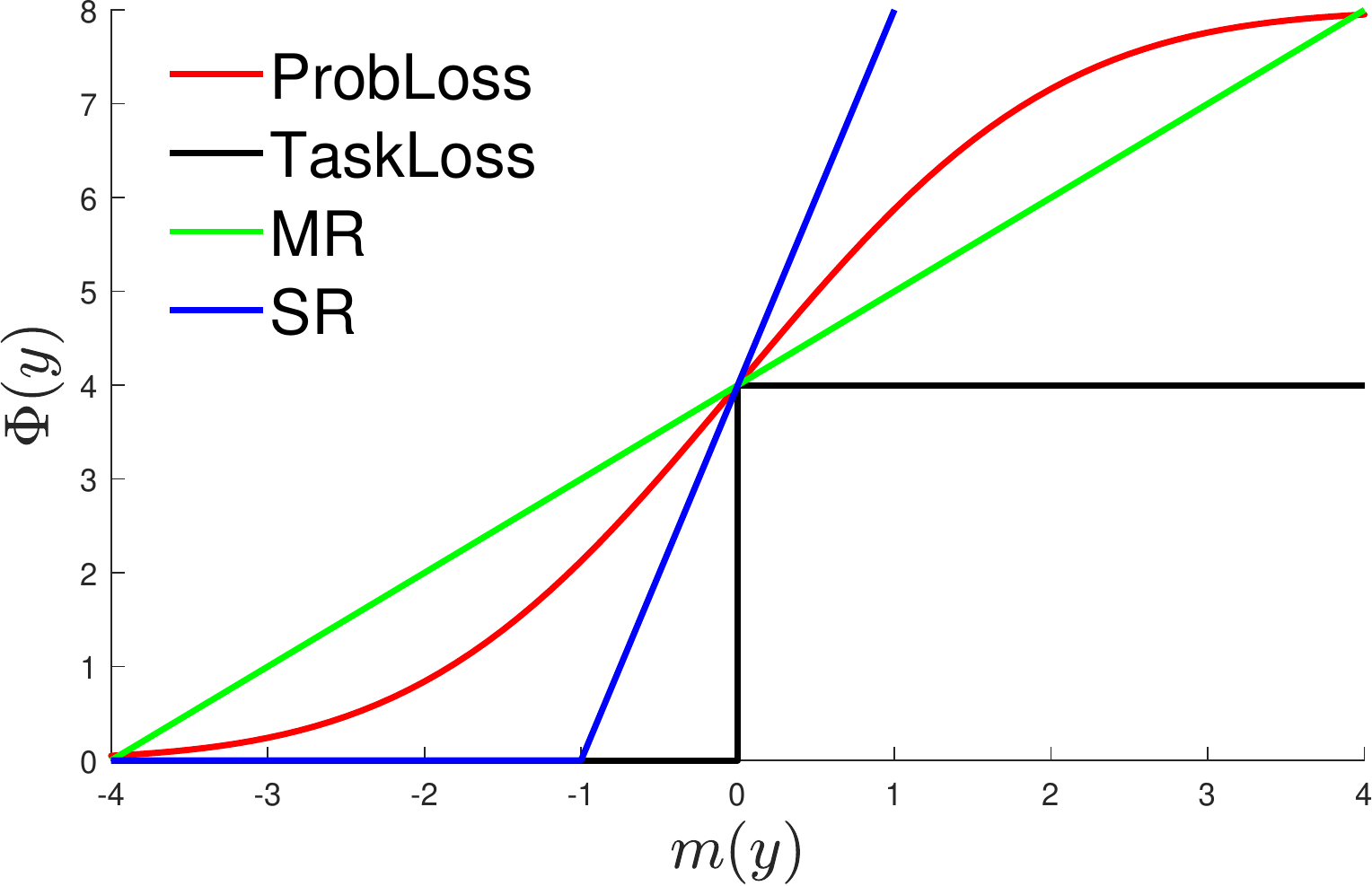}
\caption{$H(y,y_i)=4$}
\caption{Comparison of the Surrogate Losses.}\label{fig:comparison_loss}
\end{figure} 

We also propose to use a convex extension of the Proloss.
\begin{align*}
\psi(h(y),g(y))= \begin{cases}
2H(y,y_{i}) \mathcal{N}_{CDF}(m(y),0,2H(y,y_{i})/\pi) & \text{if } m(y)\le 0 \\
H(y,y_i)+\sqrt{H(y,y_i)}m(y) & \text{otherwise}
\end{cases}
\end{align*}

\subsection{Micro-F1 Surrogate} \label{sec:microf1}

We present our surrogate loss for the direct minimization of the instance-based Micro-F1 as  an example of a  bi-criteria surrogate loss.  
Micro-F1 is an average score of Micro-F1 score per instance. Since it is  a sum of losses of the instances (which is not the case for Macro-F1), the measure is suitable for the optimization method in current literature for a large scaled problems such as SGD, which leads to  efficient optimizations.

We consider a  multi-label problem of $M$ labels, $\mathcal{Y}\subseteq[M]$. For true label $y_{i}$, and prediction $y$, see that,
\begin{align*}
\text{Precision}(y,y_{i})&=\dfrac{|y\cap y_{i}|}{|y|}\\
\text{Recall}(y,y_{i})&=\dfrac{|y\cap y_{i}|}{|y_{i}|}
\\
\text{Micro-F1}(y,y_{i})&=1/\left(1/\text{Precision}(y,y_{i})+1/\text{Recall}(y,y_{i})\right)=\dfrac{2|y\cap y_{i}|}{|y|+|y_{i}|}
\end{align*}
Note that $|y_i|$ is a constant. Maximizing Micro-F1 score is same as minimizing 1-(Micro-F1 score). 
Then, our surrogate loss function, Micro-F1 surrogate, is 
\begin{align*}
L^{F1}=\max_{y} \left(1-\dfrac{2|y\cap y_i|}{|y|+|y_i|}\right)+\dfrac{m(y)}{|y|+|y_i|}=\max_y \dfrac{H(y,y_i)+m(y)}{|y|+|y_i|}
\end{align*}
where $H(y,y_i)$ is the Hamming loss. 
  
 The loss function is a tight upper bound of the actual F1 loss since it equals to the $1-\text{Micro-F1}(y,y_{i})$ when $m(y)=0$.
 Micro-F1 surrogate is a  bi-criteria surrogate with choice of $\psi_i(a,b)=a/(-b),$ $h_{i}(y)=H(y,y_i)+m(y)$, and $g_{i}(y)=-|y|-|y_i|$. 
Since $|y|=\sum_{j\in[m]}{\bf 1}[j\in y]$, $h_{i}$ and $g_{i}$ decomposes with respect to each label. 
\section{Conclusion}
In this chapter,  we investigated the problem of the choice of surrogate losses and its properties in the cost-sensitive learning problem. We first reviewed two common surrogate losses: margin rescaling and slack rescaling. Two surrogate losses are compared with respect to two properties:  required margin and  sensitivity to outliers. As for the margin rescaling, the required margin increases linearly with the task loss, and in turn, this results already well-separated label in the correct side of the classification boundary to incur a larger potential than labels on the wrong side of the classification boundary. On the other hand, slack rescaling formulation only requires a unit margin of $1$. Thus, the   labels with a safe margin larger than  $1$ has the negative potential, and never be selected as the argmax label. The problem of slack rescaling is that margin loss is multiplied by the structural loss, and this could be potentially large for the label with a large structural loss, especially, for the inseparable datasets or the datasets with outliers. 
 
  We introduced an important issue of the surrogate loss: Hardness of the inference. For the efficient inference, decomposability plays an important role. While the additive property of the margin rescaling maintains the decomposability and results in an efficient inference, the multiplicative property of slack rescaling disallows decomposability, and which in turn disables efficient inference utilizing the decomposability.

We extended the surrogate loss from the simple additive and multiplicative form to other formulations termed  bi-criteria surrogate. Two examples are given: ProbLoss that upper bounds the risk under the normal noise assumption and Micro-F1 surrogate that directly related to optimizing the  Micro-F1 score.

\chapter{Efficient Slack Rescaling Inference with $\lambda$-oracle}\label{cha:inference}

In this chapter, we investigate an important task; efficient inference methods for the slack rescaling formulation before visiting other more complicated surrogate losses. We assume that the model is decomposable, and efficient inference of the margin rescaling formulation is available.  Then, we ask following the questions. If we consider margin rescaling inference as an oracle, can we find the optimum for slack rescaling exactly? What is the minimum number of oracle call?   We present several practical inference algorithms as well as theoretical lower bounds.  The most important insight comes from the geometric interpretation of labels into the 2D plane. 

This chapter is mainly based on the previous publication \cite{choi2016fast}.

\section{Problem Formulation}
Recall  our objective function in \eqref{eq:objective_function_struct_svm},
  \begin{align*} 
        \min_{w} \frac{C}{2}\|w\|^2_2
        +\frac{1}{n}\!\sum_i\max_{y\in\mathcal{Y}} \tilde{\Phi}(w,y,y_{i},x_{i})
\end{align*}
 and margin rescaling and slack rescaling formulation in \eqref{eq:mr_loss} and \eqref{eq:sr_loss},
\begin{align*}
\text{Margin rescaling}:& 
&   
\max_y  \tilde{\Phi}^{MR}(w,y,y_i,x_i)&=\max_{y}
L(y,y_i)+m(w,x_{i},y,y_i)
\\
\text{Slack rescaling}:&
&   
\max_y \tilde{\Phi}^{SR}(w,y,y_i,x_i)&=\max_{y}L(y,y_i)(m(w,x_{i},y,y_i)+1)
\end{align*} 
 As before, define the functions:
$h(y)=1+m(w,x_{i},y,y_i)$ and $g(y)=L(y,y_i)$. 
Then, with these definitions, loss augmented inference or finding the argmax label is,
\begin{align}
\text{Margin rescaling}:&& 
\argmax_y&g(y)+h(y)
\label{eq:mr_argmax}
\\
\text{Slack rescaling}:&&
y^{*}=\argmax_y &g(y)h(y)
\label{eq:sr_argmax}
\end{align} 


In this chapter, we only consider the problem of solving the loss augmented inference in \eqref{eq:sr_argmax} efficiently. The efficiency is measured by the number of calls to the $\lambda$-oracle, which is closely related to loss augmented inference of margin rescaling.  
That is, we assume that we have access to a procedure that can efficiently solve the problem:
\begin{equation}\label{eq:lambda_oracle}
y(\lambda)=\mathcal{O}(\lambda)
=\argmax_{y\in\mathcal{Y}}\mathcal{L}_\lambda(y)
\end{equation}
where $\mathcal{L}_\lambda(y)=h(y)+\lambda g(y)$.   This problem is
just a rescaling of argmax operation of margin rescaling formulation.  E.g., for linear
responses it is obtained by scaling the weight vector by $1/\lambda$.
If we can handle margin rescaling efficiently, we can most likely
implement the $\lambda$-oracle efficiently.  This is also the oracle
used in \cite{sarawagi2008accurate}.
%


We consider $\lambda$-oracle as a basic inference operation and call the oracle different $\lambda$ to obtain $y^*$.

Exploring the limitation of this approach, we propose an alternative procedure that can access a more
powerful oracle, which we call the \emph{constrained $\lambda$-oracle}:
\begin{align}\label{eq:constrained_oracle}
  \mathcal{O}_{c}(\lambda,\alpha,\beta)=\max_{y\in\mathcal{Y},\; \alpha h(y)> g(y),\; \beta h(y)\le g(y)} \mathcal{L}_\lambda(y),  
\end{align}
where $\alpha,\beta \in\reals$.
This oracle is similar to the $\lambda$-oracle, but can additionally handle linear constraints on the values $h(y)$ and $g(y)$. In the sequel, we show that in many interesting cases this oracle is not more computationally expensive than the basic one.
For example, when the $\lambda$-oracle is implemented as a linear program (LP), the additional constraints are simply added to the LP formulation and do not complicate the problem significantly.
%

\section{Slack Rescaling Inference Algorithms based on Convexity}

 We first present the inference algorithms based on convexity of the function, which results in the form of a binary search.

\subsection{Binary Search for Cutting-plane Optimization}
In this section, we review 
framework in \cite{sarawagi2008accurate} use the formulation in \eqref{obj_slack} as a starting point. This is slightly different from our methods that to work with cutting-plane optimization.

 Rewrote the constraints as,
\begin{equation}
1 + f(y) - f(y_i) - \dfrac{\xi_i}{L(y,y_i)} \leq 0 \quad\forall i,y\ne y_i
\label{eq:scaled_constraint_1}
\end{equation}
Hence, to find a violated constraint they attempt to maximize: solve the problem with substitution of:
\begin{align}
\argmax_{y\in \mathcal{Y}'} \left ( h(y)-\dfrac{\xi_i}{g(y)}\right)
\label{yhat_c}
\end{align}
where $\mathcal{Y}'=\{y|y\in
\mathcal{Y},h(y)>0, 
y \neq y_i\}$. Note that   $h(y)=1 + f(y) - f(y_i)$ and $g(y)=L(y,y_i)$.
They suggest minimizing a convex upper bound of \eqref{yhat_c} which stems from the convex conjugate function of $\dfrac{\xi_i}{g(y)}$:
\begin{align}
&\max_{y\in \mathcal{Y}'} h(y)-\dfrac{\xi_i}{g(y)}
=\max_{y\in \mathcal{Y}'} \min_{\lambda\ge 0} \left ( h(y)+\lambda g(y)-2\sqrt{\xi_i \lambda}\right) \nonumber\\
\le &\min_{\lambda\ge 0} \underset{y\in \mathcal{Y}'}{\max} F'(\lambda,y) 
=\min_{\lambda\ge 0} \underset{y\in \mathcal{Y}'}{\max}\; F'(\lambda,y) = \min_{\lambda\ge 0} F(\lambda) \label{eq:p1_obj_up}
\end{align}
 where $F(\lambda)=\underset{y\in \mathcal{Y}'}{\max}\; F'(\lambda,y) =\max_{y\in \mathcal{Y}'} h(y)+\lambda g(y)-2\sqrt{\xi_i \lambda}$.
 Equation (\ref{eq:p1_obj_up}) can be also derived from  following quadratic bound, 
 \begin{align*}
         &\underset{y\in \mathcal{Y'}}{\max} h(y)+\min_{\lambda>0}\left
\{ \left
(  \sqrt{\dfrac{\xi_i}{g(y)}}-\sqrt{\lambda g(y)}\right)^2-\dfrac{\xi_i}{g(y)}\right
\} \\
=&\underset{y\in \mathcal{Y'}}{\max}\min_{\lambda\ge 0} \left ( h(y)+\lambda g(y)-2\sqrt{\xi_i \lambda}\right) 
 \end{align*}

Since $F(\lambda)$ is a convex function, \eqref{eq:p1_obj_up} can be solved by a simple search method such as golden search over $\lambda$ \cite{sarawagi2008accurate}.

Although this approach is suitable for the cutting plane algorithm,
unfortunately, it cannot be easily extended to other training
algorithms.  In particular, $F'(\lambda,y)$ is defined in terms of
$\xi_i$, which ties it to the constrained form \eqref{obj_slack}.

\subsection{Binary Search for SGD}\label{sec:binary_sgd}

In the previous section, the function inside argmax operation was in a specific form used in cutting plane optimization, from now on we investigate solving argmax from that can be used in other optimization, which is more plausible for a large size dataset. Specifically, we present the framework for
solving the maximization problem \eqref{eq:slack_argmax}, which we
write as:
\begin{align}\label{eq:slack_argmax}
\max_y \Phi(y) := \max_y h(y) g(y)
\end{align}
We describe two new algorithms to solve this problem using access to the $\lambda$-oracle, which has several advantages over previous approaches.

We first present a binary search algorithm similar to the one proposed in \cite{sarawagi2008accurate}, but with one main difference.
As we stated before, our algorithm can be easily used with training methods that optimize the unconstrained objective \eqref{unconstrained_obj_slack}, and can, therefore, be used for SGD\cite{RatliffSubgradient}, SDCA\cite{shalev2013accelerated}, and FW\cite{lacoste2013block} since the algorithm minimizes a convex upper bound on $\Phi$ without slack variable $\xi_i$.
The algorithm is based on the following lemma.
\begin{lemma}\label{binary_upper}
Let $\bar{F}(\lambda)= \frac{1}{4}  \max_{y\in\mathcal{Y}^+}\left ( \frac{1}{\lambda}h(y)+\lambda g(y)  \right )^2$, then
\begin{align*} 
\max_{y\in\mathcal{Y}} \Phi(y)&
\le \min_{\lambda>0}\bar{F}(\lambda)
\end{align*}
and $\bar{F}(\lambda)$ is a convex function in $\lambda$. 
\label{app:quadratic_bound}

\proof
First, let $\mathcal{Y}^+=\{y|y\in \mathcal{Y},h(y)>0\}$, then $\max_{y\in\mathcal{Y}} \Phi(y) = \max_{y\in\mathcal{Y}^+} \Phi(y)$,
since any solution $y$ such that $h(y) < 0$ is dominated by $y_i$, which has zero loss.
Second, we prove the bound w.r.t. $y\in\mathcal{Y}^+$.
In the following proof we use a quadratic bound (for a similar bound see \cite{nguyen1995minimizing}).
\begin{align}
\max_{y\in\mathcal{Y}^+} \Phi(y)&
=\max_{y\in\mathcal{Y}^+} h(y)g(y)= \max_{y\in\mathcal{Y}^+} \frac{1}{4}\left (2\sqrt{h(y)g(y)} \right )^2\nonumber\\
&=\frac{1}{4}\left ( \max_{y\in\mathcal{Y}^+} \min_{\lambda>0} \left \{\frac{1}{\lambda}h(y)+\lambda g(y) \right \}\right )^2\nonumber
 \\ &\le \frac{1}{4} \left (\min_{\lambda>0}\ \max_{y\in\mathcal{Y}^+} \left \{\frac{1}{\lambda}h(y)+\lambda g(y) \right \} \right )^2 \label{upper2}
\end{align}
To see the convexity of $\bar{F}(\lambda)$, we differentiate twice to obtain:
\begin{align*}
\dfrac{\partial^2 \bar{F}(\lambda)}{\partial \lambda^2}=\frac{1}{4}  \max_{y\in\mathcal{Y}^+} 6\frac{1}{\lambda^4}h(y)^2+2g(y)^2>0
\end{align*}
 \qed
\end{lemma} 
Similar to \cite{sarawagi2008accurate}, we obtain a convex upper bound
on our objective.
Evaluation of the upper bound $\bar{F}(\lambda)$ requires using only the $\lambda$-oracle.
Importantly, this alternative bound $\bar{F}(\lambda)$ does not depend on the slack variable $\xi_i$, so it can be used with algorithms that optimize the unconstrained formulation \eqref{unconstrained_obj_slack}. As in \cite{sarawagi2008accurate}, we minimize $\bar{F}(\lambda)$ using \emph{binary search} over $\lambda$.
The algorithm keeps track of $y_{\lambda_t}$, the label returned by the $\lambda$-oracle for intermediate values $\lambda_t$ encountered during the binary search, and returns the maximum label $\max_t \Phi(y_{\lambda_t})$.
This algorithm focuses on the upper bound $\min_{\lambda>0}\bar{F}(\lambda)$, and interacts with the target function $\Phi''$ only through evaluations $\Phi''(y_{\lambda_t})$ (similar to \cite{sarawagi2008accurate}).

\section{Slack Rescaling Inference with  Geometrical Interpretation}

Previously, we focused on algorithms based on binary search utilizing the convex property of the function. We next present an algorithm that aims to optimize $\Phi(y)$ in a more direct manner, using a geometrical interpretation of mapping labels into $\reals^2$.

\subsection{Geometrical Interpretation of $\lambda$-oracle search}\label{sec:geometrical_inter}

The main observation in \cite{choi2016fast} is that each label $y$ can be mapped as a point $[h(y),g(y)]$ in $2$-dimensional plane $P=\mathbb{R} \times \mathbb{R}_+$ where $h(y)$ and $g(y)$ are  the coordinates in  $X$-axis and $Y$-axis of $P$ correspondingly. Taking this view, from now on, we consider the space of $\mathcal{Y}$ as a point in $P$, i.e.,  $\mathcal{Y}\subseteq P$. 
We will use both  $[y]_1$ and $h(y)$  for $X$-coordinate and  $[y]_2$ and $g(y)$  for $Y$-coordinate of point $y$.

The contours of our objective function
$\Phi(y)=[y]_{1}\cdot[y]_{2}$ are then
hyperbolas.  We would like to maximize this function by repeatedly
finding points that maximize linear objectives of the form
$\mathcal{L}_\lambda(y)=h(y)+\lambda g(y)$,
whose contours form lines in the plane.  See Figure \ref{fig:contour}.

An example of mapping of label into $\reals^2$ is shown in Figure \ref{fig:all_points}.
\begin{figure}[H]
\centering     
\includegraphics[width=\linewidth]{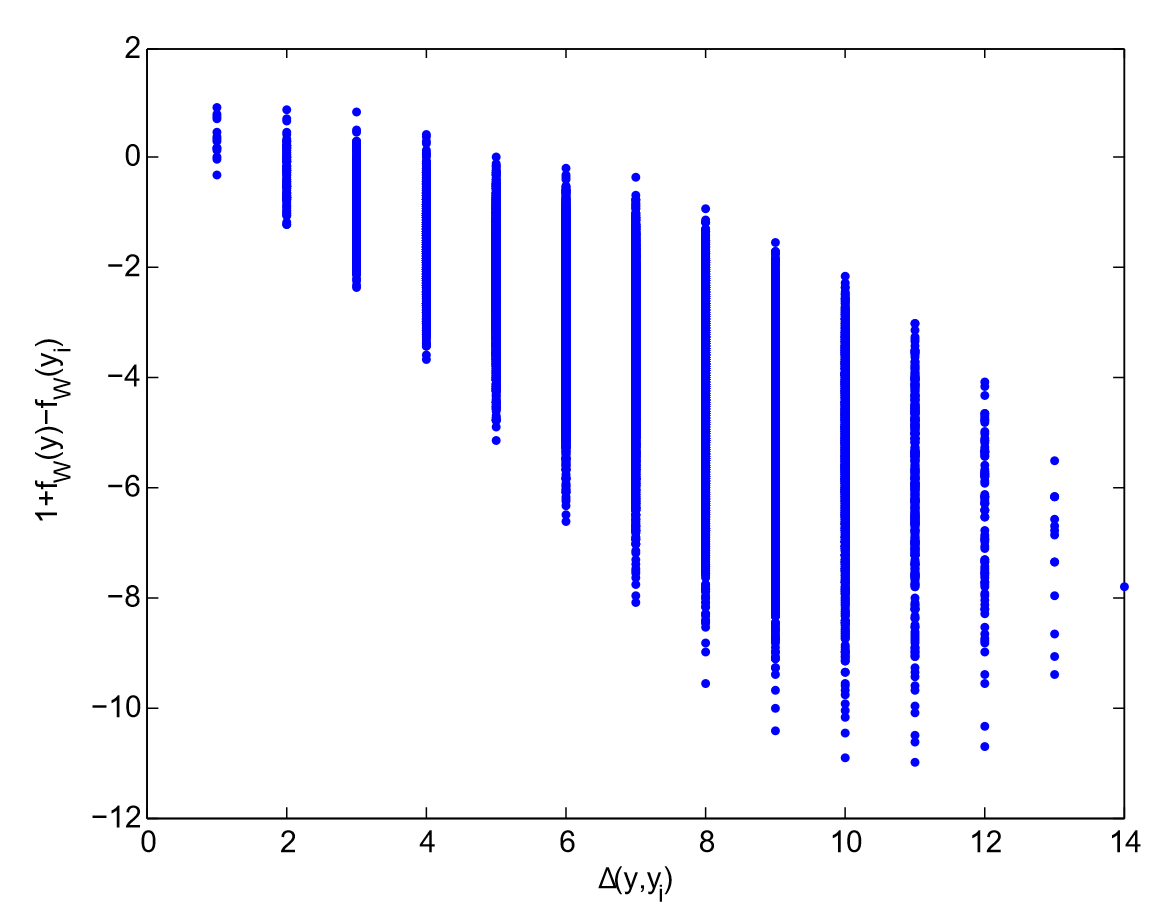}
\caption{A snapshot of labels during optimization with Yeast dataset.  Each $2^{14}-1$ labels is shown as a point in the figure \ref{fig:all_points}. X-axis is the $\triangle(y,y_i)$ and Y-axis is $1+f_w(y)-f_w(y_i)$.}\label{fig:all_points}
\end{figure}
\begin{figure}
        \includegraphics[width=.8\linewidth]{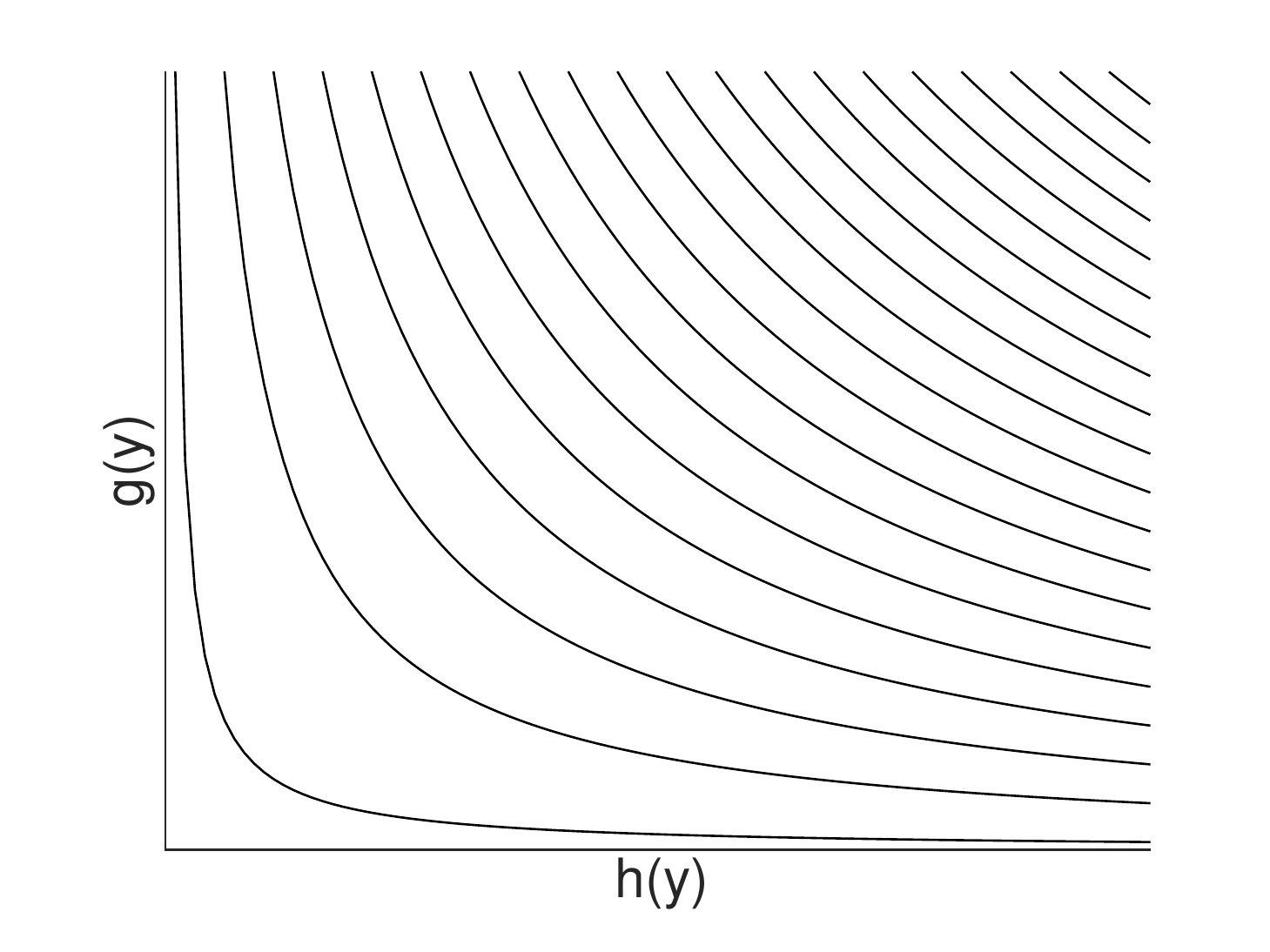}
        \label{fig:phi_contour}
        \caption{$\Phi$ contour.}
        \includegraphics[width=.8\linewidth]{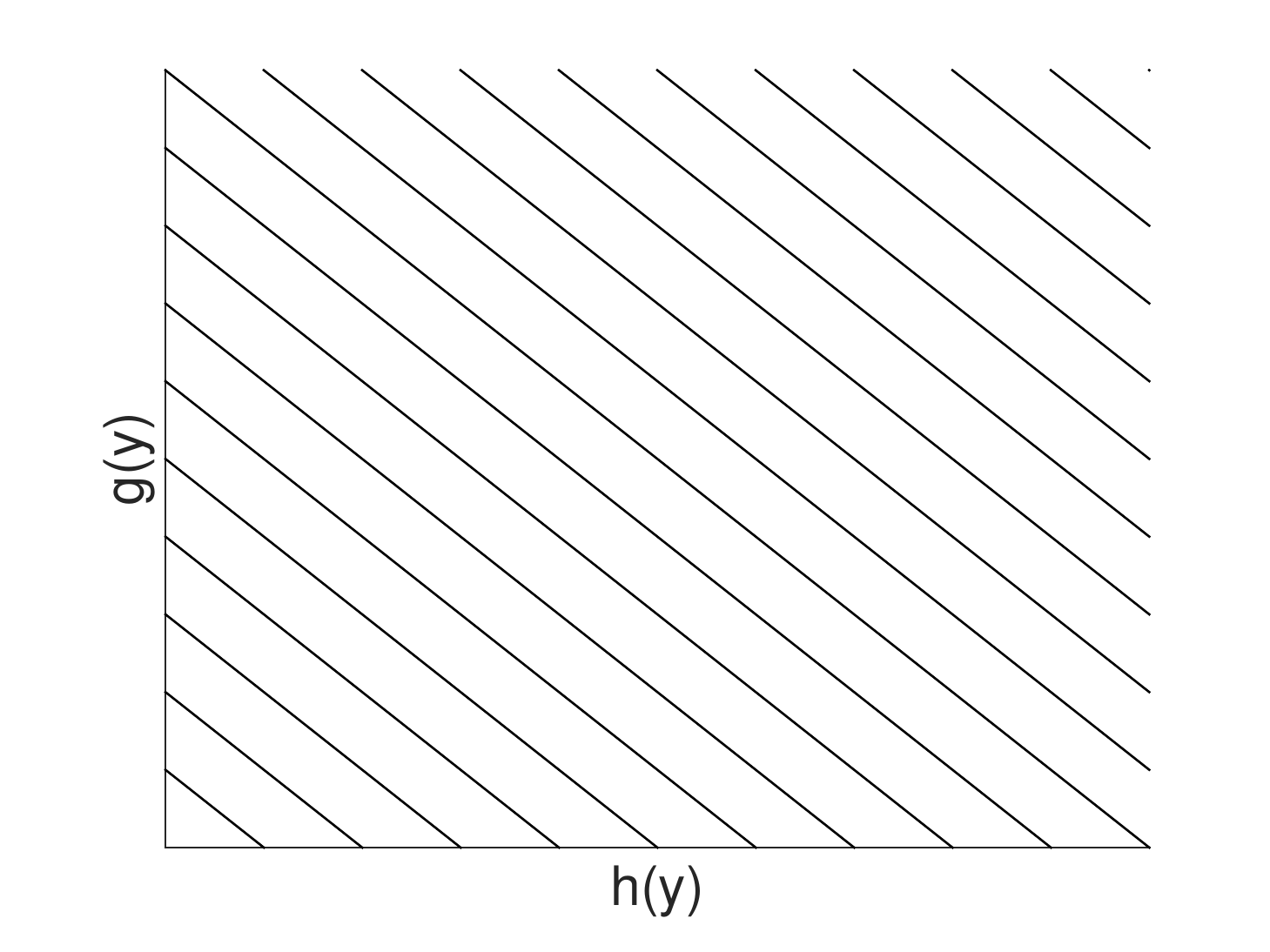}
        \label{fig:lambda_contour}
        \caption{$\lambda$ contour.} 
\caption{Contour of the two functions considered in $\reals^2$. $\Phi$ contour is the contour of the objective function, and $\lambda$ contour is the contour used by the oracle. }\label{fig:contour}
\end{figure}

\begin{figure}[H]
\centering     
\includegraphics[width=\linewidth]
{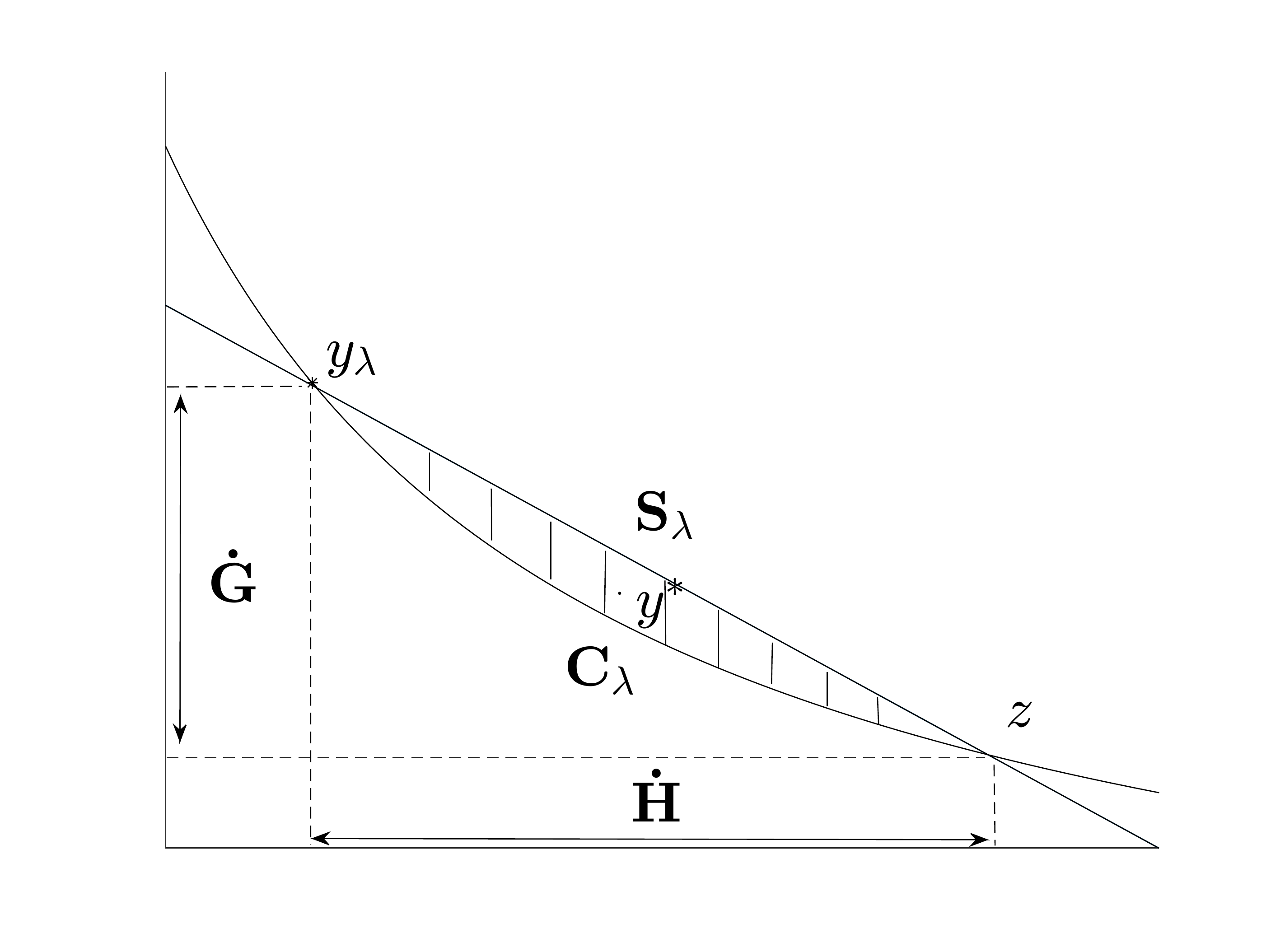}
\caption{Geometric interpretation of the $\lambda$-oracle: ${y^*}$ must reside between the upper bound $S_\lambda$ and the lower bound $C_\lambda$, the shaded area. It follows that $h(y^*)$ and $g(y^*)$ reside in a simple segment  $\dot{H}$ and $\dot{G}$ respectively.}\label{fig:geo}
\end{figure}
The importance of the $\reals^2$ mapping is that each $y(\lambda)$
revealed by the $\lambda$-oracle shows that $y^*$ can only reside in a
small slice of the plane.
See Figure \ref{fig:geo}.

\begin{lemma}\label{lem:restricted_space}
Let $S_\lambda$ be a line through $y(\lambda)$ and ${z}=[\lambda[y(\lambda)]_2,\frac{1}{\lambda}[y(\lambda)]_1]$, and   let $C_{\lambda}=\{y\in\reals^2|[y]_{1}\cdot [y]_{2}=\Phi(y(\lambda))\}$ be the hyperbola through $y(\lambda)$.
Then, 
${y^*}$  is on or below line $S_\lambda$, and ${y^*}$  is on or above hyperbola $C_\lambda$. 
\proof 
If there exists a $y\in\mathcal{Y}$ which is above $S_\lambda$, it contradicts the fact that $y(\lambda)$ is the argmax point for function $\mathcal{Y}_\lambda$.  And the second argument follows from ${y^*}$ being the argmax label w.r.t.~$\Phi$, and the area above $C_\lambda$ corresponds to points whose $\Phi$ value is greater than ${y(\lambda)}$. \qed
\end{lemma}
 It follows that $h(y^*)$ and $g(y^*)$ must each reside in a segment:
\begin{lemma} \label{cor:simple_segment}
Let $\dot{H}=[ \min([{\mathcal{O}(\lambda)}]_1,[{z}]_1), \max([{\mathcal{O}(\lambda)}]_1,[{z}]_1)]$ and $\dot{G}=[\min([{\mathcal{O}(\lambda)}]_2,[{z}]_2),$ $\max([{\mathcal{O}(\lambda)}]_2,[{z}]_2)]$.
Then, 
\begin{align*}
h(y^*)\in \dot{H}, &&g(y^*)\in \dot{G}
\end{align*} 
\proof This follows from the fact that $S_\lambda$ and $C_\lambda$ intersects at two points, ${\mathcal{O}(\lambda)}$ and ${z}$, and the boundaries, $S_\lambda$ and $C_\lambda$, are strictly decreasing functions. \qed
\end{lemma}

\subsection{Bisecting search}
\label{sec:bisecting_search}


In this section, we propose a search algorithm which is based on the previous geometric interpretation.
Similar to the binary search, our method also relies on the basic $\lambda$-oracle. 

We next give an overview of the algorithm. We maintain a set of  possible value ranges $\lambda^*=\argmax_{\lambda>0}\Phi(\mathcal{O}(\lambda))$, $h(\lambda^*)$, and $g(\lambda^*)$ as $L, H,$ and $G$, respectively; all initialized as  $\reals$.
First, for each $\mathcal{O}(\lambda)$ returned by the oracle,  we take an intersection of $G$ and $H$ with a segment of possible values of $h(y)$ and $g(y)$, respectively, using Lemmas \ref{lem:restricted_space} and \ref{cor:simple_segment}.
%
Second, we reduce the space $L$ of potential $\lambda$'s based on the following Lemma.
\begin{lemma}\label{MAP_seg}
$h(\mathcal{O}(\lambda))$ is a non-increasing function of $\lambda$, and $g(\mathcal{O}(\lambda))$
is a non-decreasing function of $\lambda$.

\proof 
Let $g_1=g(y_{\lambda_1}),h_1=h(y_{\lambda_1}), g_2=g(y_{\lambda_2}),$
and $h_2=h(y_{\lambda_2}).$
\begin{align*}
&h_1+\lambda_1g_1\ge h_2+\lambda_1 g_2, 
\quad h_2+\lambda_2g_2\ge h_1+\lambda_2 g_1 \\
&\Leftrightarrow
 h_1-h_2+\lambda_1(g_1-g_2) \ge 0,-h_1+h_2+\lambda_2(g_2-g_1) \ge 0 \\
&\Leftrightarrow
(g_2-g_1)(\lambda_2-\lambda_1)\ge 0
\end{align*}
For $h$, change the role of $g$ and $h$. \qed
\end{lemma}
Thus, we can discard $\{\lambda'|\lambda'>\lambda\}$ if $h(y^*_\lambda)>h(y(\lambda))$ or $\{\lambda'|\lambda'<\lambda\}$ otherwise from $L$.
Next, we pick $\lambda\in L$ in the middle, and query $y(\lambda)$.
The algorithm continues until at least one of $L, H,$ and $G$ is empty.

Similar to the binary search from the previous section, this algorithm can be used with training methods like SGD and SDCA, as well as the cutting-plane algorithm. 
However, this approach has several advantages compared to the binary search.
First, the binary search needs explicit upper and lower bounds on $\lambda$, thus it has to search the entire $\lambda$ space \cite{sarawagi2008accurate}.
However, the bisecting search can directly start from any $\lambda$ without an initial range, and for instance, this can be used to warm-start from the optimal $\lambda$ in the previous iteration. 
 Furthermore, the search is guided by the target objective function $\Phi$ itself, whereas the binary search decreases the convex upper bound $\bar{F}$, which does not correspond to increasing $\Phi$, and $\Phi$ might be even decreasing.
Furthermore, we point out that since the search space of $h$ and $g$ is also bisected, the procedure can terminate early if either of them becomes empty. 
 
 We also propose two improvements that can be applied to either the binary search or the bisecting search. Specifically, we first provide a simple stopping criterion that can be used to terminate the search when the current solution $y_{\lambda_t}$ will not further improve. If $L=[\lambda_m,\lambda_M]$, and both endpoints have the same label, i.e.,  $y_{\lambda_m}=y_{\lambda_M}$, then we can terminate the binary search safely because from lemma \ref{MAP_seg}, 
it follows that the solution $\mathcal{O}(\lambda)$ will not change in this segment. 

 Second, we show how to obtain a bound on the suboptimality of the current solution, which can give some guarantee on its quality.
Let $K(\lambda)$ be the value of the $\lambda$-oracle. i.e., 
\begin{align}\label{loracle0}
K(\lambda)=\max_{y\in\mathcal{Y}} h(y)+\lambda g(y).
\end{align}
\begin{lemma}\label{lem3}
$\Phi^*$ is upper bounded by
\begin{align} \label{optimal_upper}
\Phi(y^*)\le \dfrac{K(\lambda)^2}{4\lambda}
\end{align}
\proof 
\begin{align*}
& h(y)+\lambda g(y)\le K(\lambda)\\ 
&\iff g(y)(h(y)+\lambda g(y))\le g(y)K(\lambda)\\ 
&\iff
 \Phi(y) \le g(y) K(\lambda) - \lambda g(y)^2\\&=-\lambda\left
( g(y)- \dfrac{K(\lambda)}{2\lambda}\right )^2 + \dfrac{K(\lambda)^2}{4\lambda}\le \dfrac{K(\lambda)^2}{4\lambda}
\end{align*}
\qed
\end{lemma}

\begin{algorithm}
  \caption{Bisecting search }
\label{alg:bisecting}
  \begin{algorithmic}[1]
    \Procedure{Bisecting}{$\lambda_0$}
    \INPUT {Initial $\lambda$ for the search $\lambda_0\in\mathbb{R}_+$
}
    \OUTPUT{$\hat{y} \in \mathcal{Y}.$}
    \INIT  {$H=G=L=\mathbb{R}_+,\lambda=\lambda_0$, $\hat{\Phi}=0$.}
    \While {$H\ne\emptyset$ and $G\ne\emptyset$ }
     \State $y'\gets \mathcal{O}(\lambda)$
     \State $u\gets [h(y')\; \lambda g(y')],v \gets [g(y')\; \frac{1}{\lambda} h(y')]$
     \State $H\gets H \cap \{h'|\min u\le h' \le \max u\}$ \Comment{Update}
     \State $G\gets G \cap \{g'|\min v\le g' \le \max v\}$
     \If{$v_{1}\le v_2$}\Comment{Increase $\lambda$}
       \State $L\gets L \cap \{\lambda'\in\mathbb{R}|\lambda'\ge\lambda\}$
     \Else\Comment{Decrease $\lambda$}
       \State $L\gets L \cap \{\lambda'\in\mathbb{R}|\lambda'\le \lambda\}$
     \EndIf
    \State $\lambda \gets \frac{1}{2}(\min L+\max L)$
    \If {$h(y')g(y')\ge\hat{\Phi}$}
     \State $\hat{y}\gets y',$ $\hat{\Phi}\gets h(y')g(y')$.
    \EndIf
    \EndWhile
    \EndProcedure
  \end{algorithmic}
\end{algorithm}

So far we have used the $\lambda$-oracle as a basic subroutine in our search algorithms.
Unfortunately, as we show next, this approach is limited as we cannot guarantee to find the optimal solution $y^*$, even with the unlimited number of calls to the $\lambda$-oracle.
This is somewhat distressing since with unlimited computation we can find the optimum of \eqref{eq:slack_argmax} by enumerating all $y$'s.



\subsection{Limitation of the $\lambda$-oracle}
\label{sec:lambda_oracle_inexact}
Until now, we used only the $\lambda$-oracle to search for $y^*$ without
directly accessing the functions $h$ and $g$. We now show that this
approach, searching $y^*$ with only a $\lambda$-oracle, is very limited:
even with an unlimited number of queries, the search cannot be
exact and might return a trivial solution in the worst case.
\begin{theorem}\label{loracle_l1}
  Let $\hat{H}=\max_y h(y)$ and $\hat{G}=\max_y g(y)$. For any
  $\epsilon>0$, there exists a problem with 3 labels
  such that for any $\lambda\geq0$, $\Phi(\mathcal{O}(\lambda))=\min_{y\in \mathcal{Y}} \Phi(y)<\epsilon$, while
  $\Phi(y^*)=
  \dfrac{1}{4}\hat{H}\hat{G}$.

\proof

We will first prove the following lemma which will be used in the proof.

\begin{lemma}\label{limit_oracle}
 Let $A=[A_1\;A_2]\in \mathbb{R}^2,B=[B_1 \;B_2]\in \mathbb{R}^2$, and $C=[C_1 \;C_2]\in \mathbb{R}^2$, and $A_1<B_1<C_1$. If $B$ is under the
 line $\overline{AC}$, i.e.,$\exists t$,$0\le  t\le 1$,$D=tA+(1-t)C$, $D_1=B_1$, $D_2>B_2$. Then,   $\nexists \lambda\ge0$, $v=[1 \;\lambda]\in \mathbb{R}^2$,  such that
 \begin{align}
   v\cdot B>v\cdot A \;\text{  and }
  v\cdot B>v\cdot C \label{eq:argmaxv}
 \end{align}
 
 \proof Translate  vectors $A,B,$ and $C$ into coordinates of $[0,A_{2}],[a,b],$
 $[C_{1},0]$ by adding a vector $[-A_1,-C_2]$ to each vectors $A,B,$ and $C$,  since it does not change $B-A$ or $B-C$. Let $X=C_1$ and $Y=A_2$. 

If $0\le\lambda \le \dfrac{X}{Y}$, then $v\cdot A=\lambda Y\le X= v\cdot C$. $v\cdot (B-C)>0\iff (a-X)+\lambda b>0$ corresponds to  all the points above line  $\overline{AC}$. Similarly,
if $\lambda \ge \dfrac{X}{Y}, $ \eqref{eq:argmaxv} corresponds
to $a+\lambda (b-Y)>0$ is also all the points above  $\overline{AC}$.
\qed 
 \end{lemma}
 From lemma \ref{limit_oracle}, 
 if $y_1$,$y_2\in \mathcal{Y}$, then all the labels
 which lies under line $y_1$ and $y_2$ will not be found by $\lambda$-oracle. In the adversarial case, this holds when label lies on the line also. Therefore,
Theorem \ref{loracle_l1} holds when there exist three labels, for arbitrary small $\epsilon>0$,
$A=[\epsilon, \hat{G}],B=[\hat{H}, \epsilon]$, and $C=[\frac{1}{2}\hat{H},\frac{1}{2}\hat{G}]$, $\mathcal{Y}=\{A,B,C\}$. In this case for any $\lambda>0$, $\Phi(\mathcal{O}(\lambda))\approx0$.
\qed 
 \end{theorem}
 Theorem \ref{loracle_l1} shows that any search algorithm that can access
 the function only through $\lambda$-oracle, including the method in
 \cite{sarawagi2008accurate} and both methods presented above,
 cannot be guaranteed to find a label optimizing $\Phi(y)$, even
 approximately, and even with unlimited accesses to the oracle. This
 problem calls for a more powerful oracle.

\subsection{Angular search with the constrained-$\lambda$-oracle}
\label{sec:angular_w_constrained_oracle}

The \emph{constrained $\lambda$-oracle}   defined in
\eqref{eq:constrained_oracle} has two inequality constraints to
restrict the search space.  Using this modified algorithm, we can
present an algorithm that {\em is} guranteed to find the most
violating constraint, as captured by the following guarantee, proved
in the section \ref{angular_search_proof}:
\begin{theorem}\label{thm:angular_optimal} Angular search described in
  Algorithm \ref{alg:angular} finds the optimum $y^*=\argmax_{y\in
    \mathcal{Y}} \Phi(y) $ using at most $t=2M+1$ iteration where $M$ is the number of the labels.
\end{theorem}
This is already an improvement over the previous methods, as at least
we are guaranteed to return the actual most violating label.  However,
it is still disappointing since the number of iterations, and thus the number of oracle accesses might actually be larger than the number of
labels.  This defies the whole point, since we might as well just
enumerate over all $M$ possible labels.  Unfortunately, even with a
constrained oracle, this is almost the best we can hope for. In fact,
even if we allow additional linear constraints, we might still need
$M$ oracle accesses, as indicated by the following Theorem, which is proved in the section \ref{sec:limitation2}.

\begin{theorem}\label{loracle_l2}
Any search algorithm accessing labels only through a $\lambda$-oracle with any number of linear constraints cannot find $y^*$ using less than $M$ iterations in the worst case, where $M$ is the number of labelings.

\end{theorem}

Fortunately, even though we cannot guarantee optimizing $\Phi(y)$
exactly using a small number of oracle accesses, we can at least do so
approximately. This can be achieved by Algorithm \ref{alg:angular} (see section \ref{angular_search_proof} for proofs), as the next theorem states.

 \begin{algorithm}[H]
  \caption{Angular search }
\label{alg:angular}
  \begin{algorithmic}[1]
    \Procedure{AngularSearch}{$\lambda_0,T$}
    \INPUT {$\lambda_{0}\in\mathbb{R}_+$, 
       and maximum iteration $T\in \reals_{+}$
}   \OUTPUT{$\hat{y} \in \mathcal{Y}.$}
    \INIT  {$\alpha_0=\infty,\beta_0=0,$ Empty queue $\mathcal{Q}$, $\hat{y}=\emptyset$.$\lambda\gets \lambda_0$}
    \State $\text{ADD}(\mathcal{Q},(\alpha,\beta,0))$
    \While{$\mathcal{Q}\ne\emptyset$} 
    \State $(\alpha,\beta,s)\gets \text{Dequeue}(\mathcal{Q})$
    \If {$\beta\ne0$}
        \State $\lambda\gets \frac{1}{\sqrt{\alpha\beta}}$ 
    \EndIf
    \If {$s=0$}
        \State $y\gets\mathcal{O}_c(\lambda,\alpha,\beta)$   
    \Else
        \State $y\gets\underline{\mathcal{O}}_c(\lambda,\alpha,\beta)$   
    \EndIf
    \If {$\Phi(y)> \Phi(\hat{y})$}
    \State $\hat{y}\gets y$
    \EndIf
    \If {$y\ne\emptyset$}
    \State $z\gets [h(y)\; g(y)],z'\gets [\lambda g(y)\; \dfrac{1}{\lambda}h(y)]$
    \State $r\gets \left [\sqrt{\lambda h(y) g(y)}\; \sqrt{\frac{1}{\lambda} h(y) g(y)}\; \right ]$
    \If {$z_1=z'_1$}
    \State return $y$
    \ElsIf {$\partial(z)>\partial(z')$}
     \State $K^{1}\gets(\partial(z),\partial(r),1)$
     \State $K^{2}\gets(\partial(r),\partial(z'),0)$
    \Else
     \State $K^{1}\gets(\partial(z'),\partial(r),1)$
     \State $K^{2}\gets(\partial(r),\partial(z),0)$
    \EndIf
     \State $\text{ADD}(\mathcal{Q},K^1)$ $.\text{ADD}(\mathcal{Q},K^2)$ 
    \EndIf
    \State $t\gets t+1$
    \If {$t=T$} \Comment{maximum iteration reached}
    \State \Return $\hat{y}$ 
    \EndIf
    \EndWhile    
    \EndProcedure
  \end{algorithmic}
\end{algorithm}

\begin{theorem}\label{thm:angular_suboptimality} In angular search, described in Algorithm \ref{alg:angular}, at iteration $t$, 
\begin{align*}
\dfrac{\Phi(y^*)}{\Phi(\hat{y}^t)}\le (v_1)^{\frac{4}{t+1}}
\end{align*}
where $\hat{y}^t = \argmax_t y^t$ is the optimum up to $t$, $v_1=\max \left \{\dfrac{\lambda_0}{\partial(y_1)}, \dfrac{\partial(y_1)}{\lambda_0}\right\}$, $\lambda_0$ is the initial $\lambda$ used, and $y_1$ is the first label returned by constrained $\lambda$-oracle.
\end{theorem}
We use $\partial(a)=\frac{a_2}{a_1}$ to denote the slope of a vector.

With proper initialization, we get the following runtime guarantee:
\begin{theorem}\label{thm:angular_running_time}  Assuming $\Phi(y^*)>\phi$, angular search described in algorithm \ref{alg:angular} with $\lambda_0=\dfrac{\hat{G}}{\hat{H}}, \alpha_0=\dfrac{\hat{G}^2}{\phi},\beta_0=\dfrac{\phi}{\hat{H}^2}$, finds an $\epsilon$-optimal solution, $\Phi(y)\ge(1-\epsilon)\Phi(y^*)$, in $T$ queries and $O(T)$ operations, where $T=4\log\left(\dfrac{\hat{G}\hat{H}}{\phi}\right)\cdot \dfrac{1}{\epsilon}$, and $\delta$-optimal solution, $\Phi(y)\ge\Phi(y^*)-\delta$,  in $T'$ queries and $O(T')$ operations, where $T'= 4\log\left(\dfrac{\hat{G}\hat{H}}{\phi}\right)\cdot \dfrac{\Phi(y^*)}{\delta}$.
\end{theorem}

 \begin{figure}[H]
\centering     
\includegraphics[trim = 60mm 0mm 30mm 0mm, clip, 
width=180mm]
{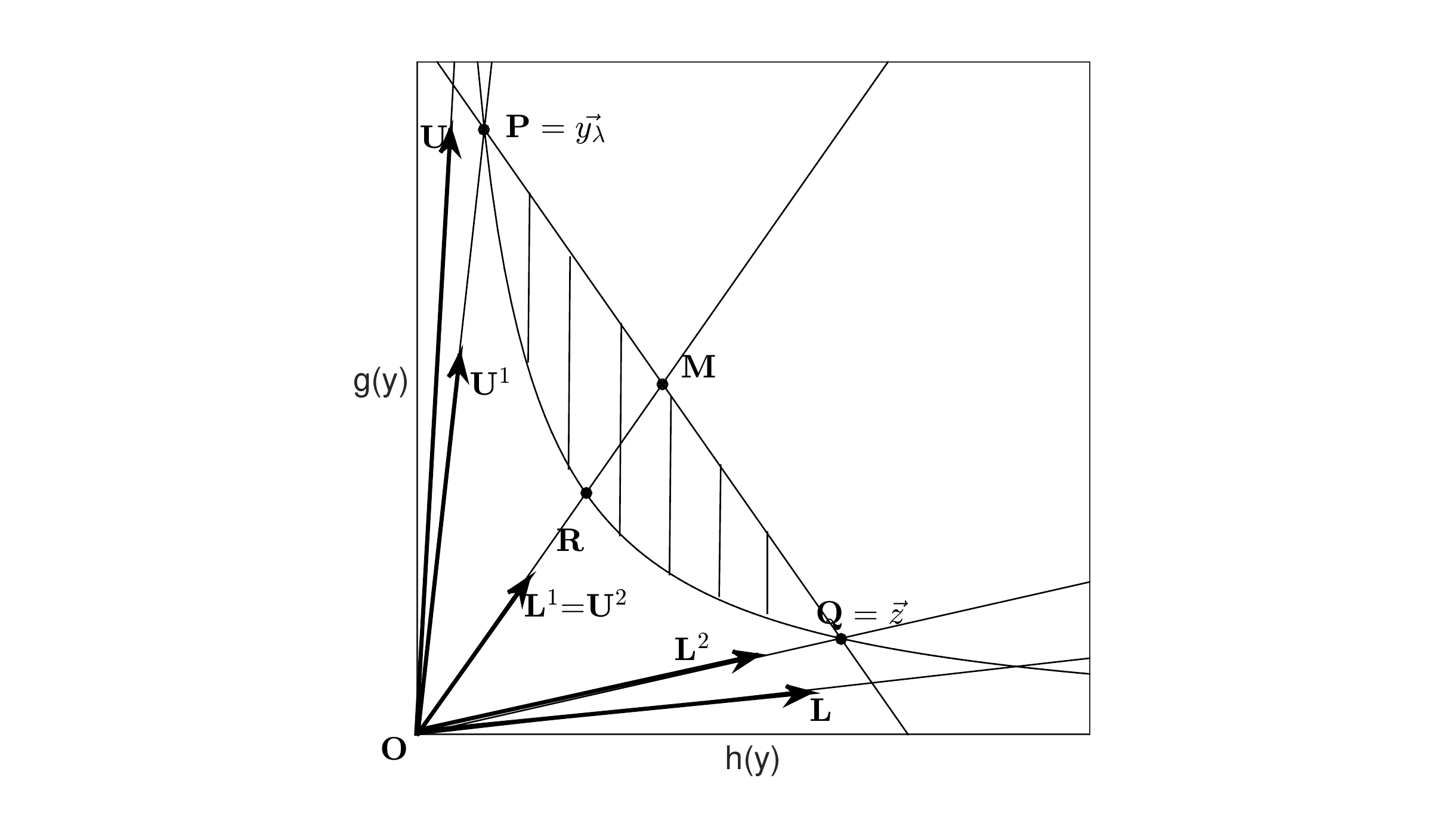}
\caption{Split procedure.}
\label{fig:split_cor}
\end{figure}

Here we give an overview of the algorithm with an illustration in Figure \ref{fig:split_cor}. The constrained $\lambda$-oracle restricts the search space, and this restriction can be illustrated as a linear upper bound $U$ and a lower bound $L$.
The search is initialized with the entire right angle: $U=[0\; \infty]$ and $L=[\infty\; 0]$, and maintains that ${y^*}$ is always between $U$ and $L$.
The constrained $\lambda$-oracle is used with $U,L$ and a certain $\lambda$ to reduce the potential area where ${y^*}$ can reside.
Specifically, the search space is reduced using an angle defined by $U=\overline{OP}$ and $L=\overline{OQ}$.
In the next iteration, the constrained $\lambda$-oracle is invoked with $U^1=\overline{OP}$ and $L^1=\overline{OM}$, and also with $U^2=\overline{OM}$ and $L^2=\overline{OQ}$.
Intuitively, each such query shrinks the search space, and as the search space shrinks, the suboptimality bound improves.
This process is continued until the remaining search space is empty.
The angular search algorithm defines the optimal $\lambda$ and values to be passed to the constrained $\lambda$-oracle.

In Algorithm \ref{alg:angular} each angle is dequeued, split, and enqueued recursively.
Each angle maintains its upper bound from the previous iterations and stops splitting itself and terminate if it is ensured that there exists no label with a larger $\Phi$ value within the angle. 
When the oracle reveals a label with $\Phi(y(\lambda))=c$, we can safely discard all area corresponding to $\{y|\Phi(y)\le c\}$. 
This works as a global constraint which shrinks the search space. Therefore, acquiring a label with a high $\Phi$ value in the early stages facilitate convergence. Thus, it is suggested to use a priority queue, and dequeue the angle with the highest upper bound on $\Phi$. 

\paragraph{Label Cache}

A similar strategy is to have a {\em label cache}, the subset of previous most violated labels, denoted as $\mathcal{C}$.  With the label cache, we can discard a large part of the search space $\{y|\Phi(y)\le \max_{y'\in \mathcal{C}}\Phi({y'})\}$ immediately.
Algorithm \ref{alg:angular} also uses the constrained $\lambda$-oracle to avoid returning previously found labels.
Finally, for $\lambda_0$, we suggest to use $\lambda_0=\frac{\hat{H}}{\hat{G}}$, with $\hat{H}$ calculated from the current weights $w$. 


 
%
 See section \ref{ap:illu} for the illustration of the angular search.
\section{Experiments}
\label{sec:experiments}
 In this section, we validate our contributions by comparing the different behaviors of the search algorithms on standard benchmark datasets, and its effect on the optimization. Specifically, we show that angular search with SGD is not only much faster than the other alternatives, but also enables much more precise optimization with slack rescaling formulation, outperforming margin rescaling. 

Unlike the simple structure used in \cite{bauer2014efficient}, we show applicability to complicated structure. We experiment with multi-label dataset modeled by a Markov Random Field with pair-wise potentials as in \cite{finley2008training}. Since the inference of margin scaling is NP-hard in this case, we rely on linear programming relaxation. Note that this complicates the problem, and  the number of labels becomes even larger (adds fractional solutions). 
Also notice that all of our results in previous sections apply with mild modification to this harder setting.
Two standard benchmark multi-label datasets, yeast\cite{elisseeff2001kernel} (14 labels)
 and RCV1\cite{lewis2004rcv1}, are tested.
 For RCV1 we reduce the data to the $50$ most frequent labels.
 For angular search, we stop the search whenever $\frac{\Phi(\hat{y})}{\Phi(y^*)}>0.999$ holds, to avoid numerical issues.
 \subsection{Comparison of the search algorithms}
 
  \begin{table}[h!]
  \begin{center}
    \begin{tabular}{| l  | c |c|c|}
    \hline
      &  Angular & Bisecting & Sarawagi \\
      \hline            \hline
 \multicolumn{4}{|c|}{Yeast (N=160)} \\      
    \hline
    Success &  22.4\%\ & 16.5\%\ & 16.4\% \\
    Queries per search & 3.8 & 10.3 & 43.2 \\
  Average time (ms) & 4.7 & 3.6 & 18.5 \\
      \hline            \hline
 \multicolumn{4}{|c|}{RCV1 (N=160)} \\      
    \hline
    Success &   25.6\%\  &      18.2\%\    &     18\%\\
    Queries per search &    4.8    &         12.7 \   &          49 \\
 Average time (ms) & 4.4 & 5.2 & 20.9 \\
    \hline    
    
    \end{tabular}\\
  \end{center}
  
  \caption{Comparison of the search algorithm. }\label{tabel_argmax}
\end{table} 
 Table \ref{tabel_argmax} compares the performance of the search in terms of the time spend, the number of queries, and the success percentage of finding the most violating label. 
The cutting-plane algorithm calls the search algorithms to find the most violating label $\hat{y}$, and adds it the active set if the violation is larger than some margin $\epsilon$, i.e.,  $ \Delta(\hat{y},y_i)(1+f(\hat{y})-f(y_i))>\xi_i+\epsilon$.
For cutting-plain optimization, we compare all three algorithms: Angular search, Bisecting search, and Sarawagi and Gupta's \cite{sarawagi2008accurate} (but just used Angular search for the update).
Success percentage is the percentage that the search algorithm finds such a violating label.
As expected from Theorem \ref{loracle_l1}, bisecting and Sarawagi's search miss the violating label in cases where angular search successfully finds one.
This is important for obtaining high accuracy solution. 
For RCV1 dataset, not only is angular search more accurate, but it also uses about 2.6 times less queries than bisecting and 10.1 times less queries than Sarawagi's search.
As for the timing, angular search is 1.18 times faster than bisecting search, and 4.7 times faster than Sarawagi's algorithm. 
 
 In figure \ref{fig:convergence}, we compare the convergence rate and the accuracy of the different optimization schemes using different  search algorithms. Additional plots showing convergence w.r.t.~the number of queries and iterations are in \ref{ap:additional_plot}.  These show that angular search with SGD converges order of magnitude faster.
 
 Table \ref{table:result} shows a performance comparison for the multi-label datasets.
 For RCV1 dataset it shows a slight performance gain, which shows that the benefit of slack rescaling formulation is greater when the label space is large. 
\begin{figure}
\centering     
\begin{subfigure}{\linewidth}
\includegraphics[width=.8\linewidth]{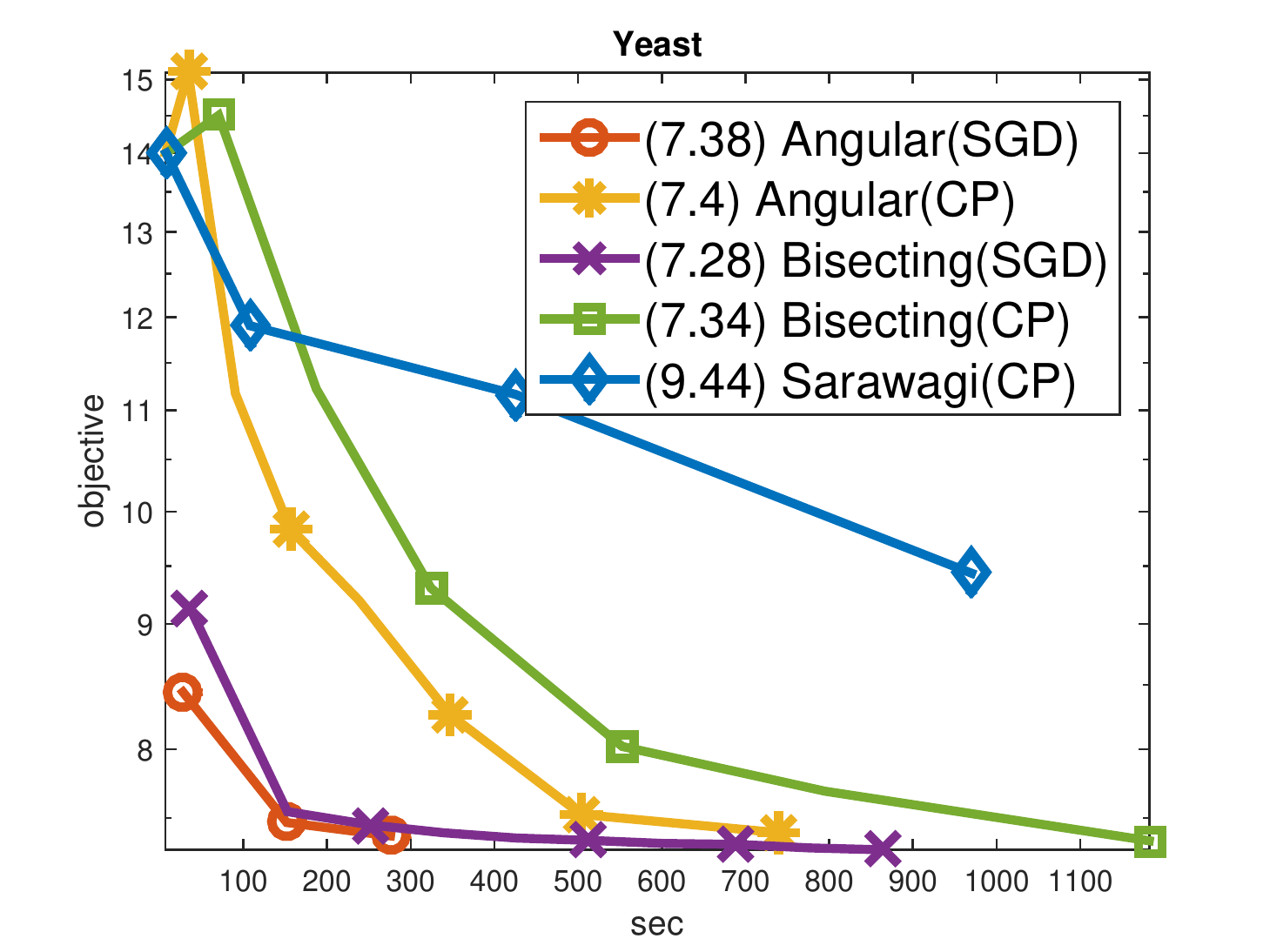}
\caption{Yeast Convergence rate}
\end{subfigure}
\begin{subfigure}{\linewidth}
\includegraphics[width=.8\linewidth]{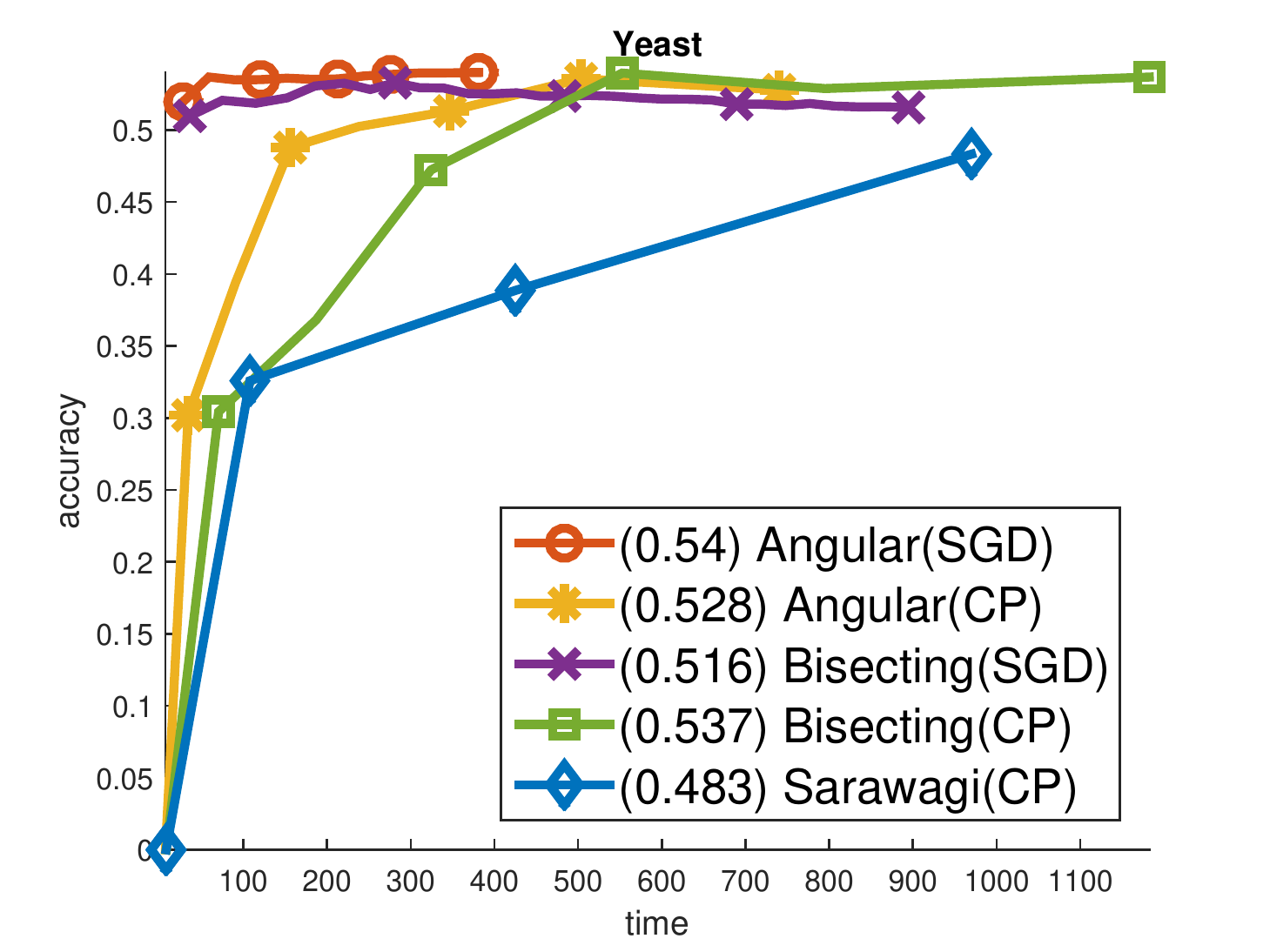}
\caption{Yeast Accuracy}
\end{subfigure}
\caption{Convergence rate and the accuracy. Angular search with SGD is significantly faster and performs  the others. }
\end{figure}
\begin{figure}
\begin{subfigure}{\linewidth}
\includegraphics[width=.8\linewidth]{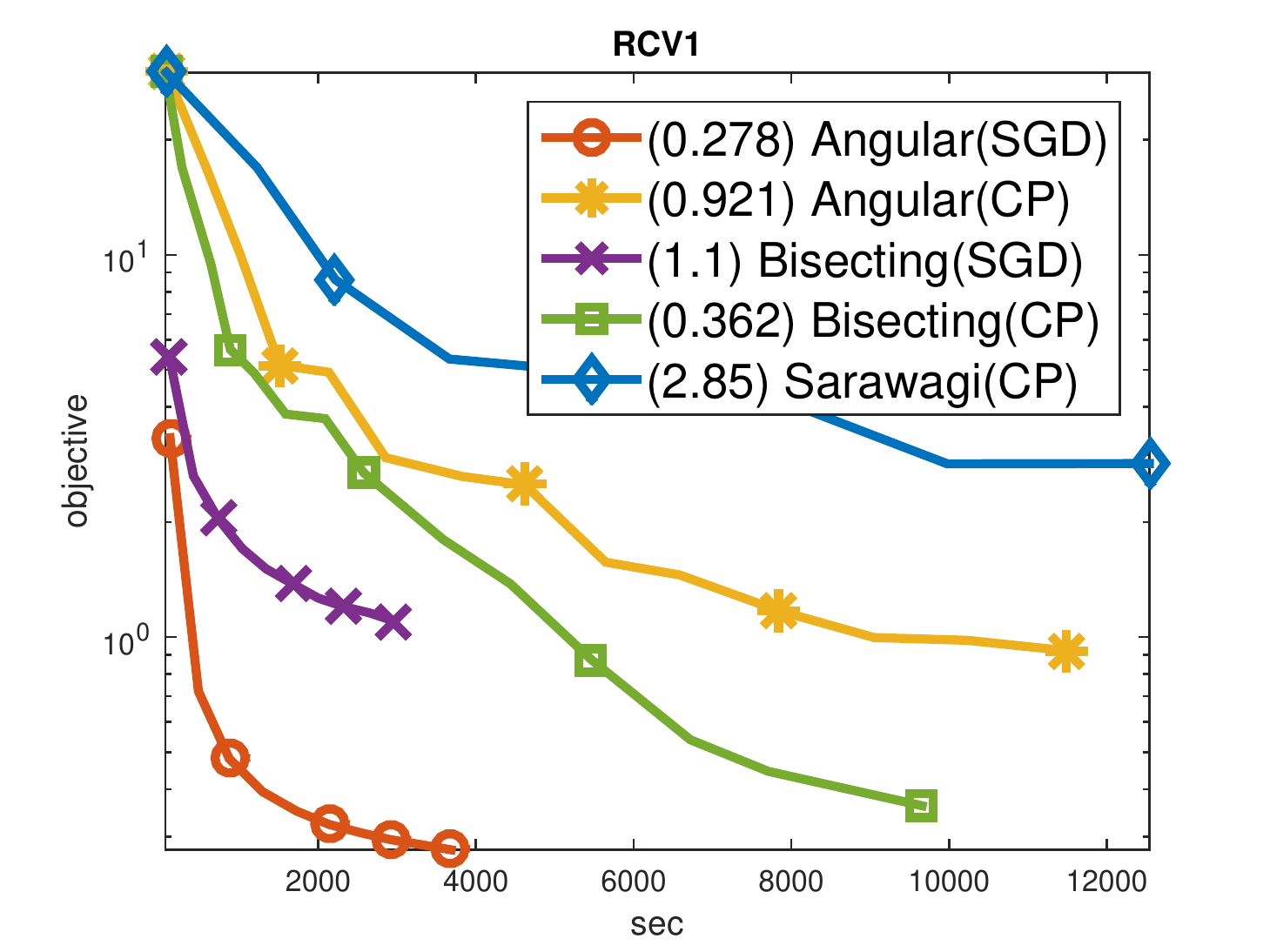}
\caption{RCV1 Convergence rate}
\end{subfigure}
\begin{subfigure}{\linewidth}
\includegraphics[width=.8\linewidth]{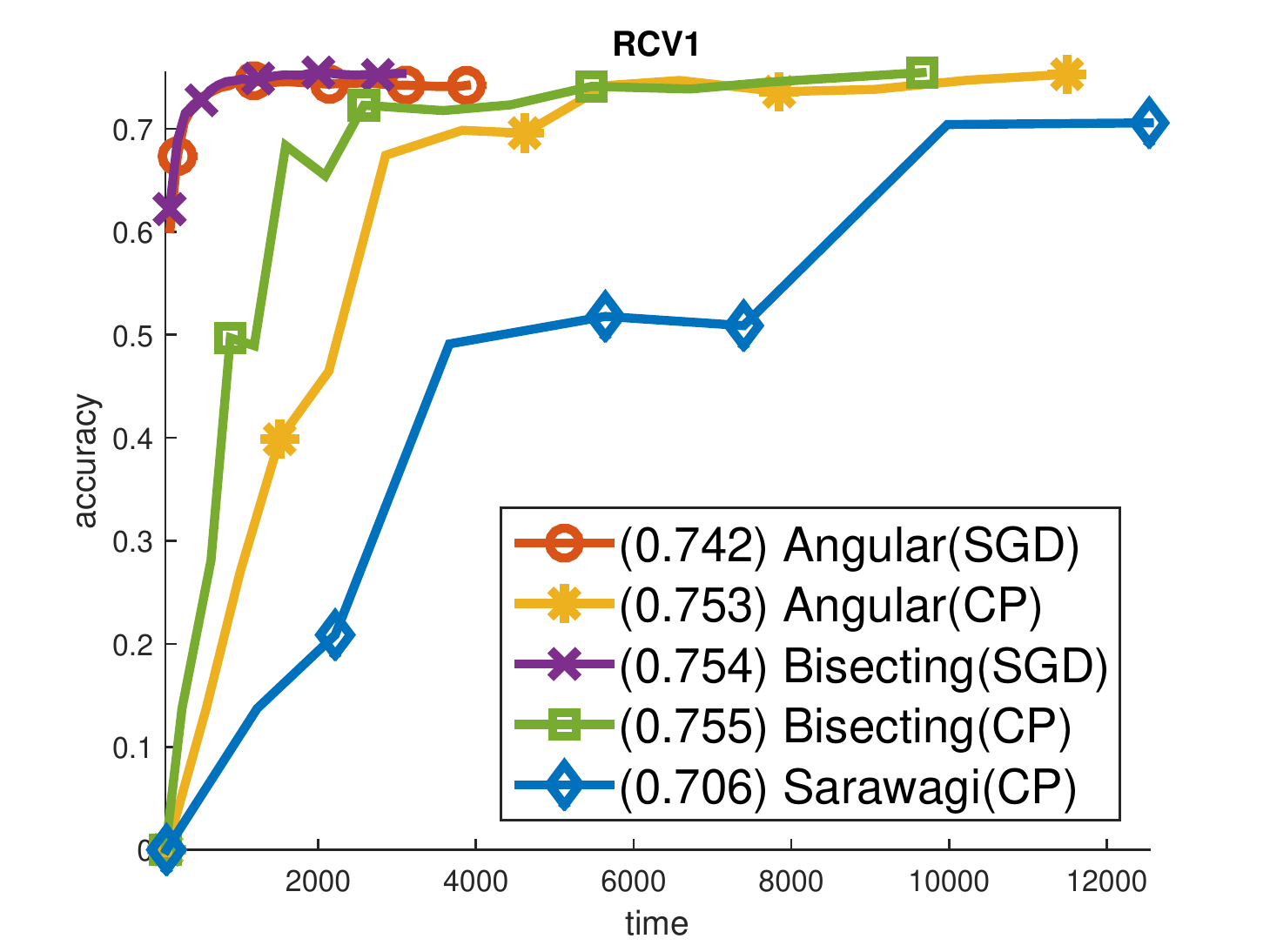}
\caption{RCV1 Accuracy}
\end{subfigure}
\caption{Convergence rate and the accuracy (continued). Angular search with SGD is significantly faster and performs  the others. }
\label{fig:convergence}
\end{figure}

\begin{table}
\centering
Yeast

    \begin{tabular}{| l  | c |c|c|c|}
    \hline
&  Acc & Label loss & MiF1 & MaF1\\
    \hline
    Slack  
      & .54  & .205 &  .661 &  .651\\
     Margin 
      &  .545& .204 &  .666 &  .654\\
     \hline
    \end{tabular}

    RCV1
    
        \begin{tabular}{| l  | c |c|c|c|}
    \hline
&  Acc & Label loss & MiF1 & MaF1\\
    \hline
    Slack  
      & .676  & .023 &  .755 &  .747\\
     Margin 
      &  .662 & .023 &  .753 &  .742\\
     \hline
    \end{tabular}
\caption{Results on Multi-label Dataset with Markov Random  Field.}\label{table:result}
\end{table}

\subsection{Hierarchical Multi-label Classification}

 We further experimented on problem of hierarchical multi-label classification \cite{cai2004hierarchical}. 
 In hierarchical multi-label classification, each label $y$ is a leaf node in a given graph, and each label $y$ shares ancestor nodes. It can be described as a graphical model where a potential of a multi-label $Y=\{y_1,\dots,y_k\}$ is the sum of all potentials of its ancestors, i.e., $\Phi(Y)=\sum_{n\in \bigcup_{n\in Y} Anc(n)} \Phi(n)$. We extracted 
                    1500 instances with dimensionality 17944 with a graph structure of 156 nodes with 123 label from 
           SWIKI-2011. SWIKI-2011 is a multi-label dataset of wikipedia pages from 
LSHTC competition\footnote{http://lshtc.iit.demokritos.gr/}. We used 750 instances as training set, 250 instances as holdout set, and 750 instances as test set. The Hamming distance is used as label loss. We show that slack rescaling in such large label structure is tractable and outperforms margin rescaling.

  \begin{table}[h!]
\centering
    \begin{tabular}{| l  | c |c|c|c|}
    \hline
&  Acc & Label loss & MiF1 & MaF1\\
    \hline
    Slack  
      & .3798 & .0105 &  .3917 &  .3880\\
     Margin 
      &  .3327& .0110 &  .3394 &  .3378\\
     \hline
    \end{tabular}
  \caption{Result on hierarchical multi-label dataset}
\end{table}

\section{Proof of Limitation of Constrained $\lambda$-oracle}\label{sec:limitation2}
In this section, we give proof for the Theorem \ref{loracle_l2}.
\begin{reptheorem}{loracle_l2}
Any search algorithm accessing labels only through a $\lambda$-oracle with any number of linear constraints cannot find $y^*$ using less than $M$ iterations in the worst case, where $M$ is the number of labelings.
\proof
 We show this in the perspective of a game between a searcher and an oracle. At each iteration, the searcher queries the oracle with $\lambda$ and the search space denoted as $\mathcal{A}$, and the oracle reveals a label according to the query. And the claim is that with any choice of $M-1$ queries, for each query the oracle can either give an consistent label or indicate that there is no label in $\mathcal{A}$ such that after $M-1$ queries the oracle provides an unseen label $y^*$ which has bigger $\Phi$ than all previously revealed labels.

 Denote each query at iteration $t$ with $\lambda_t>0$ and a query closed and convex set $\mathcal{A}_t\subseteq\reals^2$
, and denote the revealed label at iteration $t$ as $y_t$. We will use   $y_t=\emptyset$ to denote that there is no label inside query space $\mathcal{A}_t$. Let $\mathcal{Y}_t=\{y_{t'}|t'<t\}$.

 Algorithm \ref{alg:M_iter} describes the pseudo code for  generating such $y_t$. The core of the algorithm is maintaining a rectangular area $\mathcal{R}_t$ for each iteration $t$ with following properties. Last two properties are for $y_t$. 

\begin{enumerate}
 \item $\forall t'<t,\forall y\in \mathcal{R}_t, \Phi(y)>\Phi(y_{t'})$.
 \item $\forall t'<t,\forall y\in \mathcal{R}_t\cap \mathcal{A}_{t'}, h(y_{t'})+\lambda_{t'} g(y_{t'})> h(y)+\lambda_{t'} g(y)$.
 \item  $\mathcal{R}_t\subseteq \mathcal{R}_{t-1}$.
 \item $\mathcal{R}_t$ is a non-empty open set.
 \item $y_t\in \mathcal{R}_t\cap \mathcal{A}_t$
 \item $y_t=\argmax_{y\in \mathcal{Y}_{t}\cap \mathcal{A}_t} h(y)+\lambda_{t} g(y)$.
 \end{enumerate}

Note that if these properties hold till iteration $M$, we can simply set $y^{*}$ as any label in $\mathcal{R}_M$ which proves the claim. 

First, we show that property 4 is true. $\mathcal{R}_0$ is a non-empty open set.  Consider iteration $t$, and assume $\mathcal{R}_{t-1}$ is a non-empty open set. Then $\tilde{R}$ is an open set since $\mathcal{R}_{t-1}$ is an open set.
There are two unknown functions,  $Shrink$ and $FindRect$. 
For open set $A\subseteq\reals^2,y\in \reals^2$, let $Shink(A,y,\lambda)=A-\{y'|\Phi(y')\le\Phi(y) $ or $h(y')+\lambda g(y')\ge h(y)+\lambda g(y)\}$. Note that $Shrink(A,y,\lambda)\subseteq A$, and $Shrink(A,y,\lambda)$  is an open set. Assume now that there exists a $y$ such that $Shrink(\mathcal{R}_{t-1},y,\lambda_t)\ne \emptyset$ and $FindPoint(\mathcal{R}_{t-1},\lambda_t)$  returns such $y$. Function $FindPoint$ will be given later. 
$FindRect(A)$ returns an open non-empty rectangle inside $A$. Note that $Rect(A)\subseteq\ A$, and since input to $Rect$ is always a non-empty open set, such rectangle exists. Since $\mathcal{R}_0$ is non-empty open set, $\forall t,\mathcal{R}_t$ is a non-empty open set.

Property 3 and 5 are easy to check. Property 1 and 2 follows from the fact that $\forall t\in \{t|y_t\ne \emptyset\},\forall t'>t, \mathcal{R}_{t'}\subseteq Shrink(\mathcal{R}_{t-1},y_t,\lambda_{t-1})$.

Property 6 follows from the facts that if $\mathcal{Y}_{t-1}\cap\mathcal{A}_t\neq \emptyset$, $\tilde{\mathcal{R}}=0\implies \mathcal{R}_{t-1} \subseteq \{y| h(y)+\lambda_{t} g(y)>  h(\tilde{y})+\lambda_{t} g(\tilde{y}) $ and $ y\in\mathcal{A}_t\}$, otherwise $\mathcal{Y}_{t-1}\cap\mathcal{A}_t= \emptyset$, and $\mathcal{R}_{t-1} \subseteq \mathcal{A}_t$.

\begin{algorithm}
  \caption{Construct a consistent label set $\mathcal{Y}$.   }
\label{alg:M_iter}
  \begin{algorithmic}[1]
    \INPUT {$\{ \lambda_t, \mathcal{A}_t\}_{t=1}^{M-1},  \lambda_t>0, \mathcal{A}_t\subseteq\reals^2, \mathcal{A}_t $is closed and convex region. 
\OUTPUT {$\{y_t\in\reals^2\}_{t=1}^{t=M-1},y^*\in \reals^2$} 
}
    \INIT  {$\mathcal{R}_0=\{(a,b)|0< a , 0<
  b\}$},$\mathcal{Y}_{0}=\emptyset$.
    \For {$t=1,2,\dots,M-1$ }
     \If{$ \mathcal{Y}_{t-1}\cap\mathcal{A}_t= \emptyset$}  
\State $\tilde{y}=\argmax_{y\in \mathcal{Y}_t}  h(y)+\lambda_{t} g(y).$
\State $\tilde{\mathcal{R}}=\mathcal{R}_{t-1} \cap \{y| h(y)+\lambda_{t} g(y)<  h(\tilde{y})+\lambda_{t} g(\tilde{y}) $ or $ y\notin\mathcal{A}_t\}$.
     \Else 
     \State $\tilde{y}=\emptyset$, $\tilde{\mathcal{R}}=\mathcal{R}_{t-1}-\mathcal{A}_t$.
     \EndIf
     \If{$\tilde{\mathcal{R}}\ne\emptyset$}
     \State $y_t=\emptyset$. $\mathcal{R}_t=FindRect(\tilde{\mathcal{R}})$
     \Else 
     \State $y_t=FindPoint(\mathcal{R}_{t-1},\lambda_t)$.
     \State $\mathcal{R}_t=FindRect(Shrink(\mathcal{R}_{t-1},y_t,\lambda_t)).$
     \EndIf
     \If{$y_t\ne \emptyset$}
     \State $\mathcal{Y}=\mathcal{Y}\cup\{y_t\}$.
     \EndIf
    \EndFor
    \State Pick any $y^*\in \mathcal{R}_{M-1}$
  \end{algorithmic}
\end{algorithm}
 
 $FindPoint(A,\lambda)$ returns any $y\in A -\{y\in \reals^2|\lambda y_2=y_1\}$. Given input $A$ is always an non-empty open set, such $y$ exists. $Shrink(\mathcal{R}_{t-1},y,\lambda_t)\ne \emptyset$ is ensured from the fact that two boundaries, $c=\{y'|\Phi(y')=\Phi(y)\}$ and $d=\{h(y')+\lambda g(y')= h(y)+\lambda g(y)\}$ meets at $y.$ Since $c$ is a convex curve, $c$ is under $d$ on one side. Therefore the intersection of set above $c$ and below $d$ is non-empty and also open.
\qed
\end{reptheorem}

\section{Proof for Angular Search}\label{angular_search_proof}
We first introduce needed notations. 
$\partial^{\bot}(a)$ be the perpendicular  slope of $a$, i.e., $\partial^{\bot}(a)=-\frac{1}{\partial(a)}=-\frac{a_1}{a_2}$. For $\mathcal{A}\subseteq\reals^2$, let label set restricted to $A$ as $\mathcal{Y}_A=\mathcal{Y}\cap A,  $ and 
$y_{\lambda,A}=\mathcal{O}(\lambda,A)=\argmax_{y\in\mathcal{Y}, y\in A} h(y)+\lambda g(y)$
$=\argmax_{y\in\mathcal{Y}_A} [y]_1+\lambda [y]_2$.
Note that if $A=\reals^2,$ $y_{\lambda,\reals^2}=y(\lambda)$.
For $P,Q\in \reals^2$, define $\Lambda(P,Q)$ to be the area below the line $\overline{PQ}$, i.e., $\Lambda(P,Q)=\{y\in\reals^2|[y]_{2}-[P]_2\le\partial^{\bot}(Q-P)([y]_{2}-[P]_2)\}$.
  $\Upsilon_\lambda=\{y\in\reals^2| \Phi(y)= [y]_{1}\cdot [y]_{2}\ge\Phi(y_{\lambda,A})\}$ be the area above $C_\lambda$, and $\underline{\Upsilon}_\lambda=\{y\in\reals^2| \Phi(y)= [y]_{1}\cdot [y]_{2}\le\Phi(y_{\lambda,A})\}$ be the area below $C_\lambda$.
 
Recall the \emph{constrained $\lambda$-oracle}  defined in \eqref{eq:constrained_oracle}:
\begin{align*}
  y_{\lambda,\alpha,\beta}=\mathcal{O}_{c}(\lambda,\alpha,\beta)=\max_{y\in\mathcal{Y},\; \alpha h(y)\ge g(y),\; \beta h(y)<g(y)} \mathcal{L}_\lambda(y)  
\end{align*}
where $\alpha,\beta\in\reals_+$ and $\alpha\ge\beta>0$. Let $A(\alpha,\beta)\subseteq\reals^2$ be the restricted search space, i.e., $A(\alpha,\beta)=\{a\in\reals^2|\beta<\partial(a)\le\alpha\}$. Constrained $\lambda$-oracle reveals maximal $\mathcal{L}_\lambda$ label within restricted area defined by $\alpha$ and $\beta$. The area is bounded by two lines whose slope is $\alpha$ and $\beta$. 
 Define a pair $(\alpha,\beta),\alpha ,\beta\in \reals_+,\alpha\ge\beta>0$ as an {\em angle}.  The angular search recursively divides an angle into two different angles, which we call the procedure as a {\em split}. For $\alpha\ge\beta\ge0$, let $\lambda=\dfrac{1}{\sqrt{\alpha\beta}}$, $z=y_{\lambda,\alpha,\beta}$ and $z'=[\lambda [z]_2,\frac{1}{\lambda}[z]_1]$.  Let $P$ be the point among $z$ and $z'$ which has the greater slope (any if two equal), and $Q$ be the other point, i.e., if $\partial(z)\ge \partial(z')$, $P=z$ and $Q=z'$, otherwise $P=z'$ and $Q=z$.  Let $R=\left [\sqrt{\lambda[z]_1\cdot [z]_2}\;\; \sqrt{\frac{1}{\lambda}[z]_1\cdot [z]_2}\;\right ]$. Define split$(\alpha,\beta)$ as a procedure divides $(\alpha,\beta)$ 
into two angles $(\alpha^+,\gamma^+)=(\partial(P),\partial(R))$
and  $(\gamma^+,\beta^+)=(\partial(R),\partial(Q))$.

First, show that $\partial(P)$ and $\partial(Q)$ are in between $\alpha$ and $\beta$, and $\partial(R)$ is  between $\partial(P)$ and $\partial(Q)$. 
\begin{lemma} \label{lemma:angle_range}For each split$(\alpha,\beta)$,
\begin{align*}
\beta\le&\partial(Q)\le\partial(R)\le\partial(P)\le\alpha 
\end{align*}
\proof
$\beta\le\partial(z)\le\alpha$ follows from the definition of constrained $\lambda$-oracle in \eqref{eq:constrained_oracle}.

$\partial(z')=\dfrac{1}{\lambda^2\partial(z)}=\dfrac{\alpha\beta}{\partial(z)}\implies \beta\le\partial(z')\le\alpha\implies\beta\le\partial(Q)\le\partial(P)\le\alpha$.

$\partial(Q)\le\partial(R)\le\partial(P)\iff$
 $\min\left \{\partial(z),\dfrac{1}{\lambda^2\partial(z)}\right \}\le \dfrac{1}{\lambda}\le\max\left \{\partial(z),\dfrac{1}{\lambda^2\partial(z)}\right \}$ from $\forall a,b\in \reals_+,b\le a\implies  b\le\sqrt{ab}\le a$. \qed
\end{lemma}
After each split, the union of the divided angle ($\alpha^+,\gamma$) and ($\gamma,\beta^{+}$) can be smaller than angle $(\alpha,\beta)$. However, following lemma shows it is safe to use ($\alpha^+,\gamma$) and ($\gamma,\beta^{+}$) when our objective is to find $y^*$.
\begin{lemma} \label{lem:shrink_angel}
\begin{align*}
\forall a\in\mathcal{Y}_{A(\alpha,\beta)}, \Phi(a)> \Phi(y_{\lambda,\alpha,\beta}) \implies \beta^+<\partial(a)<\alpha^+ 
\end{align*}
\proof From lemma \ref{lem:restricted_space}, $\mathcal{Y}_{A(\alpha,\beta)}\subseteq\Lambda(P,Q)$. Let $U=\{a\in\reals^2|\partial(a)\ge \alpha_+=\partial(P)\}$,  $B=\{a\in\reals^2|\partial(a)\le \beta_+=\partial(Q)\}$,  and two contours of function $C=\{a\in\reals^2|\Phi(a)= \Phi(y_{\lambda,\alpha,\beta})\}$,  $S=\{a\in\reals^2|\mathcal{L}_\lambda(a)=\mathcal{L}_\lambda(y_{\lambda,\alpha,\beta})\}$. $S$ is the upper bound of $\Lambda(P,Q)$, and  $C$ is the upper bound of $\underline{C}=\{a\in \reals^2|\Phi(a)\le\Phi(y_{\lambda,\alpha,\beta})\}$. $P$ and $Q$ are the intersections of $C$ and $S$. For area of $U$ and $B$, $S$ is under $C$, therefore, $\Lambda(P,Q)\cap U\subseteq \underline{C} $, and $\Lambda(P,Q)\cap B\subseteq \underline{C}$. It implies that $\forall a\in (\Lambda(P,Q)\cap U )\cup (\Lambda(P,Q)\cap B) \implies \Phi(a)\le \Phi(y_{\lambda,\alpha,\beta})$. And the lemma follows from $A(\alpha,\beta)=U\cup B\cup \{a\in \reals^2| \beta^+<\partial(a)<\alpha^+\}$.\qed
\end{lemma}

We  associate a quantity we call a {\em capacity} of an angle, which is used to prove the suboptimality of the algorithm. For an angle $(\alpha,\beta)$, the capacity of an angle $v(\alpha,\beta)$ is
\begin{align*}
  v(\alpha,\beta):=\sqrt{\frac{\alpha}{\beta}}
\end{align*}
Note that from the definition of an angle, $v(\alpha,\beta)\ge1$. First show that the capacity of angle decreases exponentially for each split. 

\begin{lemma} \label{lem_split_decrease}
 For any angle $(\alpha,\beta)$ and its split $(\alpha^+,\gamma^+)$ and  $(\gamma^+,\beta^+)$, 
\begin{align*}
 v(\alpha,\beta)\ge v (\alpha^+,\beta^+)=v(\alpha^+,\gamma^+)^{2}=v(\gamma^+,\beta^+)^{2}
\end{align*}
\proof Assume $\partial(P)\ge \partial(Q)$ (the other case is follows the same proof with changing the role of $P$ and $Q$), then $\alpha^+=\partial(P)$ and $\beta^+=\partial(Q)$. $\partial(Q)=\dfrac{1}{\lambda^2\partial(P)}=\dfrac{\alpha\beta}{\partial(P)},v$ $(\alpha^+,\beta^+)=v(\partial(P),\partial(Q))=\lambda \partial(P)=\dfrac{\partial(P)}{\sqrt{\alpha\beta}}.$ Since $\alpha$ is the upper bound and $\beta$ is the lower bound of $\partial(P)$,  $\sqrt{\dfrac{\beta}{\alpha}} \le v(\partial(P),\partial(Q))\le \sqrt{\dfrac{\alpha}{\beta}}$.  Last two equalities in the lemma are from $v(\partial(P),\partial(R))=v(\partial(R),\partial(Q))=\sqrt{\frac{\partial(P)}{\sqrt{\alpha\beta}}}$  by plugging in the coordinate of $R$.\qed
 \end{lemma}
\begin{lemma}\label{lem:suboptimal__} Let $\mathcal{B}(a)=  \dfrac{1}{4}\left(a+\dfrac{1}{a}\right)^2$ . The suboptimality bound of an angle $(\alpha,\beta)$ with $\lambda=\dfrac{1}{\sqrt{\alpha\beta}}$ is 
\begin{align*} 
\dfrac{\max_{y\in \mathcal{Y}_{A(\alpha,\beta)}}\Phi(y)}{\Phi(y_{\lambda,\alpha,\beta})}
\le 
 \mathcal{B}(v(\alpha,\beta)).
\end{align*}
\proof From lemma \ref{lem:restricted_space}, $\mathcal{Y}_{A(\alpha,\beta)}\subseteq\Lambda(P,Q)=\Lambda(z,z').$  Let $\partial(z)=\gamma$. From \ref{lemma:angle_range}, $\beta\le\gamma\le \alpha$.
Let  $m=\argmax_{a\in \Lambda(z,z')}\Phi(a)$. $m$ is on line $\overline{zz'}$ otherwise we can move $m$ increasing direction of each axis till it meets the boundary $\overline{zz'}$ and  $\Phi$ only increases, thus $m=tz+(1-t)z'$. $\Phi(m)=\max_{t}\Phi(tz+(1-t)z')$. $\dfrac{\partial\Phi(tz+(1-t)z')}{\partial t}=0 \implies t=\dfrac{1}{2}$. $m=\frac{1}{2}[z_1+\lambda z_2\;\; z_2+\frac{z_1}{\lambda}]$.
\begin{align*}
\dfrac{\max_{y\in \mathcal{Y}_{A(\alpha,\beta)}}\Phi(y)}{\Phi(y_{\lambda,\alpha,\beta})}=\dfrac{1}{4}\left (\sqrt{\dfrac{z_1}{\lambda z_2}}+\sqrt{\dfrac{\lambda z_2}{z_1}}\right)^2\\=\dfrac{1}{4}\left (\sqrt{\dfrac{\sqrt{\alpha\beta} }{\gamma}}+\sqrt{\dfrac{\gamma}{\sqrt{\alpha\beta} }}\right)^2
\end{align*} 
Since $v(a)=v\left(\frac{1}{a}\right)$ and $v(a)$ increases monotonically for $a\ge1$,
\begin{align*}
\mathcal{B}(a)\le \mathcal{B}(b)\iff \max\left \{a,\frac{1}{a}\right\} \le \max\left \{b,\frac{1}{b}\right\}
\end{align*}
If $\dfrac{\sqrt{\alpha\beta} }{\gamma}\ge \dfrac{\gamma}{\sqrt{\alpha\beta} }$, then $\dfrac{\sqrt{\alpha\beta} }{\gamma}\le \sqrt{\dfrac{\alpha}{\beta}}$ since $\gamma\ge \beta$. If $\dfrac{\gamma}{\sqrt{\alpha\beta}}\ge \dfrac{\sqrt{\alpha\beta} }{\gamma}$, then $\dfrac{\gamma}{\sqrt{\alpha\beta}}\le \sqrt{\dfrac{\alpha}{\beta}}$ since $\gamma\le \alpha$. Therefore, $\dfrac{\max_{y\in \mathcal{Y}_{A(\alpha,\beta)}}\Phi(y)}{\Phi(y_{\lambda,\alpha,\beta})}
=\mathcal{B}\left (\dfrac{\sqrt{\alpha\beta} }{\gamma}\right)\le \mathcal{B}(v(\alpha,\beta))$.\qed
\end{lemma}

 Now we can prove the theorems.

\begin{reptheorem}{thm:angular_optimal} Angular search described in algorithm \ref{alg:angular} finds optimum $y^*=\argmax_{y\in \mathcal{Y}} \Phi(y) $ at most $t=2M+1$ iteration where $M$ is the number of the labels.
\proof
 Denote $y_{t}, \alpha_t, \beta_t, z_t,z_t', K_t^1,$ and $K_t^2$ for $y,\alpha,\beta,z,z',K^1,$ and $K^2$  at iteration $t$ respectively.  $\mathcal{A}(\alpha_t,\beta_t)$ is the search space at each iteration $t$. At the first iteration $t=1$, the search space contains all the labels with positive $\Phi$, i.e., $\{y|\Phi(y)\ge 0 \}\subseteq\mathcal{A}(\infty,0)$. At iteration $t>1$, firstly, when $y_t=\emptyset$,  the search area $\mathcal{A}(\alpha_t,\beta_t)$ is removed from the search since  $y_t=\emptyset$ implies there is no label inside $\mathcal{A}(\alpha_t,\beta_t)$. Secondly, when $y_t\neq\emptyset$, $\mathcal{A}(\alpha_t,\beta_t)$ is dequeued, and $K_t^1$ and $K_t^2$ is enqueued. 
From lemma \ref{lem:shrink_angel}, at every step, we are ensured that do not loose $y^*$. 
  By using strict inequalities in the constrained oracle with valuable $s$, we can ensure $y_t$ which oracle returns is an unseen label. Note that split only happens if a label is found, i.e., $y_t\ne\emptyset$. Therefore, there can be only $M$ splits, and each split can be viewed as a branch in the binary tree, and the number of queries are the number of nodes. Maximum number of the nodes with $M$ branches are $2M+1$. \qed
\end{reptheorem}

\begin{reptheorem}{thm:angular_suboptimality} In angular search described in algorithm \ref{alg:angular}, at iteration $t$, 
\begin{align*}
\dfrac{\Phi(y^*)}{\Phi(\hat{y})}\le (v_1)^{\frac{4}{t+1}}
\end{align*}
 where $v_1=\max \left \{\dfrac{\lambda_0}{\partial(y_1)}, \dfrac{\partial(y_1)}{\lambda_0}\right\}$, $\lambda_0$ is the initial $\lambda$ used, and $y_1$ is the first label returned by constrained $\lambda$-oracle.
\proof
After $t\ge2^{r}-1$ iteration as in algorithm \ref{alg:angular} where $r$ is an integer, for all the angle $(\alpha,\beta)$ in the queue $Q$, $v(\alpha,\beta)\le (v_1)^{2^{1-r}}$. 
This follows from the fact that since the algorithm uses the depth first search, after $2^{r}-1$ iterations all the nodes at the search is at least $r$. At each iteration, for a angle, the capacity is square rooted from the lemma \ref{lem_split_decrease}, and the depth is increased by one. 
And the theorem follows from the fact that after $t\ge 2^{r}-1$ iterations, all splits are at depth $r'\ge r$, and at least one of the split contains the optimum with suboptimality bound with lemma \ref{lem:suboptimal__}. Thus,
\begin{align*}
\dfrac{\Phi(y^*)}{\Phi(\hat{y})}\le \mathcal{B}\left ((v_1)^{2^{1-r}} \right ) < (v_1)^{2^{2-r}} \le(v_1)^{\frac{4}{t+1}}
\end{align*}
\qed
\end{reptheorem}
\begin{reptheorem}{thm:angular_running_time}  Assuming $\Phi(y^*)>\phi$, angular search described in algorithm \ref{alg:angular} with $\lambda_0=\dfrac{\hat{G}}{\hat{H}}, \alpha_0=\dfrac{\hat{G}^2}{\phi},\beta_0=\dfrac{\phi}{\hat{H}^2}$, finds $\epsilon$-optimal solution, $\Phi(y)\ge(1-\epsilon)\Phi(y^*)$, in $T$ queries and $O(T)$ operations where $T=4\log\left(\dfrac{\hat{G}\hat{H}}{\phi}\right)\cdot \dfrac{1}{\epsilon}$, and $\delta$-optimal solution, $\Phi(y)\ge\Phi(y^*)-\delta$,  in $T'$ queries and $O(T')$ operations where $T'= 4\log\left(\dfrac{\hat{G}\hat{H}}{\phi}\right)\cdot \dfrac{\Phi(y^*)}{\delta}$.
\proof $\Phi(y^*)>\phi\Leftrightarrow  \dfrac{\phi}{\hat{H}^2}  <\dfrac{g(y^*)}{h(y^*)}=\partial({y^*})< \dfrac{\hat{G}^2}{\phi}$. $v_1=\max \left \{\dfrac{\lambda_0}{\partial(y_1)}, \dfrac{\partial(y_1)}{\lambda_0}\right\}$ from Theorem \ref{thm:angular_suboptimality}. Algorithm finds $y^*$ if $\beta\le \partial({y^*})\le \alpha$, thus set  $\alpha= \dfrac{\hat{G}^2}{\phi}$ and $\beta=\dfrac{\phi}{\hat{H}^2}$. Also from the definition of constrained $\lambda$-oracle, $\beta=\dfrac{\phi}{\hat{H}^2}\le\partial(y_1)\le\alpha= \dfrac{\hat{G}^2}{\phi}$. Therefore, $v_1\le\max \left \{\dfrac{\lambda_0}{\partial(y_1)}, \dfrac{\partial(y_1)}{\lambda_0}\right\}$. And the upper bound of two terms equal when $\lambda_0=\dfrac{\hat{G}}{\hat{H}}$, then $v_1\le  \dfrac{\hat{G}\hat{H}}{\phi}$. $\delta$ bound follows plugging in the upper bound of $v_1$, and $\epsilon=\dfrac{\delta}{\Phi(y^*)}$. \qed
\end{reptheorem}


\section{Illustration of the angular search}\label{ap:illu}
Following Figure \ref{fig:angular_illu} illustrates Angular search.
Block dots are the labels from Figure \ref{fig:all_points}. Blue X denotes the new label returned by the oracle. Red X is the maximum point. Two straights lines are the upper bound and the lower bound used by the constrained oracle. Constrained oracle returns a blue dot between the upper and lower bounds. We can draw a line that passes blue X that no label can be above the line.  Then, split the angle into half. This process continues until the $y^*$ is found.   
\begin{figure}[H]
\centering     
\begin{subfigure}{\linewidth}
\includegraphics[width=.7\linewidth]{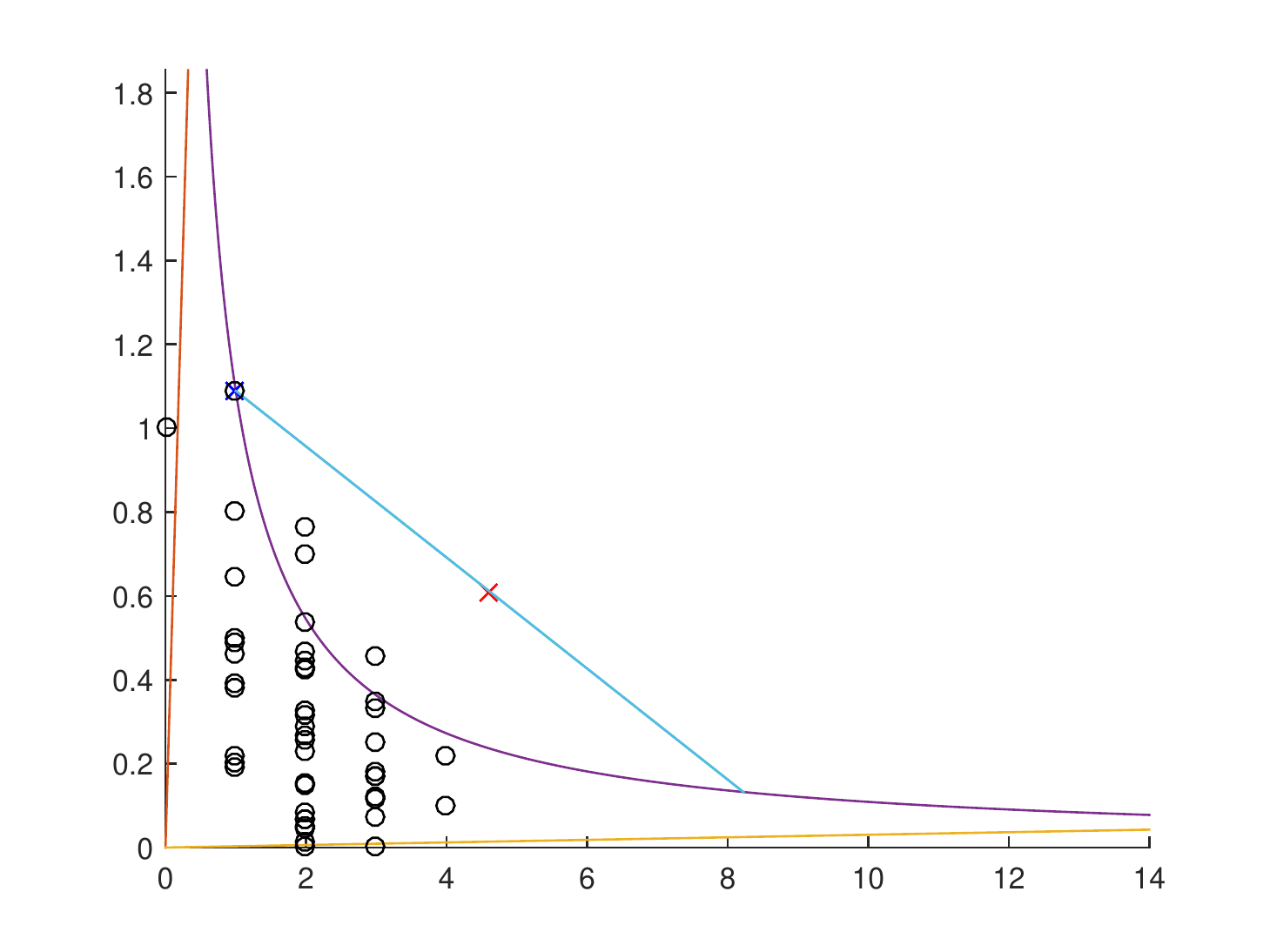}
\caption{Iteration 1}
\end{subfigure}
\begin{subfigure}{\linewidth}
\includegraphics[width=.7\linewidth]{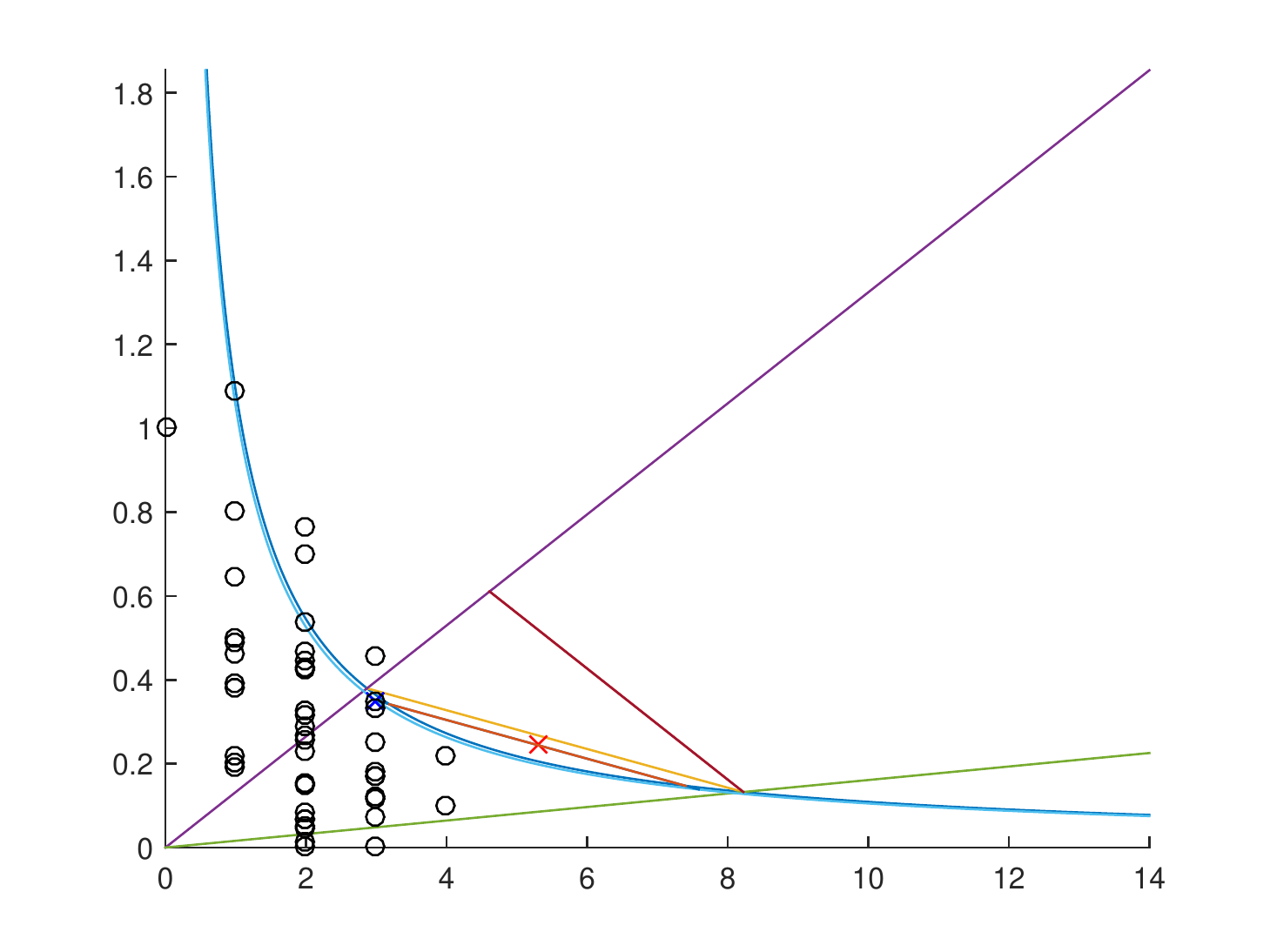}
\caption{Iteration 2}
\end{subfigure}
\caption{Illustration of the Angular search.}\label{fig:angular_illu}
\end{figure}
\begin{figure}
\begin{subfigure}{\linewidth}
\includegraphics[width=.7\linewidth]{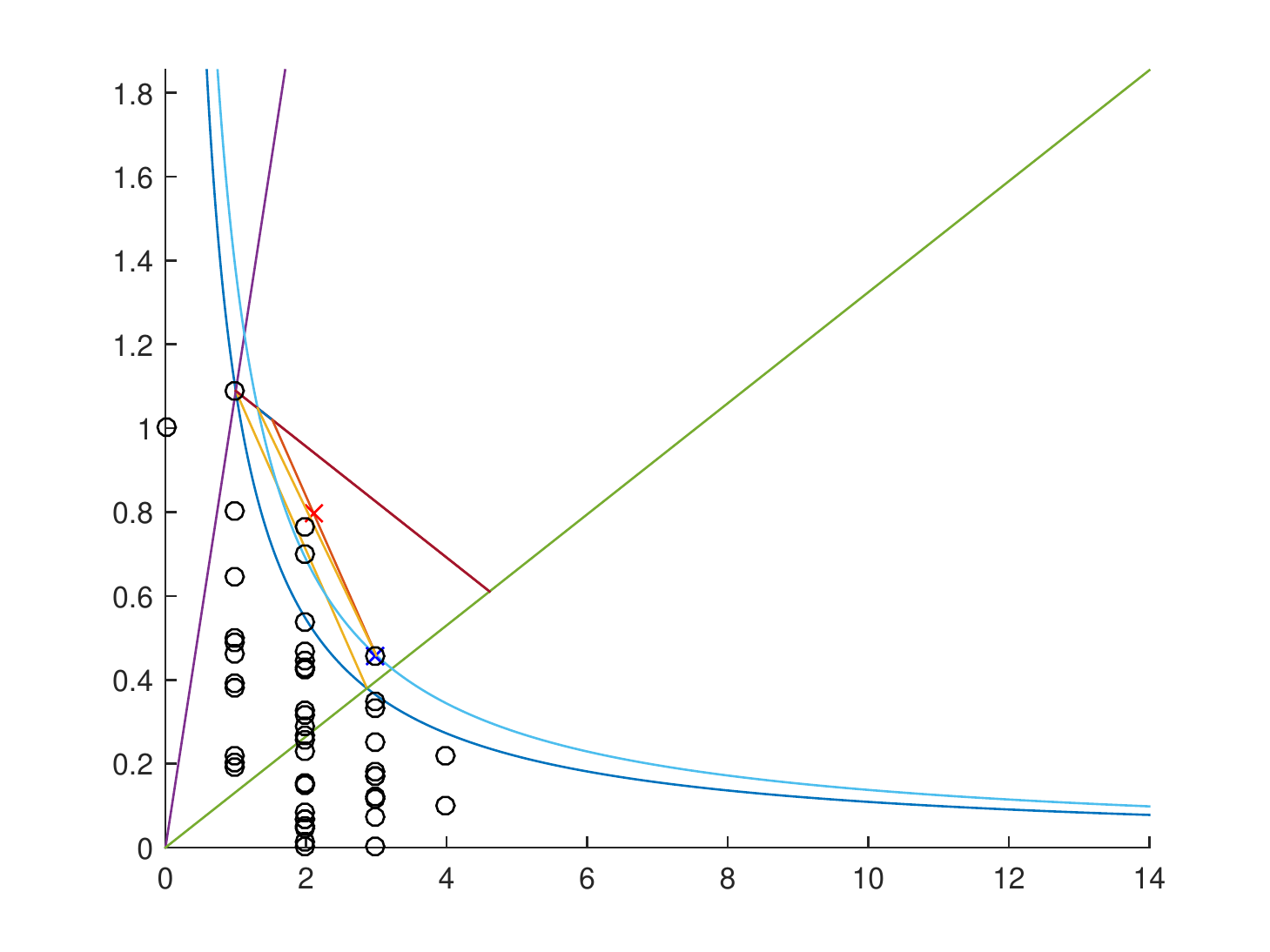}
\caption{Iteration 3}
\end{subfigure}
\caption{Illustration of the Angular search (Continued).}\label{fig:angular_illu}
\end{figure}

\section{Additional Plots from the Experiments}\label{ap:additional_plot}

\begin{figure}[H]
\centering     
\includegraphics[width=.8\linewidth]{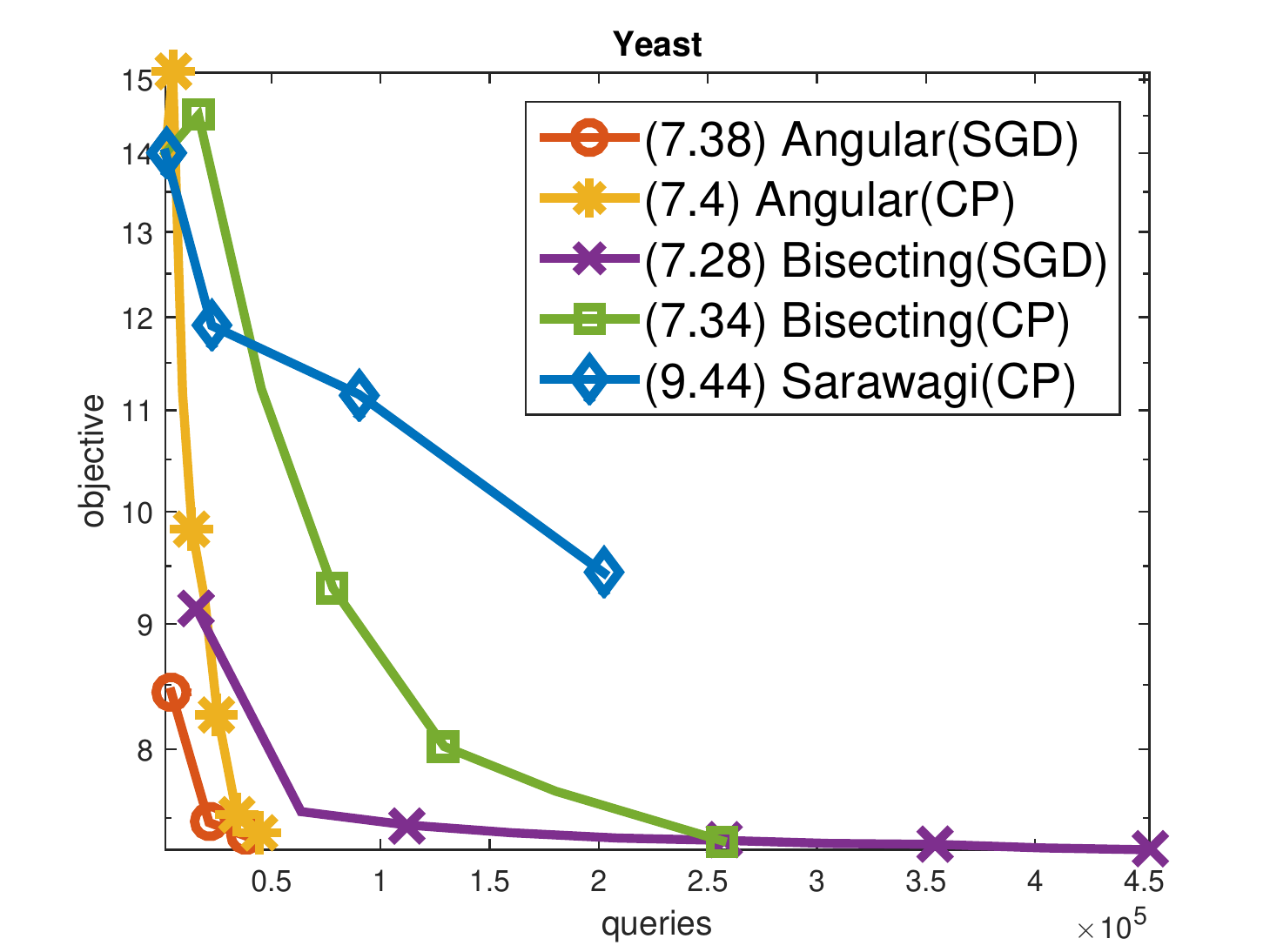}
\caption{Objective vs  queries}
\end{figure}
\begin{figure}[H]
\centering    
\includegraphics[width=.8\linewidth]{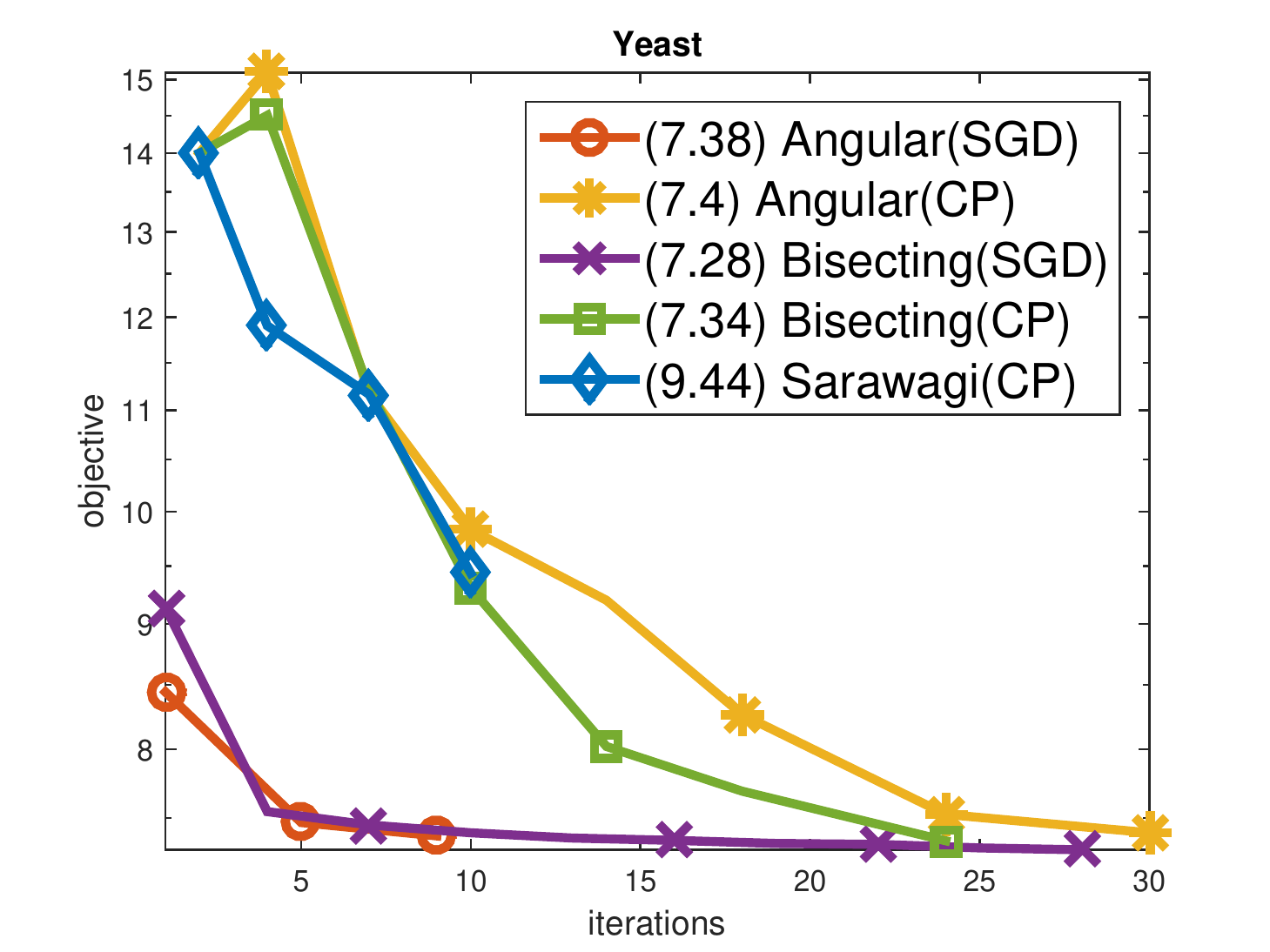}
\caption{Objective vs iterations}
\end{figure}
\begin{figure}[H]
\centering    
\includegraphics[width=.8\linewidth]{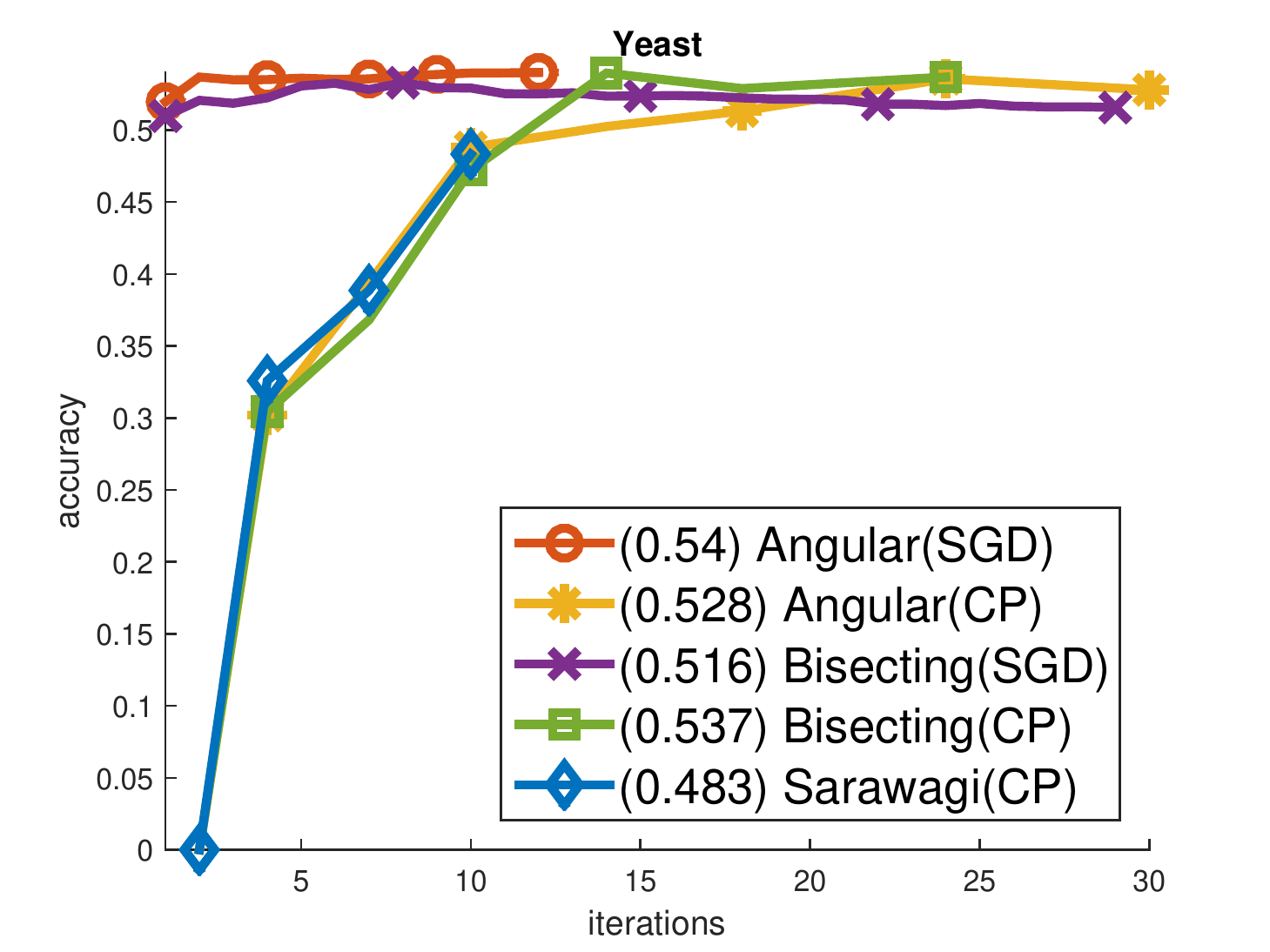}
\caption{Accuracy vs iterations}
\end{figure}
%
\begin{figure}[H]
\centering     
\begin{subfigure}[b]{\linewidth}
\includegraphics[width=.8\linewidth]{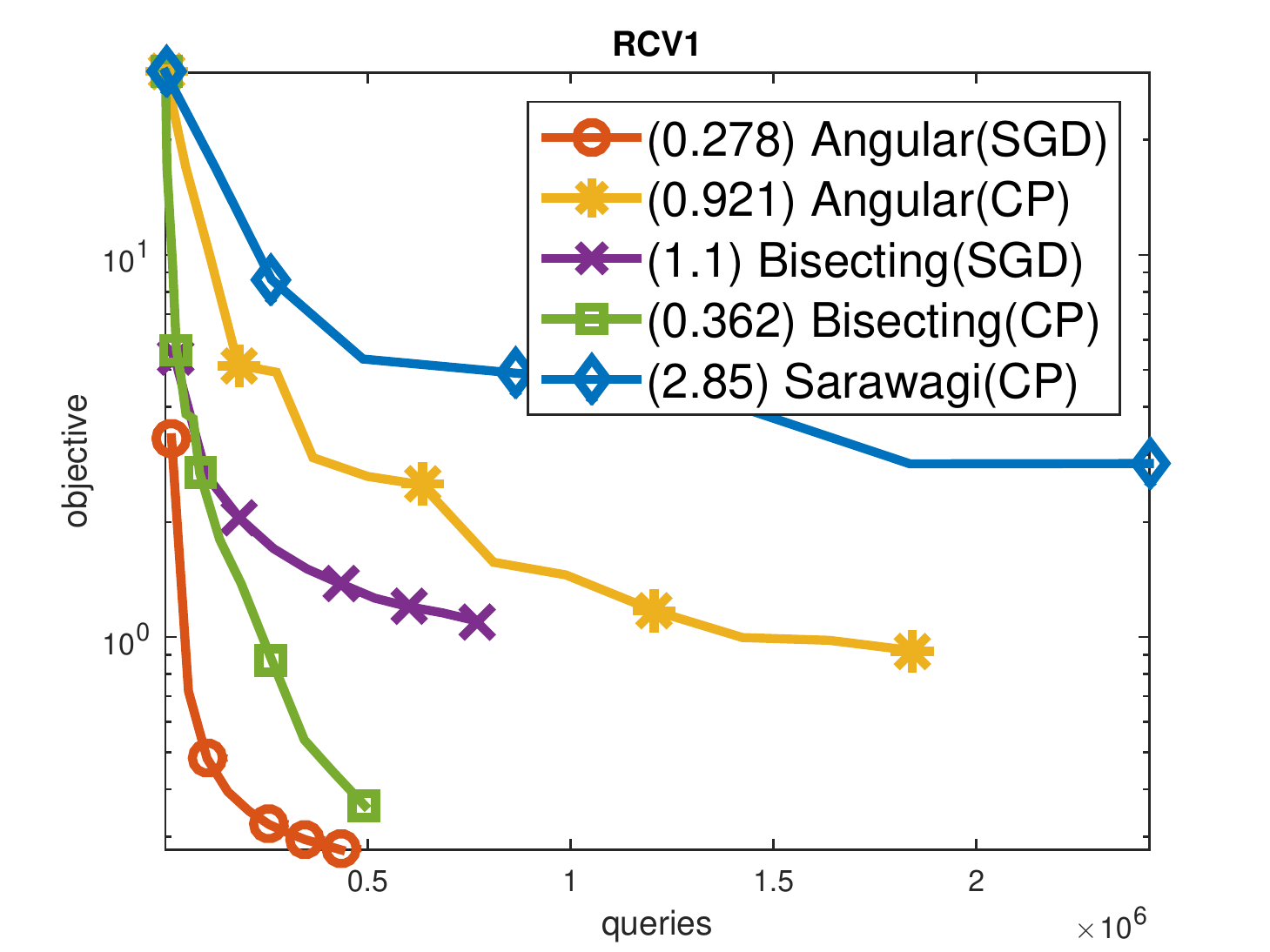}
\caption{Objective vs  queries}
\end{subfigure}
\begin{subfigure}[b]{\linewidth}
\includegraphics[width=.8\linewidth]{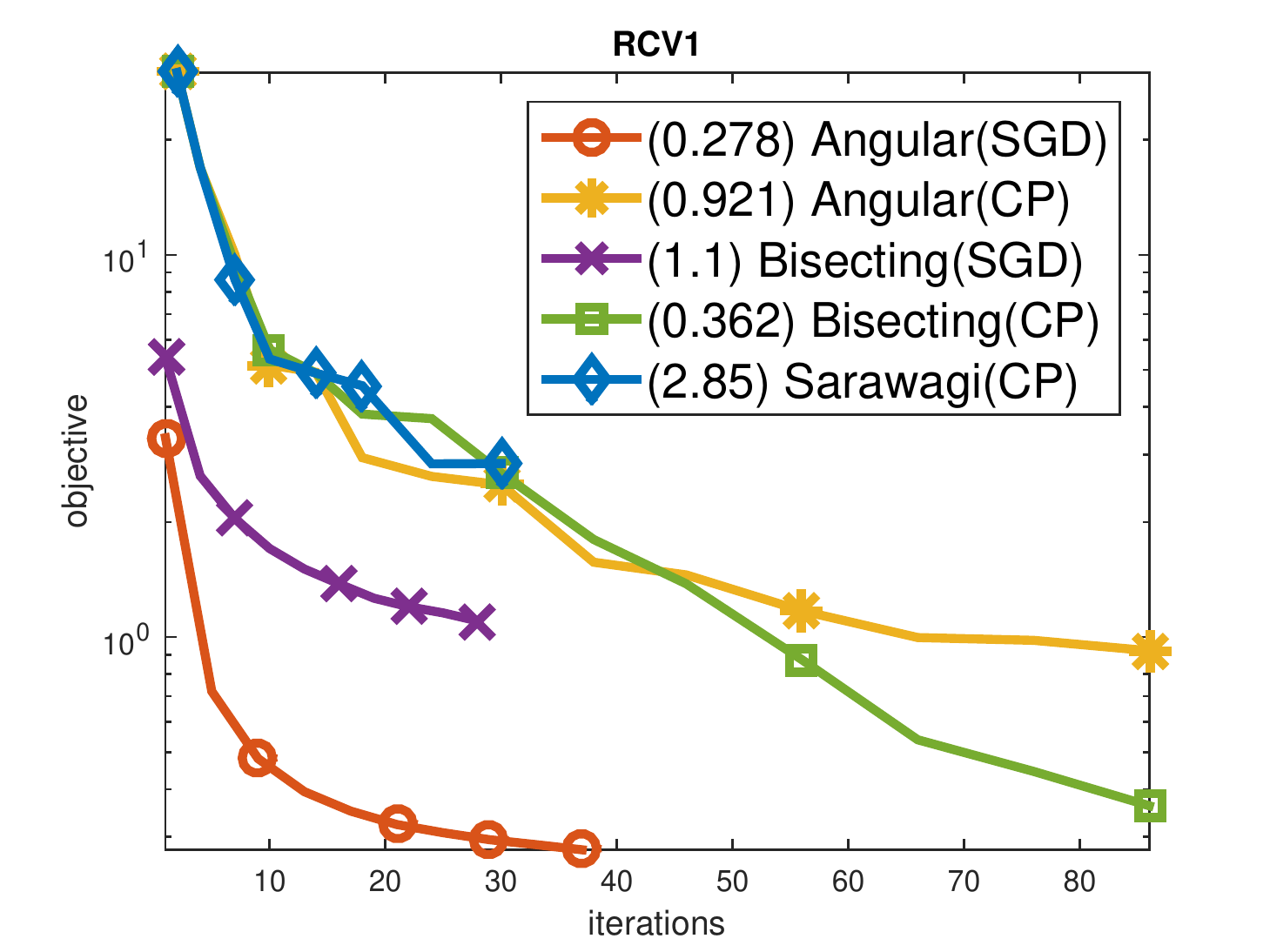}
\caption{Objective vs iterations}
\end{subfigure}
\caption{Additional experiment plot (RCV)}
\end{figure}
\begin{figure}[H]
\centering     
\begin{subfigure}[b]{\linewidth}
\includegraphics[width=.8\linewidth]{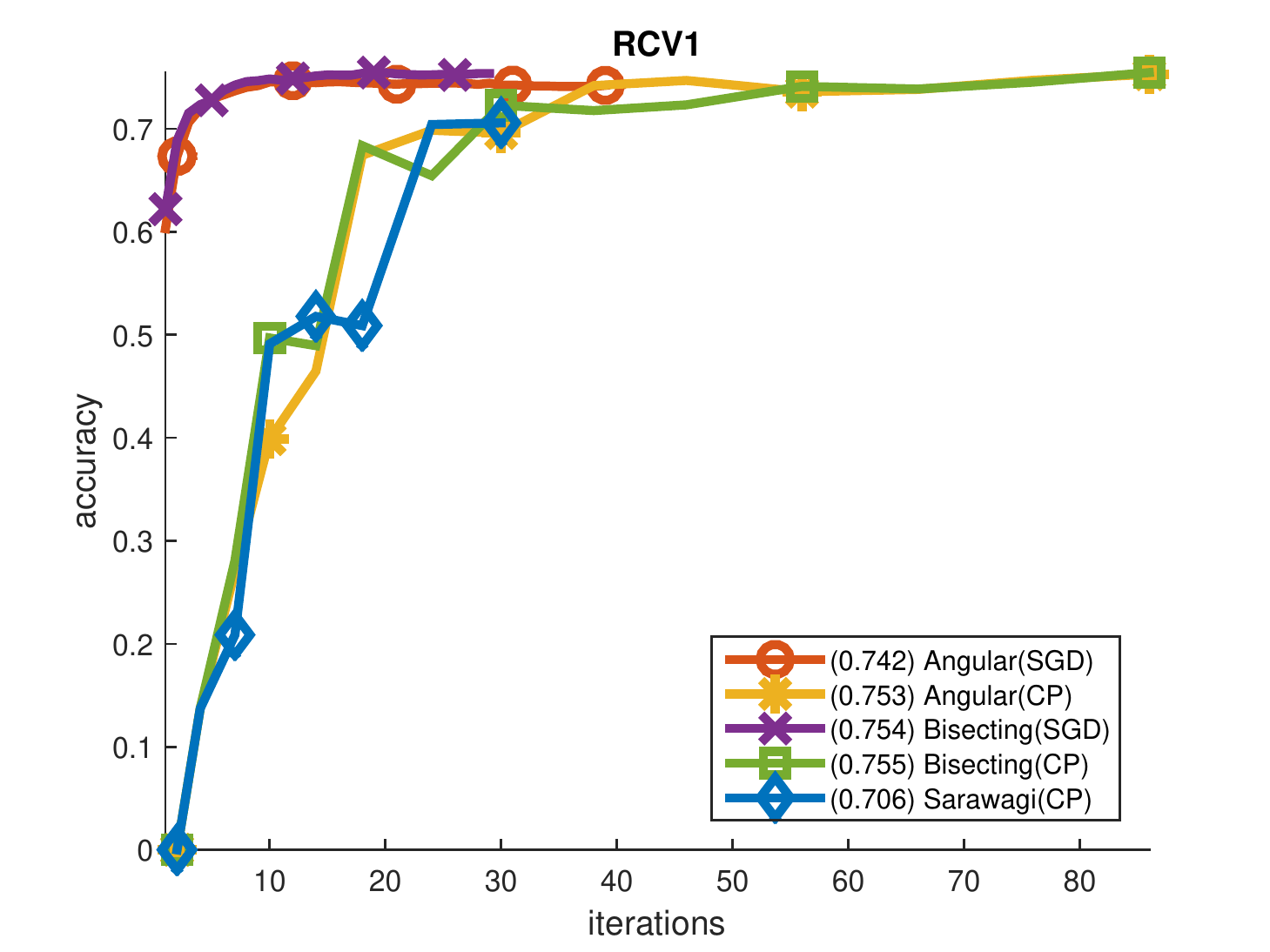}
\caption{Accuracy vs iterations}
\end{subfigure}
\caption{Additional experiment plot (RCV) (continued)}
\end{figure}

\begin{figure}[H]
\centering     
\label{f_QP}\includegraphics[width=.8\linewidth]{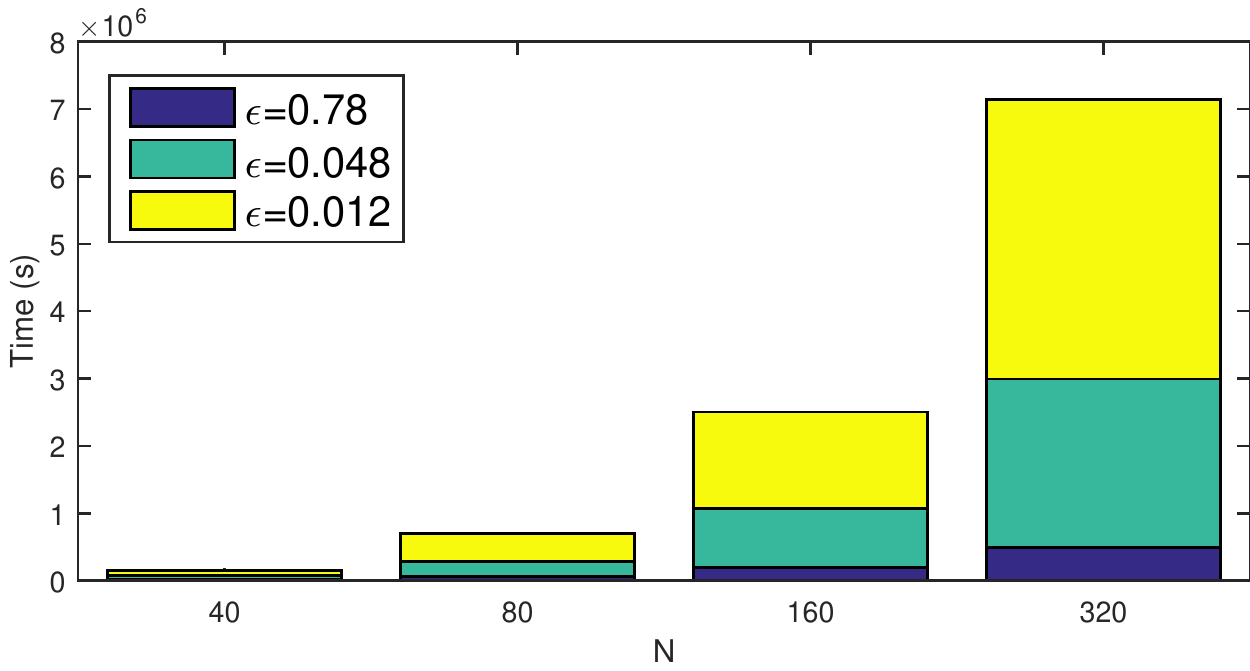}
\caption{Total time spent at QP to reach $\epsilon$ in cutting-plane method for different number of instances. Time spend at QP growth super linearly. }
\end{figure}


\begin{figure}[H]
\centering     
\begin{subfigure}{\linewidth}
\centering  
\includegraphics[width=.5\linewidth]{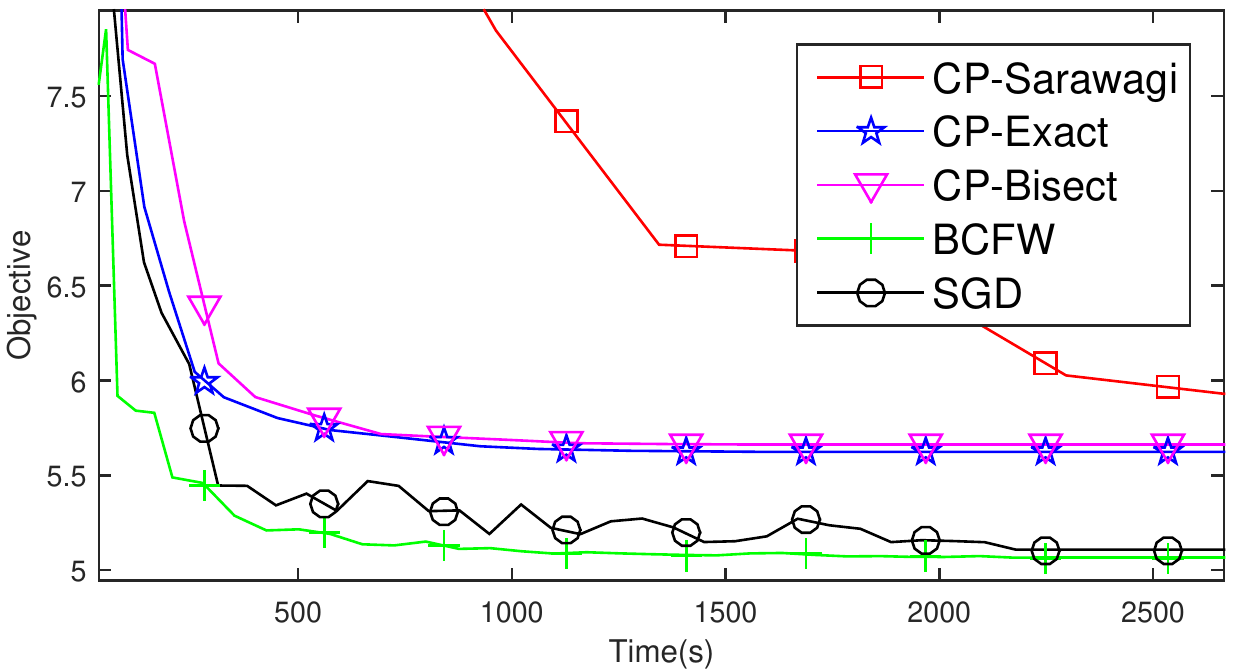}
\caption{Yeast. N=320.}\label{cf_yeast_320}
\end{subfigure}
\begin{subfigure}{\linewidth}
\centering  
\includegraphics[width=.5\linewidth]{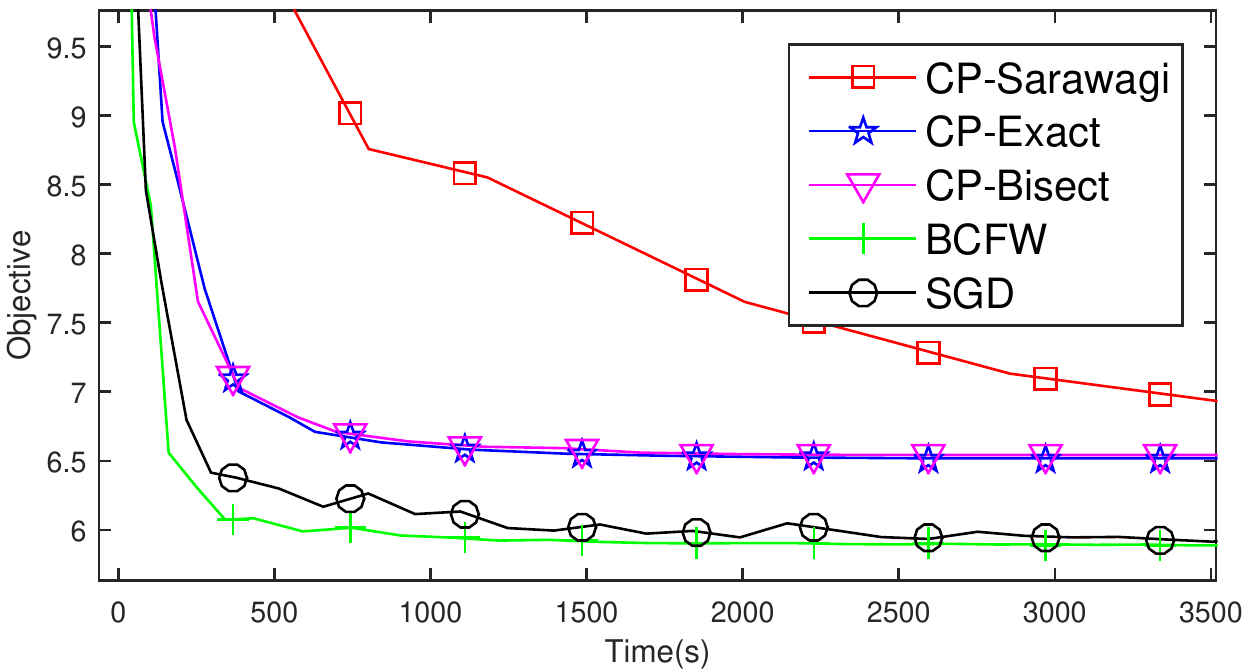}
\caption{Yeast. N=640.}\label{cf_yeast_640}
\end{subfigure}
\begin{subfigure}{\linewidth}
\centering  
\includegraphics[width=.5\linewidth]{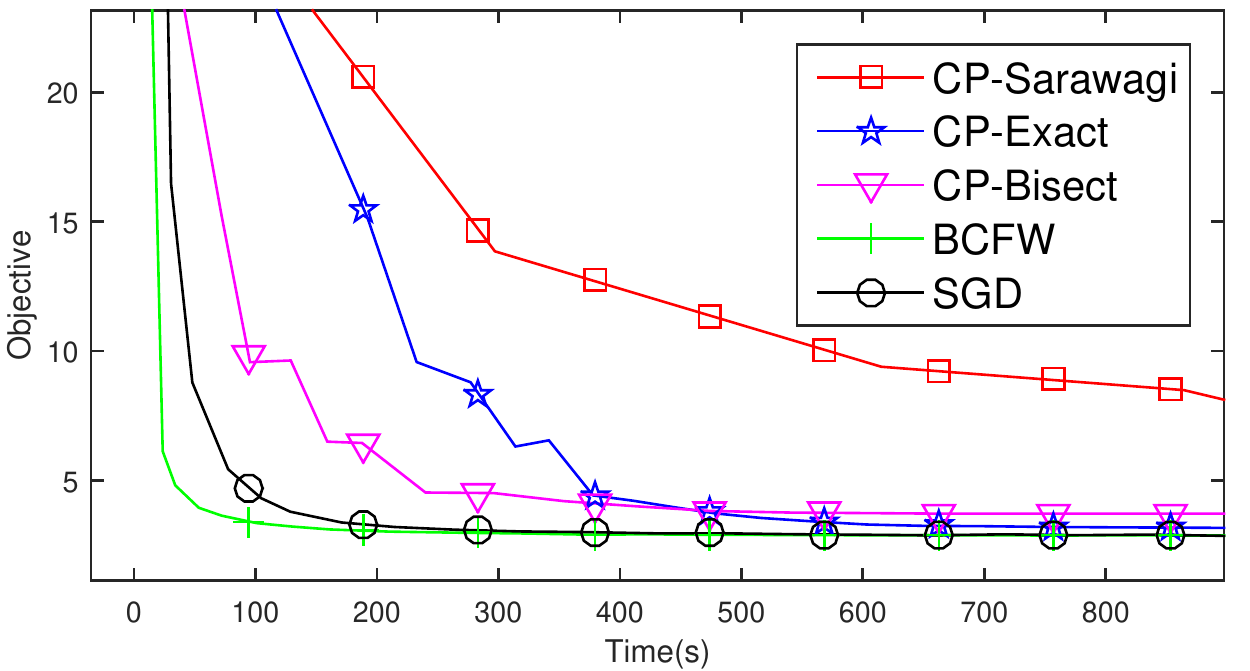}
\caption{Enron. N=80.}\label{cf_enron_80}
\end{subfigure}
\begin{subfigure}{\linewidth}
\centering  \includegraphics[width=.5\linewidth]{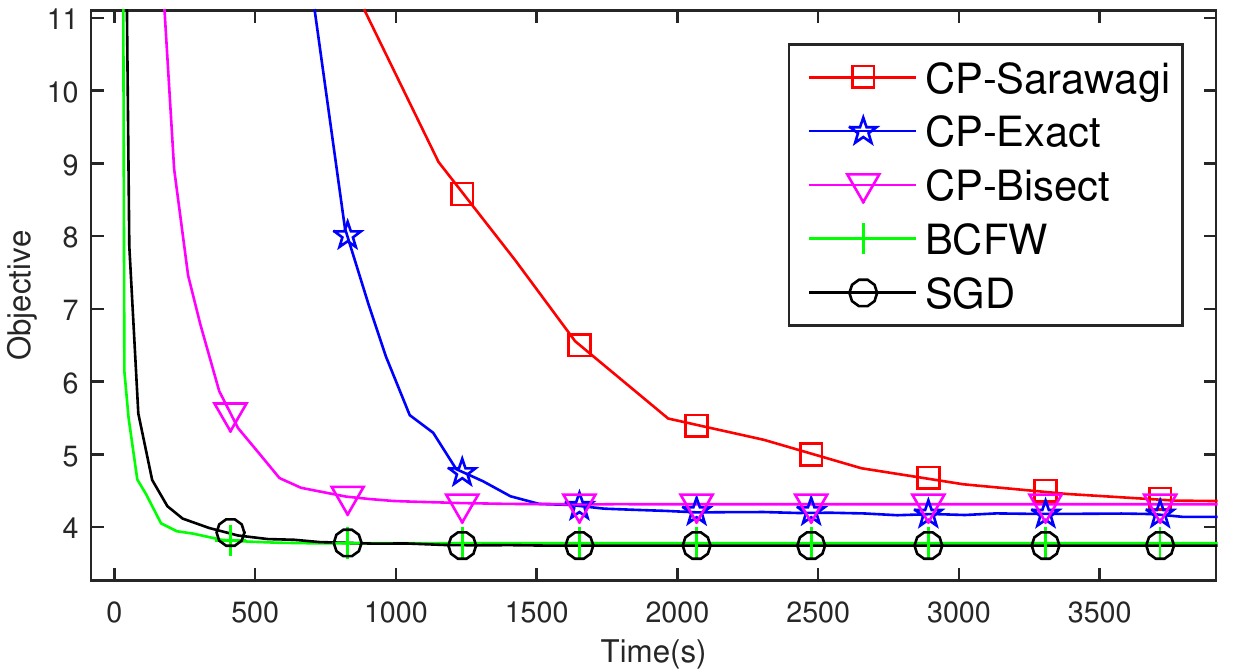}
\caption{Enron. N=160}\label{cf_enron_160}
\end{subfigure}
\caption{Convergence rate for subset of the data: Yeast (\ref{cf_yeast_320} N=320 and \ref{cf_yeast_640} N=640)   and Enron (\ref{cf_enron_80}  N=80 and \ref{cf_enron_160} N=160). BCFW performs the best. As further optimization performances of cutting plane drops due to the large time spent in QP. }
\label{f_convergence}
\end{figure}
\section{Conclusion}

 Margin rescaling is  much more frequently used in
practice.  Here, we show how an oracle for solving an argmax of the
margin rescalied form, or perhaps a slightly modified form (the
constrained-$\lambda$ oracle), is sufficient for also obtaining exact
solutions to the slack-rescale argmax \eqref{eq:slack_argmax}.  This
allows us to train slack-rescaled SVMs using SGD, obtaining better
predictive performance than using margin rescaling.  Prior work in
this direction \cite{sarawagi2008accurate} was only approximate, and
more significantly, only enabled using cutting-plane methods, {\em
  not} SGD, and was thus not appropriate for large scale problems.
More recently, \cite{bauer2014efficient} proposed an efficient
dynamic programming approach for solving the slack-rescaled argmax
\eqref{eq:slack_argmax}, but their approach is only valid for sequence
problems\footnote{The approach can also be generalized to tree-structured
  problems.} and only when using hamming errors, not for more general
structured prediction problems.  Here, we provide a generic method
relying on a simple explicitly specified oracle that is guaranteed to
be exact and efficient even when the number of labels are infinite and allows using SGD and thus working with large scale
problems.

\chapter{Efficient  Inference Method for Bi-criteria Surrogate Loss with $\lambda$-oracle}\label{cha:inference2}

In the previous chapter, we investigated one alternative form of the surrogate loss, slack rescaling,  although its inference was much complicated than margin rescaling, we presented algorithms to do efficient inference with caveats: We showed that the exact inference is not possible with only access to the oracle, then the optimization becomes the only heuristic, and on the other hand, a more powerful oracle was required for the exact inference that is not usually available.

We study the method for efficient inference in the same setting that we access the labels through the $\lambda$-oracle, but in this chapter, we consider  much wider class of surrogate losses:  bi-criteria surrogate loss. We briefly discussed quasi-concavity as an important property of functions that it  is assumed to be natural property in economics that expresses a preference. However, we will show that quasi-concavity is a crucial property that enables an efficient inference with $\lambda$-oracle. For the bi-criteria surrogate loss, an exact optimal search is possible with an LP or convex relaxation over the labels. We prove some results showing the optimality of our inference method as well as empirical improvements in several important real-world tasks. 
 
\section{Problem Formulation}
Problem formulation is similar to the previous chapter. But for concreteness, we review our setting. Recall  our objective function in \eqref{eq:objective_function_struct_svm},
  \begin{align*} 
        \min_{w} \frac{C}{2}\|w\|^2_2
        +\frac{1}{n}\!\sum_i\max_{y\in\mathcal{Y}} \tilde{\Phi}(w,y,y_{i},x_{i})
\end{align*}

 We fix $i$ and $w$ and only consider an argmax operation for an instance. Thus, let    
$\Phi(y)=\tilde{\Phi}(w,y,y_{i},x_{i})$. In the previous chapter, we considered slack rescaling formulation. Then, $\Phi$ was in the form of,
\begin{align*}
\Phi(y)=\Phi_{SR}(y)=h(y)\cdot g(y)
\end{align*}
where $g(y)=L(y,y_i)$ and $h(y)=m(w,x_{i},y,y_i)+1$. 

Now, we investigate the methods for more general form, which is discussed in chapter \ref{sec:quasi_convex_surrogate}.
  We reiterate  the surrogate loss discussed in the chapter for the concreteness.

\begin{repdefinition}{def:gbicriteria}[Bi-criteria Surrogate Loss]
    Consider a surrogate loss $\tilde{L}_i(w,x_i,y_i)$ with a following form,
\begin{align*}
\tilde{L}_{i}(w,x_i,y_i)=\max_{y\in\mathcal{Y}} \Phi_{i}(y)
\end{align*}
where $\Phi_i:\mathcal{Y}\mapsto \mathbb{R}$ is a potential function. 

Then,  
$\tilde{L}_{i}(w,x_i,y_i)$ is a   bi-criteria surrogate loss if following holds true for a bi-criteria functions $\psi_i:\mathbb{R}\times \mathbb{R}_\mapsto \mathbb{R}$,  two loss factor functions $h_i:\mathcal{Y}\mapsto \mathbb{R}$ and $g_i:\mathcal{Y}\mapsto \mathbb{R}$, 
 
\begin{enumerate}
\item
 $\Phi_{i}(y)= \psi_i(h_i(y), g_i(y))$ 
\item In domain $K_0$.
\begin{enumerate}
\item 
 $\psi_{i}$ is a differentiable function.
\item   $\psi_i$ is a quasi-concave function, i.e., $\forall\beta\ge0,  K_\beta$ is convex. 

\item  
   $\psi_i$ is a monotonically  increasing function for each arguments, and $\forall( a, b)\in K_0,\forall \epsilon>0,$  $\psi_{i}(a+\epsilon,b+\epsilon)>\psi_{i}(a,b)$.
\end{enumerate}
\item Function $h_i$ and $g_i$ decomposes into substructures that $\forall \lambda\ge0$, $\argmax_y \lambda g_i(y)+h_i(y)$ is efficiently solvable. 
 \end{enumerate}
where    $K_\alpha=\{(a,b)\mid a\in\mathbb{R}, b\in\mathbb{R}, \psi_i(a,b)\ge \alpha\}$ is $\alpha$ super level
set. 
\end{repdefinition}





 The natural  objective of interest in the chapter is to solve,
\begin{align}
 y^{*}=\argmax_{y\in\mathcal{Y}} \Phi_{i}(y)=\argmax_{y\in\mathcal{Y}} \psi_{i}(h_{i}(y), g_{i}(y)) \label{eq:in2_obj}
\end{align} 
 with the minimum call to the  $\lambda$-oracle with the choice of $h$ and $g$.  

 Later, we show that  objective in \eqref{eq:in2_obj} is too hard to solve, and we focus on the  main objective to  find the optimum in relaxed to the space of the labels $\mathcal{Y}$,
\begin{align}
 \tilde{y}^{*}=\argmax_{y\in\tilde{\mathcal{Y}}} \Phi(y) \label{eq:in2_obj_f}
\end{align} 
where 
 $\tilde{\mathcal{Y}}$ is the convex hull of labels, .i.e,
\begin{align*}
\tilde{\mathcal{Y}}=\left \{ \sum_j a_j y_j  \mid   y_j\in\mathcal{Y},  \sum a_j=1, a_j\ge 0  \right \}
\end{align*}
 Relaxing $\mathcal{Y}$ such way is called LP relaxation or over approximation\cite{joachims2009cutting}, and    $\tilde{y}^{*}$ is widely used in structured prediction.
 
 However, later on, we discuss, how to obtain $y^*$ in the experiment section.

\section{ Convex hull search}  \label{sec:convex_hull_search}
In this section,  we present our inference method, {\em convex hull search}. It is easy to implement since it does not require modification to the oracle. This can be directly applied to a system that uses margin rescaling to use other suggested surrogate losses with a few lines of the code added. In the experiment section, we show that it boosts the performance. Also since it uses the optimal choice of $\lambda$ each iteration (departing from the binary search of the Bisecting search in  \cite{choi2016fast}), it is very fast. Although it is shown that in the previous chapter that there can be no approximation guarantee for this approach, we show that exact optimum can be found in LP relaxed label space, convex space of labels. LP relaxation is extensively used in structured prediction when exact inference is not tractable. It meets the practical need for fast inference and good performance \cite{finley2008training,pmlr-v48-meshi16}.

Although we mentioned that functions $h$ and $g$ can be freely chosen as long as the properties are met, in this chapter,  for conciseness, we only consider the general case when   $h(y)=m(w, x_i,y_i,y),$  the loss regarding the margin, and $g(y)=L(y,y_i)$, the loss regarding the structural loss. In this case, $\lambda$-oracle exactly matches $\lambda$-oracle used in previous chapter \ref{cha:inference}.
 
 As in the previous chapter, since we consider solving \eqref{eq:in2_obj} for fixed $w$. Then, for each label $y$, $h(y)$ and $g(y)$ are fixed. We can consider each label $y\in\mathcal{Y}$ is a  point $[h(y),g(y)]$ in $2$-dimensional plane $\mathcal{Y}\subseteq P=\mathbb{R} \times \mathbb{R}_+$, and $h(y)$ and $g(y)$ are  the coordinates in  $X$-axis and $Y$-axis of $P$ correspondingly. We also use an alternative  notation that $[y]_1=h(y)$ as $X$-coordinate and $[y]_2=g(y)$ as $Y$-coordinate.  
Let $CH=Conv(\mathcal{Y})$ be the convex hull of the labels.
Let $V$ be the vertices of the convex hull $CH$. 
  Let $B$ be the edges on the boundary of $CH$ connecting  $V$.  We assume that there are no three labels lie exactly in $B$, which is highly unlikely and can be removed with a small perturbation of $w$ since in our case $h(y)$ is real-valued function respect to $w$.
Let $\partial(l)$ be the slope of line $l\subset P$, i.e., $\partial l=(h(y)-h(y'))/(g(y)-g(y')),$   $y\neq y'\in l$. For a line $l\subset P$ with $\partial(l)<0$, and a set of points $U\subseteq P$, we denote that {\em $U$ is above the line $l$}, if $\forall u\in U$,  $\exists p\in l,$ $[u]_1=[p]_{1},[u]_2>[p]_{1}$. For $a,b\in P$, denote $\overline{ab}\subset P$ be the  linear line segment in $P$ that ends at $a$ and $b$, and also denote $\overleftrightarrow{ab}\subset P$ be the  linear line  in $P$ that extends $\overline{ab}$ in both ends.


 In the previous chapter, we showed that with the appropriate $\lambda$,  each call to the oracle shrinks the potential space of   $y^*_{SR}$, and an exact or approximation of $y^*_{SR}$ can be efficiently found. Exact inference requires the modification to the oracle, which not only slows the inference considerably but also not always available. An approximation method, Bisecting search, does not require any modification. We extend the approach. Since the only input to the oracle is $\lambda$, choice of $\lambda$ is the main focus of the algorithm.   Bisecting search depends on the binary search over the possible range of $\lambda$. However, the number of oracle calls is unbounded without an approximation guarantee. We extend the Bisecting search for a bi-criteria surrogate loss function discussed in previously and improve the efficiency by providing the optimum choice of $\lambda$, with a guarantee of the number of calls to the oracle to find the optimum in relaxed space of labels.

\subsection{Method}
Since it is shown in the chapter  \ref{sec:lambda_oracle_inexact} that with only access to the $\lambda$-oracle, the exact solution cannot be obtained. 
Define the set of labels which are attainable with  $\lambda$ oracle as $\mathcal{Y}_\lambda$ and its maximum as $y^*_\lambda$, i.e.,
\begin{align*}
y^*_\lambda=\max_{y\in \mathcal{Y}_\lambda} \Phi(y) && \text{where }  \mathcal{Y}_\lambda=\left \{\mathcal{O}(\lambda)  \mid  \forall \lambda \in \mathbb{R}  \right \} 
\end{align*}
We focus on the objective with  the LP relaxed or fractional space of the labels,
\begin{align*}
\tilde{y}^*=\argmax_{y\in  \tilde{\mathcal{Y}}} \Phi(y)
\end{align*}
where $\tilde{\mathcal{Y}}$ is the convex hull of labels, .i.e,
\begin{align*}
\tilde{\mathcal{Y}}=\left \{ \sum_j a_j y_j  \mid   y_j\in\mathcal{Y},  \sum a_j=1, a_j\ge 0  \right \}
\end{align*}

Eventually, we will show that the domain of $\tilde{y}^*$ is restricted by showing that $\tilde{y}^*\in \tilde{\mathcal{Y}}_2$, which is the the set of the convex combination of two labels in  $\mathcal{Y}$, i.e.,
\begin{align*}
\tilde{\mathcal{Y}}_2=\left \{ a y_1+(1-a)y_2 \left|   y_1,y_{2}\in\mathcal{Y},  0\le a\le 1 \right. \right \}
\end{align*}
We will discuss obtaining $y^*$ in later the experiment section. 
$h$ and $g$ are also defined for the fractional labels, i.e. for $y=  a y_1+(1-a)y_2 \in \tilde{\mathcal{Y}}$,
$h(y)=a h(y_1)+(1-a)h(y_2),$ $g(y)=ag(y_1)+(1-a)g(y_{2})$.
Note that optimization respect to the $\tilde{\mathcal{Y}}$ does not change the optimization much. (e.g., SGD update is in similar form. The update requires two updates;  One update for each fractional label.)

Convex hull search is shown in Algorithm \ref{alg:convexhull_search}. 
AddToSortedList($S,y$)  adds a vertex $y$ into the list $S$ sorted respect to  $g$. Denote $y_t$ for $t$-th label  in $S$ sorted respect to $g$. GoldenSearch($s$) finds the maximum $\Phi$ point in a line segment $s$ via binary search. Note that during GoldenSearch no oracle call is needed.  Note that for a system uses margin rescaling, this is easy to implement since only a few lines need to be added to query a $\lambda$ and update, and use the existing code. 

 The correctness and the running time is based on the unique optimal $\lambda^*$ which is obtained from  the  the unique linear lower bound of  the candidate space of $y^*_\lambda$, which is described in Theorem \ref{thm:optimal_lambda}. At each iteration, convex hull search finds the unique linear lower bound and obtains $\lambda^*$. $\lambda^*$ also serves as a termination criteria that if $\mathcal{O}(\lambda^*)$ is  a label which has been previously found than we can conclude that the optimal label $\lambda^*_\lambda\in S$,  and we can terminate the search since it is already previously been found. To prove Theorem \ref{thm:optimal_lambda}, we first prove several lemmas. 


We start by first stating  following Theorem which describes the main result of the correctness and the running time (in the number of oracle calls).

\begin{theorem} \label{thm:chs_correctness}
Convex hull search in  Algorithm \ref{alg:convexhull_search} returns $\tilde{y}^*$  in $|V|$ calls to the oracle where $V$ is the   vertices of the convex hull of the labels in $2$ dimensional plane.
\end{theorem}

\begin{algorithm}[H]
  \caption{Convex Hull Search  
  } \label{alg:convexhull_search}
  \begin{algorithmic}
\Procedure{ConvexHullSearch}{}
  \State $S\gets \emptyset$ . 
  \State $\lambda\gets\infty$
 \While {true}
 \State $y\gets \mathcal{O}(\lambda)$
  \If {$y\in S$}
\Return GetMaxFract($S$)
 \EndIf
 \State  $S\gets$ AddToSortedList$(S,y)$
 \State $\lambda\gets$ Get-$\lambda (S)$
 \EndWhile
\EndProcedure   
 \hrulefill 
\Function{Get-$\lambda$}{$S$}
 \State  $t\gets\argmax_t \Phi(y_t)$
 \State  $T'\gets$  the tangent line of $\Phi$-contour at $y_t$.
 \If {$y_{t-1}$ exists and $y_{t-1}$ is below $T'$ }  
  $T \gets \overline{y_{t-1}y_t}$
 \EndIf
 \If {$y_{t+1}$ exists and $y_{t+1}$ is below $T'$ }
  $T \gets \overline{y_{t}y_{t+1}}$
 \EndIf
  \Return $-\partial(T)$
\EndFunction    
 \hrulefill 
\Function{GetMaxFract}{$S$}
 \State  $t\gets\argmax_t \Phi(y_t)$
 \State $y^*_1\gets$GoldenSearch$(\overline{y_{t-1}y_t})$
 \State $y^*_2\gets$GoldenSearch$(\overline{y_ty_{t+1}})$
 \If {$\Phi(y^*_1)>\Phi(y^*_1)$}
 \Return $y^*_1$.
 \EndIf
 \Return $y^*_2$. 
\EndFunction    
  \end{algorithmic}
\end{algorithm}

Running time depends on the geometrical structure of the labels in the plane. In the worst case, all the labels could be the vertices of the convex hull, however, this is unlikely, and especially in the common case when the support of $g$ is discreet, the convex hull has only a few vertices, which will be shown later.  This is much improved over the binary search of the bisecting search in \cite{choi2016fast}. In bisecting search, there is  no guarantee of the number of oracle calls to obtain next label, i.e., it might require an indefinite number of oracle calls.  On the contrary, in convex hull search,  it is guaranteed to obtain a different label for each oracle call. We will show that this property is achieved by the unique choice $\lambda$s. We will also discuss how to evaluate the result of  Theorem \ref{thm:chs_correctness} later. 
We will prove the Theorem follow by a series of lemmas and a Theorem.

%

The following lemma shows that for the bi-criteria surrogate loss, the optimal label is always on the boundary
of the convex hull from the monotonicity of the $\psi$.

\begin{lemma}[Existence of optimum on the boundary] \label{lem:optimum_on_edge}
 The fractional optimal label  $\tilde{y}^{*}$ is on the boundary of the convex hull CH, i.e.,
\begin{align*} 
\tilde{y}^{*}\in B
\end{align*}
and is a convex combination of the two label,
i.e.,
\begin{align*}  \tilde{y}^{*}\in \tilde{\mathcal{Y}}_2
\end{align*}
 \proof 

We prove by contradiction, assume the first part of the lemma is false. i.e., $\tilde{y}^{*}\notin B$. Then,   $\exists  \epsilon>0$ s.t. $y=\tilde{y}^* +[\epsilon,\; \epsilon]\in H,\Phi(y^*)<\Phi(y)$ from the definition of bi-criteria surrogate loss. It contradicts 
the definition of $\tilde{y}^*$. 
 
 The second part follows from the fact that in 2 dimensions, all the points in $y\in B\implies y\in \tilde{\mathcal{Y}}_2$  with our assumption that there exists no 3 labels lie on an edge.
\qed
\end{lemma}

From the argument similar to the previous lemma, following lemma shows that the label returned by the $\lambda$-oracle only returns the label on the vertices of the convex hull. 

\begin{lemma}[Limitation of the $\lambda$-oracle] \label{lem:limitation_of_linear_oracle}
For  $\lambda>0$, $\lambda$-oracle only returns a vertex of the convex hull, i,e, for any $\lambda>0$,
\begin{align*} 
 \mathcal{O}(\lambda) \in V 
\end{align*}
\proof
 From the definition of  $\mathcal{O}(\lambda)$, there exists no label above the line $\{y'|\lambda g(y')+h(y')=\lambda g(\mathcal{O}(\lambda))+h(\mathcal{O}(\lambda))\}$. Then,  $\nexists \epsilon>0$ such that $\mathcal{O}(\lambda)+ [\epsilon,\; \lambda\epsilon]\in CH$. This shows that   $\mathcal{O}(\lambda)\in B$. 
  $\mathcal{O}(\lambda) \in V$ follows from  $\mathcal{O}(\lambda)\in B$ and  our assumption that there exists no 3 labels lie on an edge.
\qed
\end{lemma}
Now, we focus on  the efficiency of our algorithm, which is directly related to the Theorem \ref{thm:chs_correctness}. As  in Algorithm \ref{alg:convexhull_search}, let $S=\{y_1,y_2, ...\}\subseteq \mathcal{Y}_\mathcal{\lambda}$ be the set of labels that returned by the $\lambda$-oracle sorted in increasing order of $g(y)$ until the current iteration. Following Theorem shows  that there exists an unique $\lambda$  that can check the sufficient condition for optimality of the current maximum label $\hat{y}=\argmax_{y\in S}\Phi(y)$, i.e., $\hat{y}=y_{\lambda}^*$, which  also serves as a termination condition. 

\begin{theorem}[Uniqueness of Optimal $\lambda$]\label{thm:optimal_lambda} There exists an unique $\lambda^{*}$ that following holds true. 
 \begin{align*}
 \exists!\lambda^*\in\mathbb{R} \;\text{ s.t.}\;
\mathcal{O}(\lambda^*) \in S \implies  y_{\lambda}^*\in S 
\end{align*}
 That is, if the $\lambda$ oracle with the $\lambda^*$ returns a label which was previously returned, we can terminate the algorithm since $y_{\lambda}^*$ is already found previously.
\proof
 We give a constructive proof for $\lambda^*$, which also describes the algorithm    \ref{alg:convexhull_search}. At each iteration, we use $\lambda^*=-\partial(T)$. $T$ is the separating hyperplane of $U$ and $S'$ where  $U=K-S'$ is the candidate space of $\tilde{y}^*$,   $K=\{y| \Phi(y)\ge\Phi(\hat{y}), y\in P\}$ is the super level set of $\Phi$, and  $S'=interior(Conv(S))$ is the interior of the convex hull of previously returned labels $S$.
 $S'$ is removed from  $U$ since $\tilde{y}^*$ is on the edge from lemma \ref{lem:optimum_on_edge}.
For the illustration, see Figure \ref{sfig:chs}.

\begin{figure}[H]
\centering     
\begin{subfigure}[b]{\linewidth}
\includegraphics[width=\linewidth]{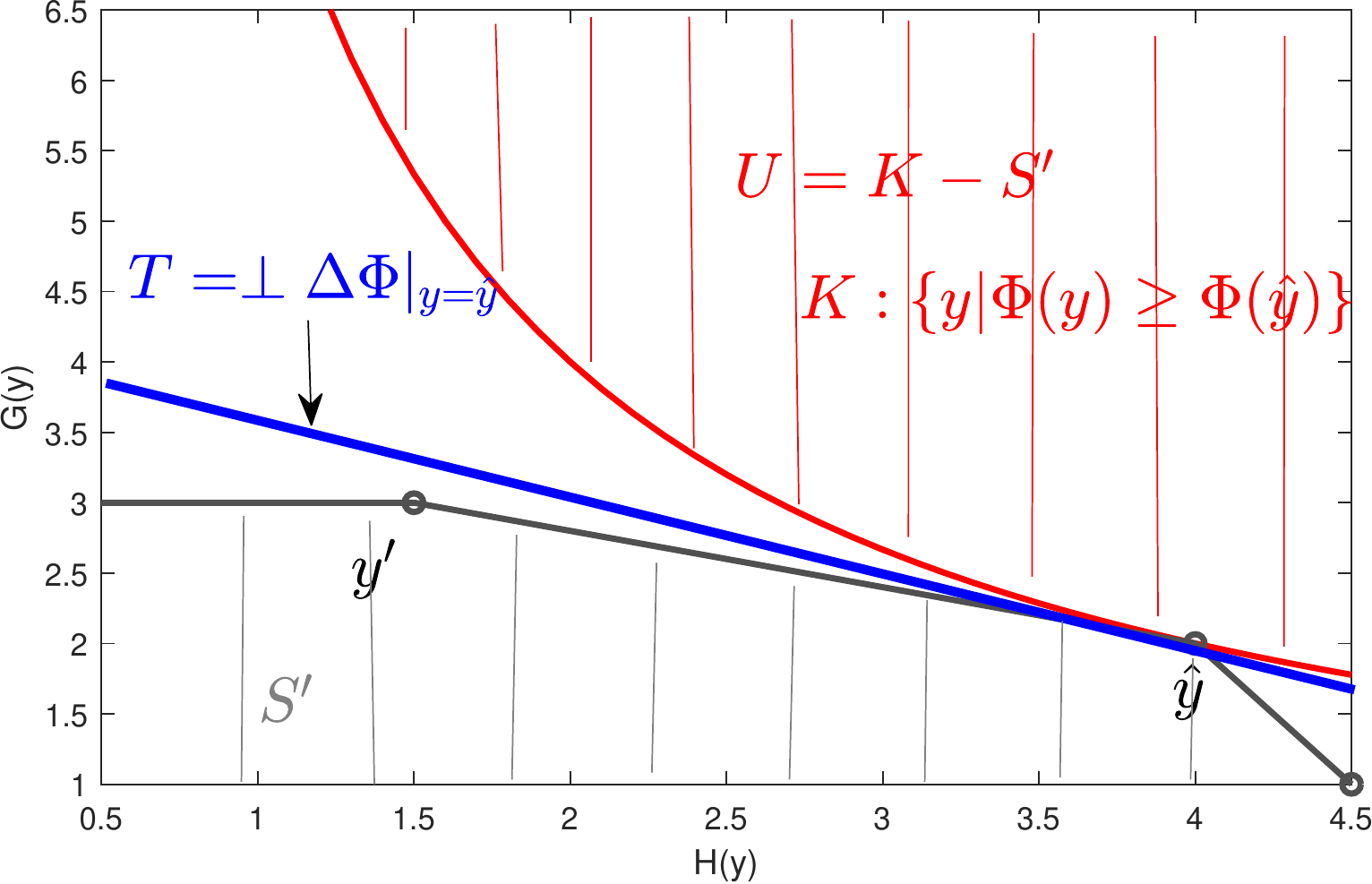}
\caption{$K\cap S'= \emptyset$}\label{sfig:chs1}
\end{subfigure}
\begin{subfigure}[b]{\linewidth}
\includegraphics[width=\linewidth]{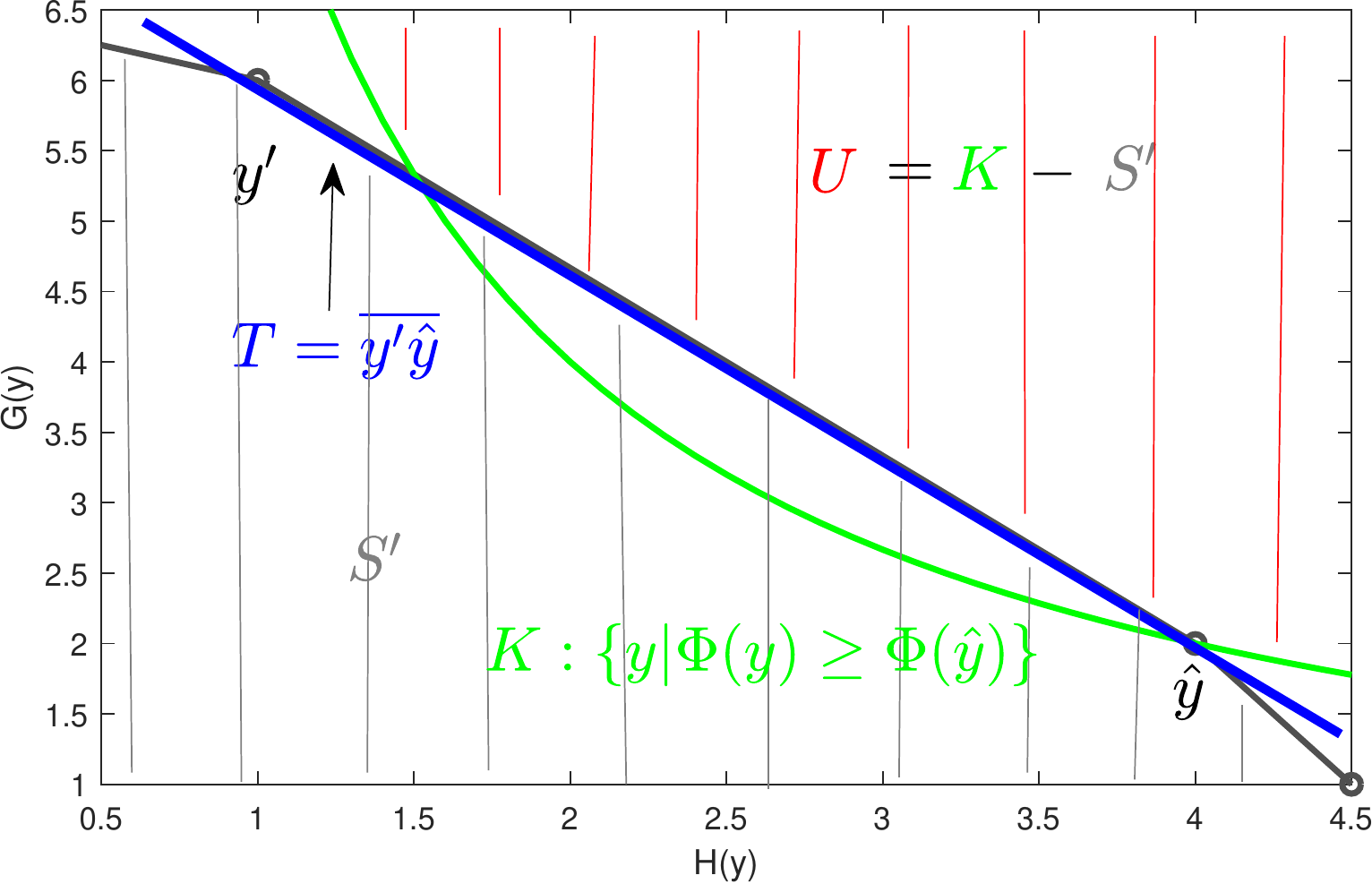}
\caption{$K\cap S'\neq \emptyset$}\label{sfig:chs2}
\end{subfigure}
\caption{An illustration of Convex Hull Search}\label{sfig:chs}
\end{figure}

 If $U\cap\mathcal{Y} \neq \emptyset$, $\mathcal{O}(\lambda^*)$ will return a new point above $T$, i.e. $\mathcal{O}(\lambda^*) \notin S$, and algorithm iterates again.  If  $\mathcal{O}(\lambda^*)\in S$, we can conclude that $U\cap\mathcal{Y} \neq \emptyset$ and terminate taking $y^*$ as the argmax labels among the argmax vertex and the fractional labels in neighboring edges.

 To prove such $T$ exists, consider two cases,  $K\cap S=\emptyset$ or $K\cap S\neq\emptyset$ .
 $K$ is convex since $K$ is a super level set of a quasi-concave function $\psi$. In the former case in Figure \ref{sfig:chs1},  $T$ is the unique supporting hyperplane at $\hat{y}$ defined by the gradient. It exists followed from the supporting hyperplane theorem and convexity of $K$ since $U=K$. 
 For the latter case in Figure \ref{sfig:chs2}, the edge $T=\overline{\hat{y}y'}$  uniquely separates $U$ and $S'$  because any vertex of convex hull, e.g. $y'$, should be on or below the boundary of $K$ and $K\cap S'\neq \emptyset$. 

 Here we describe the procedure to obtain $\tilde{y}^*$ when after obtaining $\hat{y}$.
 If  $K\cap S=\emptyset$ (Figure 
\ref{sfig:chs1}), $\tilde{y}^*=\hat{y}$.
 If $K\cap S\neq\emptyset$ (Figure 
\ref{sfig:chs2}), $\tilde{y}^*$ is on one neighboring edge of $\hat{y}$. This is done in GetMaxFract function in Algorithm    \ref{alg:convexhull_search}, which finds the maximum point in two neighboring edges.

\qed
\end{theorem}

Now we can prove Theorem \ref{thm:chs_correctness} from Theorem \ref{thm:optimal_lambda} that at each iteration, the oracle returns one vertex of convex hull, and terminates if it returns the previously found vertex.

\subsection{Optimality}  
   In Theorem 1, the number of oracle calls is upper bounded by $|V|$.  To evaluate this result, we give a common example when  $g(y)\in\{0,1,\dots,m\}$ and $|\mathcal{Y}|=M^m$. This implies that  $Y$ coordinate of every vertex is an integer. Then,  $|V|\le 2m+1$, and the maximum number of oracle call is $m$ because for $\lambda>0$ only the half of the vertex can be returned\footnote{See Figure \ref{fig:all_points}. $\lambda$ determines the direction. With $\lambda\ge0$, only the right half of the vertices of the convex hull can be returned. With $\lambda<0$, the left half is returned.}. The number of oracle calls grows only {\em linearly} with the length $m$ of the label.  On comparison, bisecting search in [2], the number of the call cannot be upper bounded.  

$\lambda^*$ in Theorem \ref{thm:optimal_lambda} is the optimal in an adversarial setting. Specifically,  we can view our algorithm to find $\tilde{y}^*$ as a game between two players, a $\lambda$ querier versus a label revealer. The objective of the game is to find $\tilde{y}^*$ with the minimum number of the oracle queries, and for each $\lambda$, the label revealer returns a label $\mathcal{O}(\lambda)$ that is consistent with the previous queries. $\lambda^*$ used in the Algorithm \ref{thm:optimal_lambda} is the unique {\em optimum } to mandate the label revealer to reveal a new label or if the label revealer does not, we can terminate our game since if returned label is not new, it implies that the optimum is already previously returned. Following example in the lemma shows that using such $\lambda^*$ is crucial in minimizing the number of call to the oracle.  The lemma shows for a certain $\psi$, but the results can be generalized for other $\psi$ functions. Our algorithm
terminates in 3 iterations. 

\begin{lemma} For  $\psi$ function in the bi-criteria surrogate loss such that $\psi(a,b)=\psi(b,a)>0$,  there exists  an example with 3 distinct labels, $A$, $B$, and $C$, such that $\Phi(A)=\Phi(B)<\Phi(C)$, and  $\mathcal{O}(\lambda)$ oscillates between two points $A$ and $B$ until $\lambda^*=1$ is used, and only then  it  return $C$. Specifically, for iteration $i=1,2,\dots, T$, let $\lambda_i>0$ be the $\lambda$ used in iteration $i$, $\lambda_T=1$, and $\epsilon=\min_{i\le T-1} \frac{1}{2}|\lambda_i-1|\neq 0$.
Then, 
\begin{align*}
\mathcal{O}(\lambda_i)=
\begin{cases} A & \mbox{if } \lambda_i>1 \\
B & \mbox{o.w}
\end{cases}
\end{align*}
and  $\mathcal{O}(1)=C$.
\proof
 $A=[2,4], B=[4,2],$ and $C=[3+\epsilon,3]$. $\Phi(A)=\Phi(B)<\Phi(C)$ follows from the quasi concavity of $\psi$, and the monotonicity in the definition. See Figure \ref{sfig:opt}.
\end{lemma}
\begin{figure}[H]
\includegraphics[width=\linewidth]{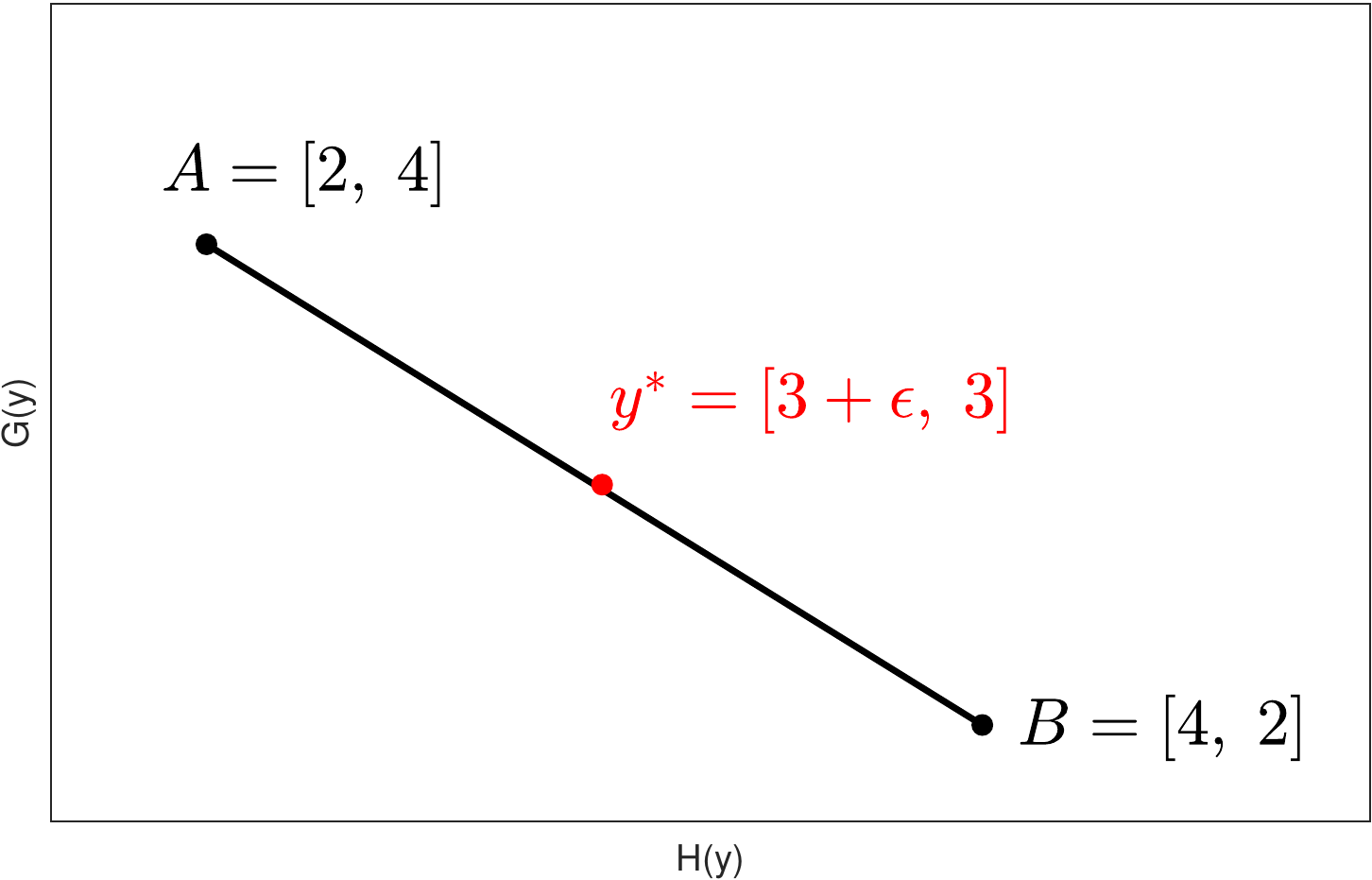}
\caption{Optimality of $\lambda^*$}\label{sfig:opt}
\end{figure}

 Another useful method to even facilitate  the process more in practice is that since our algorithm commonly used for an iterative optimization such as SGD, labels that comprised $V$ in the previous epoch can work as an initial guess of $V$ at the current iteration, and thus it is useful to memorize labels that comprised $V$ in the previous epoch. This can be useful as a warm start for our method.  This is similar to  {\em label cache} in \cite{choi2016fast}. For instance, we initially construct $V$ from previous labels that comprised $V$.  If $V$ does not change much, our algorithm can terminate at  iteration $1$ by checking $\hat{y}$ in the previous epoch is still validate $\hat{y}$.
For NER problem in \ref{sec:NER},  the average $2.0$ oracle calls were needed to obtain $y^*$ and it only took $2.3$ times more time than margin rescaling!

\section{Experiments} \label{sec:experiment}
In this section, we will empirically validate our approach in standard datasets by comparing with previous methods and showing the performance gain from new surrogate losses.
To guarantee obtaining an integral label, a more powerful oracle is required such as a k-best or an oracle with ban list, which will be discussed later. The oracles often exist for oracles that uses dynamic programming. We compare margin rescaling with Probloss for name entity recognition task and dependency parsing task where such oracle exists, and for the multi-label problem, we test with $\beta$ scaling surrogate.

\subsection{Comparison of $\lambda$-oracle optimization methods}

  \begin{table}[H]
  \begin{center}
    \begin{tabular}{|l|c| c  | c |c|c|c|}
    \hline
&      &  Sarawagi & Bisecting & Angular & Convex Hull\\
    \hline
    \multirow{ 3}{*}{Yeast}
&

      \# of queries &     15.3&  7.8&       4.5&    3.1  \\
   &Time  &         55.7& 20.5&   23.3&    9.5 \\
 & Fail-Max\% &        .36&  .26&  0 &    .07\\
    \hline
    \multirow{ 3}{*}{RCV1}
&

      \# of queries &  14.7  & 13  & 5.5 & 7.29  \\
   &Time  & 125& 87  &70& 53 \\
 & Fail-Max\% & .46& .26  & 0 & .02\\
    \hline

    \end{tabular}\\
  \end{center}
\caption{Comparison of $\lambda$ oracle optimization methods. 
} \label{tabel:loo_comparison}
\end{table}

We compare the $\lambda$-oracle optimization methods for slack rescaling inference in the literature with respect to the number of the oracle calls, runtime, and the quality of the argmax label returned in Table \ref{tabel:loo_comparison}. Each column corresponds to the methods: Sarawagi\cite{sarawagi2008accurate}, Bisecting search\cite{choi2016fast}, Angular search\cite{choi2016fast}, and our method Convex hull search.  Each row is the number of average queries, average time spent, and the ratio that fails to achieve the maximum.

The quality of search is measured by the ratio that each method failed to be the maximum among the methods.  Similarly as done in \cite{choi2016fast}, we take $80$ instances  of the two multi-label  problems,
Yeast \cite{elisseeff2001kernel} 
 and RCV1 \cite{lewis2004rcv1}. For RCV1, labels are reduced to most frequent $30$ labels. To compare all the methods, the cutting-plane method optimization is used, and for each loss augmented inferences, we try all the $\lambda$-oracle optimization methods. As shown in Table \ref{tabel:loo_comparison}, while the Angular search is very accurate with the added constraints in the oracle, and has a small number of the oracle calls,  Convex hull search is the fastest since each oracle runs fast not having the additional constraints that slow oracle. Additionally, note that compared to Bisecting search, the number of calls to the oracle of Convex hull search is much smaller. Also since Bisecting search does not have a good termination criterion and it needs to stop when the binary search range is sufficiently small, which stops the search early, and hurts the quality of the argmax. Failure of convex hull search only happens in the very early stage of the optimization when $w$ is degenerate. While being the fastest search methods, note that Convex hull search can optimize quasi-concave functions other than slack rescaling formulation, and it does not require the oracle to have additional constraints as Angular search, which is not often available. 



\subsection{Multi-label Classification}
 We experimented on a multi-label problem with a fully pairwise MRF model in a much larger scale than in \cite{finley2008training,choi2016fast}. Statistics of the dataset is shown in Table \ref{table:data_statistics}. Note that slack rescaling with 159 multi-labels is a challenging problem. One advantage  of our inference method,  flexibility of the surrogate loss, is demonstrated with the result of $\beta$-scaling. We test with the regularization constants $C\in \{10^{-4},10^{-3},10^{-2},10^{-1}\}$ with  one versus all (OVA), margin rescailing (MS), $\beta$-scaling ($\beta$-S) with $\beta\in\{1/4,1/2,3/4\}$, and slack rescaling (SS) on the  validation dataset, and report the best result on the test set. Macro-F1 seems to be sensitive to rounding scheme used for indecisive labels in test time, and omitted. Table \ref{table:ML_result} summarizes the result. $\beta$-scaling outperforms others, and it shows that a sensitive control of the surrogate loss can have much influence on the result. 
\begin{table}[H]
\begin{center}
\begin{tabular}{|c|c|c|c|c|}
\hline
& $d$ & $M$  & $N$ & $teN$ \\ 
\hline 
Mediamill&120&     101&     30K&   129K \\
Bibtex&1836 &159 &4880 & 2515 \\
RCV1& 299K& 103& 23K & 781K \\
\hline
\end{tabular}
\caption{Data Statistics for multi-label problem.}
\label{table:data_statistics}
\end{center} 
\end{table}

\begin{table}[H]
\begin{center}
\begin{tabular}{|c|c | cc | cc |}
\hline
&  \multicolumn{1}{c|}{ Bibtex    }
&  \multicolumn{2}{c|}{ RCV1    }
&  \multicolumn{2}{c|}{ Mediamill   }
\\
\hline 
& MiF1 
& Acc& MiF1 
& Acc& MiF1 \\
\hline 
OVA& .423 
&.698 &.770 
   & .404    &.534  
\\
MR  & .419 
 & .721 & .785 
 & .404 & .533 
\\
$\beta$-S & {\bf.444 }
  & {\bf.733} & {\bf.791} 
& {\bf.418}  & {\bf.558}  
\\
SR  &.352 
 & {\bf .733} & .789 
 & .416 & .556 
 \\
\hline
\end{tabular}
\caption{A fully pairwise MRF: By the precise control over the surrogate loss, $\beta$-scaling outperforms other models.    }
\label{table:ML_result}
\end{center}
\end{table}

\subsection{Name Entity Recognition (NER)}\label{sec:NER}
We apply our surrogate losses to the problem of named-entity recognition (NER) with English data of new articles from the CoNLL
2003 shared task \cite{TjongKimSang:2003:ICS:1119176.1119195}, which is done in  \cite{gimpel2010softmax}. The problem is to identify four entity types: person, location, organization, and miscellaneous. Following \cite{gimpel2010softmax}, we used additional token shape features and simple gazetteer features as in \cite{kazama2007new}. We obtain the code from the author, and implemented our ProbLoss. Similar to \cite{gimpel2010softmax}, for each method, for tuning the hyperparameters, regularization constant $C\in\{0.01,\; 0.1,\; 1,\;  10\}$, and learning rate $\rho\in \{0.01,\; 0.001\}$ is tested in a validation set after  training with SGD for $100$ epochs, and report the test result for the best parameter in a validation set. To obtain integral labels for ProbLoss, we perform a Viterbi path algorithm with $k$-ban list. Specifically, if Convex hull search results in a fractional label, we add the vertices that consist the edge of the fractional label into the ban list, and loop until an integral label is found or the potential of the fractional label is less than that of the best integral label that is previously found.  Compared to margin rescaling, which is a hard baseline for the task, our proposed loss function, ProbLoss, outperforms it regarding Macro-F1. We also tested with the Micro-F1 surrogate. Our search is very efficient. An average 2\ oracle calls were needed to obtain $y^*_{I}$ for ProbLoss. For results, see Table \ref{table:result_NER}, and proposed micro-F1 surrogate also outperforms it regarding Micro-F1.

\begin{table}[H]
\begin{center} 
\begin{tabular}{|c|c|}
\hline
& Macro-F1\\
\hline 
Margin rescaling &    85.28 \\
ProbLoss& 85.62   \\
\hline
\end{tabular}

\begin{tabular}{|c|c|}
\hline
&  Micro-F1  \\ 
\hline 
Margin rescaling &     62.82 \\
Micro-F1 surrogate  & 63.57\\
\hline
\end{tabular}
\caption{Result on NER}
\label{table:result_NER}
\end{center} 
\end{table}
\subsection{Dependency Parsing on Deep Network}
In the era of deep networks, we apply our method to dependency parsing problem in NLP as in \cite{kiperwasser2016simple} that uses a deep network. In dependency parsing tasks, we assign dependency on the words in a sentence as a  form of a tree. Each arc is labeled as a part of speech (POS) tag. We applied our method on the structured prediction part of a graph based parser which uses bidirectional-LSTMs (BiLSTM) with the codes from  \cite{kiperwasser2016simple}.  All the parameters are same as in \cite{kiperwasser2016simple} for margin rescaling, and trained and tested on the Penn Treebank   \cite{marcus1993building} dataset. We warm-start with margin rescaling and applied ProbLoss from iteration $10$ till $30$ with a learning rate of $0.0001$. To obtain the integral labels, we modify the oracle using the dynamic programming to return $k$-best labels varying $k$, and used the same ban-list strategy as in NER: in case of fractional label, we added the two vertices that consisted the fractional label  into the ban-list, and  increase $k$ by 2, and iterate until an integral label is found.  Table \ref{table:result_DP} summarizes the improvement over margin rescaling.

\begin{table}[H]
\begin{center} 
\begin{tabular}{|c|c|c|}
\hline
& Labeled   Attachment Score &Unlabeled Attachment Score  \\ 
\hline 
Margin rescaling &    92.6 & 94 \\
ProbLoss& 92.7   & 94.2 \\
\hline
\end{tabular}
\caption{Result on Dependency Parsing}
\label{table:result_DP}
\end{center} 
\end{table}

 We also tested for the statistical significance. We  tested with $10000$ sentences $10$ times,  and the test shows that the result is statistically significant with $0.0075$ p-value of a two-sided Wilcoxon signed rank test.
\begin{table}[H]
\begin{center} 
\begin{tabular}{|c|c|}
\hline
& Labeled   Attachment Score \\
\hline 
Margin rescaling &    92.1$\pm$.1 \\
ProbLoss& 92.2$\pm$.1  \\
\hline
\end{tabular}
\caption{Statistical Significance Testing on Dependency Parsing}
\label{table:data_statistics}
\end{center} 
\end{table}

Figure \ref{fig:dp_dev} shows the improvement with ProbLoss during the training.
\begin{figure}[H]
\centering     
\includegraphics[width=\linewidth]{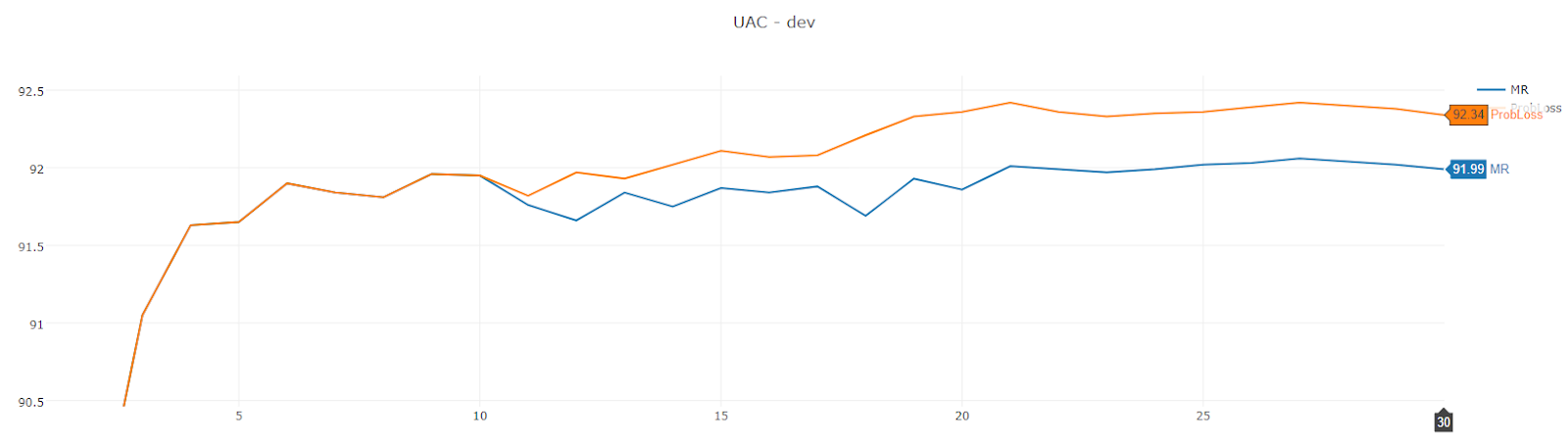}
\caption{Improvement of unlabeled attachment score with ProbLoss in the Development set.}\label{fig:dp_dev}
\end{figure}

\section{Conclusion} \label{sec:conclusion}
 In this chapter we presented an easy to implement and practical method for the  loss augmented inference of bi-criteria surrogate loss. The bi-criteria surrogate loss can be characterized by a quasi-concave relationship between two factors, which are usually structural loss and margin loss. Since it is previously shown that without the modification of the oracle, the exact optimal label cannot be returned. Thus, our method, convex hull search, focuses on LP relaxed space of labels. Efficiency is explained by  the optimality of the  choice of $\lambda$ of convex hull search in an adversarial game, which gives an upper bound of the number of the calls in  geometrical terms of labels, which is often very small in practice. We also presented a method for obtaining integral labels using a ban-list strategy, which is often more available than the constrained oracle  in \cite{choi2016fast}. We empirically compare other $\lambda$-oracle methods to show the efficiency of the convex hull search. We also showed an improvement utilizing our method in multi-label  problem, name entity recognition problem, and dependency parsing problem in a deep network. 

\maketitle

\begin{abstract}
We present improved methods of using structured SVMs in a large-scale
hierarchical classification problem, that is when labels are leaves,
or sets of leaves, in a tree or a DAG.  We examine the need to
normalize both the regularization and the margin and show how doing so
significantly improves performance, including allowing achieving
state-of-the-art results where unnormalized structured SVMs do not
perform better than flat models.  We also describe a further extension
of hierarchical SVMs that highlight the connection between
hierarchical SVMs and matrix factorization models.
\end{abstract}
\end{comment}
   In this chapter, we investigate the regularization issue in the hierarchical classification. The labels decompose into nodes in the hierarchy, and each classifier decomposes into micro-classifiers. In such a setting, we argue the importance of normalizing the regularization bias imposed by the structural imbalance  from the label structure. Then, we move forward and present a new norm that does not suffer from such structural imbalances, which incorporates not only label structure but also data. 

This chapter is mainly based on the previous publication \cite{choi2015normalized}.
       
\section{Introduction}

 We assume that each label decomposes into  fixed micro-labels, and the micro-labels corresponds to a path (or union of paths) from the root to a leave (or leaves) in a tree or a DAG (directed acyclic graph). Such problem is called hierarchical classification.  
The hierarchy represents grouping of labels which exists in real world datasets.  Such hierarchies
have been extensively used to improve accuracy
  \cite{Mccallum98improvingtext,Silla:2011:SHC:1937796.1937884,Vural:2004:HMM:1015330.1015427}
in domains such as document categorization \cite{cai2004hierarchical}, web content
classification \cite{Dumais:2000:HCW:345508.345593}, and image
annotation \cite{Huang:1998:AHI:290747.290774}.  In some problems,
taking advantage of the hierarchy is essential since each individual
labels (leaves in the hierarchy) might have only a few training
examples associated with it.

Hierarchical classifier is structured prediction problem where labels are subset of nodes that are union of paths from root to leaves.
Each nodes are the micro-classifers, and sum of the micro-classifers are final classifier. This belongs to our sum of classifier setting.

We focus on hierarchical SVM \cite{cai2004hierarchical}, which is a structured SVM
problem with the structure specified by the given hierarchy.
Structured SVMs are simple compared to other hierarchical
classification methods, and yield convex optimization problems with
straight-forward gradients.  However, as we shall see, adapting
structured SVMs to large-scale hierarchical problems can be problematic and requires care.
We will demonstrate that ``standard'' hierarchical SVM suffers from
several deficiencies, mostly related to lack of normalization with
respect to different path-length and different label sizes in
multi-label problems, which might result in poor performance, possibly
not providing any improvement over a ``flat'' method which ignores the
hierarchy.  To amend these problems, we present the Normalized
Hierarchical SVM (NHSVM).  The NHSVM is based on normalization weights
which we set according to the hierarchy, but not based on the data.
We then go one step further and learn these normalization weights
discriminatively.  Beyond improved performance, this results in a
model that can be viewed as a constrained matrix factorization for
multi-class classification, and allows us to understand the
relationship between hierarchical SVMs and matrix-factorization based
multi-class learning  \cite{amit2007uncovering}.
As a result, the new model suggests to resolve imbalance in the reguarization in the sum of micro-classifiers problem.

We also extend hierarchical SVMs to issues frequently encountered in
practice, such as multi-label problems (each document might be labeled
with several leaves) and taxonomies that are DAGs rather then trees.

We present a scalable training approach and apply our methods to large
scale problems, with up to hundreds of thousands of labels and tens of
millions of instances, obtaining significant improvements over
standard hierarchical SVMs and improved results on a
hierarchical classification benchmark.

\section{Related Work}

Hierarchical classification using SVM is introduced in
 \cite{cai2004hierarchical}.  An extension to the multi-label case was presented
by  \cite{Cai_2007}, but optimization was carried out in the dual and
so does not scale, and no care was paid to the issue of normalization.

In  \cite{Rousu06kernel-basedlearning}, dual program directly models kernel-based structured SVM in a multi-label hierarchical problem. This differs from our work since it only focus on dual objective. Our work focuses on direct optimization on primal objective, and
the number of the variables does not depend on the number of the instances of the
problem, which is more suitable for a large scale learning.


Alternatives to SVMs for hierarchical
classification include  \cite{Weinberger_largemargin} and  \cite{Vural:2004:HMM:1015330.1015427} method which divides
the data into two subsets with respect to hierarchy till every instance is
labeled to one. These models are only applicable to single label
problem, and has limited scalability for a large size data with a
large structure.

In  \cite{Cesa-Bianchi:2006:HCC:1143844.1143867}, new loss function H-loss is introduced to evaluate the discrepancy of two labels, which prohibits prediction
error of the children node to add up if the parent node is already misclassified.
Then, a stochastic  Bayes-optimal classifier is
built according to the loss.
For training, SVM is trained at each node, and for inference message passing is used. 

\section{Label Structure}

Let  $\mathcal{G}$ be a tree or a {\em directed acyclic graph} (DAG) representing a label
structure with  $M$ nodes. Denote the set of leaves  nodes in $\mathcal{G}$ as  $\mathcal{L}$.   For each $n\in [M]$,
define the sets of parent, children, ancestor, and descendent nodes of $n$ as $\mathcal{P}(n)$, 
$\mathcal{C}(n)$, 
  $\mathcal{A}(n)$,
  and  $\mathcal{D}(n)$,  respectively. Additionally, denote the ancestor nodes of $n$  including node $n$ as
  $\overline{\mathcal{A}}(n)=\{n\}\cup
  \mathcal{A}(n)$, and similarly, denote $\overline{\mathcal{D}}(n)$ for $\mathcal{D}(n)=\{n\}\cup\mathcal{D}(n)$. We also extend the notation
above for  sets of nodes to indicate the union of the corresponding
sets, i.e.,
$\mathcal{P}(A)=\cup_{n\in A}\mathcal{P}(n)$.

Let $\{ (x_i,y_i)\}^N_{i=1}$ be the training data of $N$ instances.
Each $x_i \in \mathbb{R}^d$ is a feature vector and it is labeled with
either a leaf (in single-label problems) or a set of leaves (in
multi-label problems) of $\mathcal{G}$.  We will represent the labels
$y_i$ as subsets of the nodes of the graph, where we include the
indicated leaves and all their ancestors.  That is, the label space
(set of possible labels) is $\mathcal{Y}_{s}=\{\overline{\mathcal{A}}(l)|l\in
\mathcal{L} \}$ for single-label problems, and 
$\mathcal{Y}_{m}=\{\overline{\mathcal{A}}(L)|L\subseteq
\mathcal{L}\}$ for multi-label problems.

\section{Hierarchical  Structured SVM}
We review the hierarchical structured SVM
introduced in  \cite{cai2004hierarchical} and extended to the multi-label case in  \cite{Cai_2007}.
Consider $W\in\mathbb{R}^{M\times d}$, and let the $n$-th row vector $W_n$ be
be weights of the node $n\in [M]$.
 Define $\gamma(x,y)$ to be the  potential of label $y$ given feature $x$, which is the sum of the inner
products of $x$ with the weights of node $n\in y,$
$  \gamma(x,y)=\sum_{n \in y} W_n \cdot  x$.
If we vectorize $W$, $w=vec(W)=[W_1^T \; W_2^T\; \dots \; W_M^T]^T\in \mathbb{R}^{d\cdot M}$, and
define the class-attribute $\wedge(y)\in\mathbb{R}^M$, $[\wedge(y)]_n=1$ if $n\in y$
or 0 otherwise, then 
\begin{align} 
  \gamma(x,y)=\sum_{n \in y} W_n \cdot  x=w\cdot (\wedge(y)\otimes x) \label{classa}
\end{align} 
  where $\otimes$ is the Kronecker product. 
With weights $W_n$, prediction of an instance $x$ amounts to finding
the maximum response label 
$\hat{y}(x)={\arg\max}_{y \in \mathcal{Y}} \gamma(x,y)={\arg\max}_{y \in \mathcal{Y}} \sum_{n\in y} W_n x \label{argmax}$.
 Given a structural error $\triangle(y',y)$, for instance a hamming
 distance $\triangle^{H}(y',y)=|y'-y|= \sum_{n\in [M]}
 |\textbf{1}_{n\in y'}-\textbf{1}_{n\in y}|$, a training a
 hierarchical structured SVM amounts to optimizing:
\begin{align}
&\min_W\lambda \underset{n}{\sum} \|W_n\|^2_2+ 
\underset{i}{\sum}\max_{y\in\mathcal{Y}}\left \{ \sum_{n \in y}W_n x_i -\sum_{n \in y_i} W_n x_i  +\triangle(y,y_i)\right
\} \label{HSVM}\end{align}
Equivalently, in terms of $w$ and class-attribute $\wedge(y)$,
\begin{align}
&\min_W\lambda  \|w\|^2_2
+ \underset{i}{\sum}\max_{y\in\mathcal{Y}}\left \{ w\cdot ((\wedge(y)-\wedge(y_i))\otimes x_i) +\triangle(y,y_i)\right\}. \label{HSVM2}\end{align}

  \section{Normalized Hierarchical SVM}\label{sec:NHSVM}

\begin{figure}
      \includegraphics[width=\textwidth]{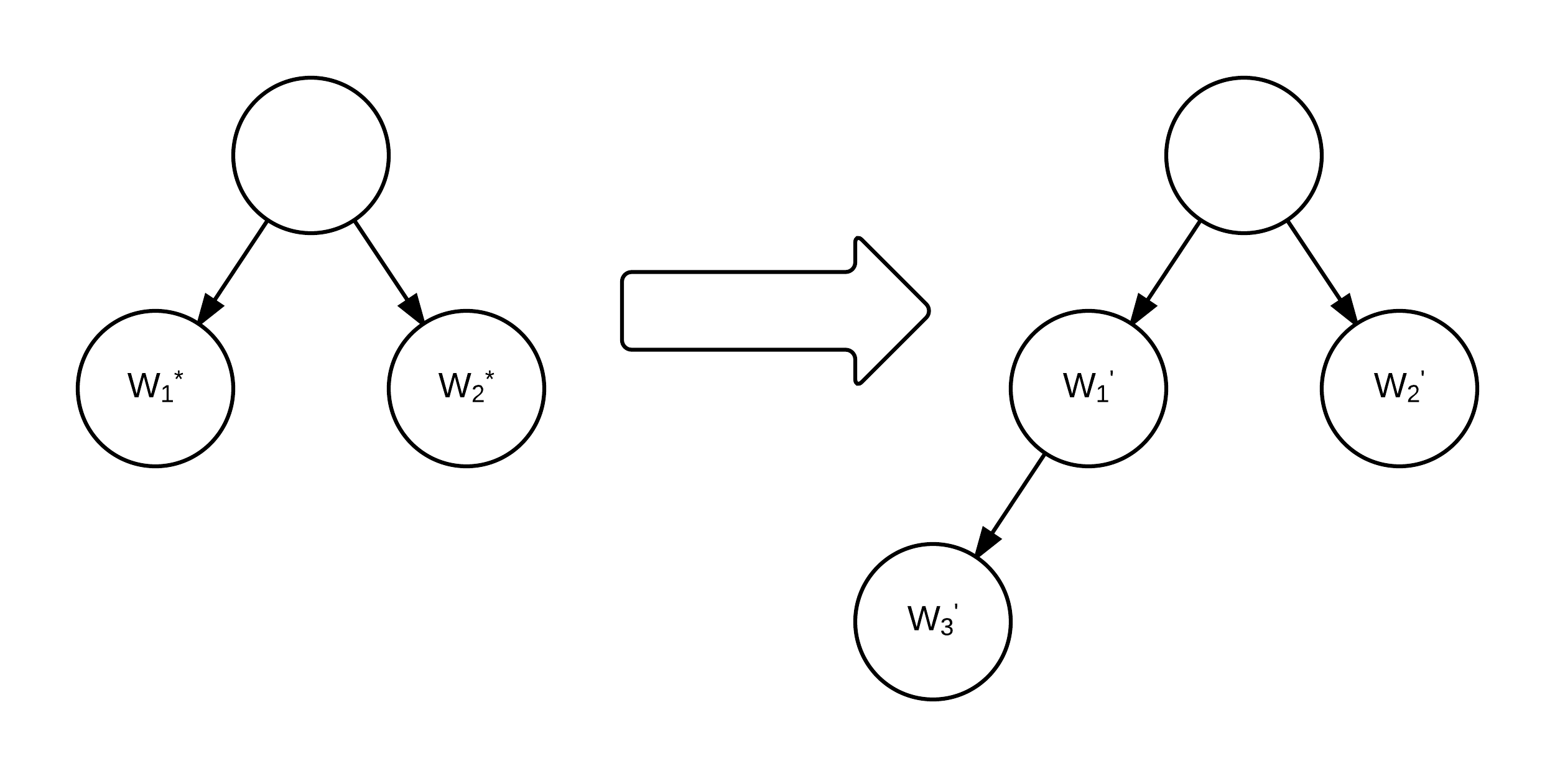}
    \caption{Regularization penalty for label $y_1$ (left branch) 
    is halved to 
    without
 changing decision boundary due to difference in the label structure. }
  \label{fig:nregularization}
\end{figure}

A major issue we highlight is that unbalanced structures (which are
frequently encountered in practice) lead to non-uniform regularization
with the standard hierarchical SVM.  To illustrate this issue,
consider the two binary tree structures with two leaves shown in figure
\ref{fig:nregularization}.  Implicitly both structures describes the
same structure. Recall that the regularization penalty is
$\|W\|^2_F=\sum_n\|W_n\|^2_F$ where each row of $W$ is a weight vector
for each node. In the left structure, the class attributes are $\wedge(y_1)=[1\; 0]^T$, and
$\wedge(y_2)=[0\; 1]^T$, assume $\|x\|_2=1$, and let the optimal
weights of node 1 and node 2 in the left structure be $W_1^*$ and
$W_2^*$. Now add a node 3 as a child of node 1, so that $M=3,
\wedge(y_1)=[1\; 0\; 1]^T,\wedge(y_2)=[0\; \; 1\; 0]^T$. Let $W_1'$
and $W_3'$ be the new weights for the nodes 1 and 3. Taking
$W_1'=W_3'=\frac{1}{2}W^*_1$, and the potential function and thus the
decision boundary remains the same, however the regularization penalty
for $y_1$ is halved, 
$\|W_1'\|^2_2+\|W_3'\|^2_2=\frac{1}{2}\|W_1^*\|^2_2$, and
$\|W^*\|_F^2>\|W'\|_F^2$. This can be generalized to any depth, and
the regularization penalty can differ arbitrarily for the model with
the same decision boundary for different structures. In the given
example, the structure on the right imposes half the penalty for the
predictor of $y_1$ than that of $y_2$.
 
The issue can also be understood in terms of the difference between
the norms of $\wedge(y)$ for $y\in\mathcal{Y}$.
Let  $\phi(x,y)\in \mathbb{R}^{d\cdot
M}$ the feature map for an instance vector $x$ and a label $y$ such that $\gamma(x,y)=w\cdot \phi(x,y)$.
From (\ref{classa}),
\begin{align*}
w \cdot (\wedge(y)\otimes x)= w\cdot \phi(x,y)
\end{align*}
  
$\wedge(y)\otimes x$ behaves as a feature map in hierarchical structured SVM.
While the model regularizes $w$, the norm of $\phi(x,y)$ is different for $y$ and scales as $\|\wedge(y)\|_{2}$.
\begin{align*}
\|\gamma(x,y)\|_2=\|\wedge(y)\otimes x\|_2=\|\wedge(y)\|_2\cdot\|x\|_2
\end{align*}

Note that  $\|\wedge(y)\|_{2}=\sqrt{|\overline{\mathcal{A}}(y)|}$ 
and the differences in regularization can grow linearly with the depth
of the structure.
 
To remedy this effect, 
for each node $n$ we introduce a weight $\alpha_n\ge0$ 
   such that 
\begin{align}
 \sum_{n\in \overline{\mathcal{A}}(l)}\alpha_n=1, \;\;\;\;\;\; \forall l\in \mathcal{L}.\label{alpha}
\end{align}
Given such weights, we define  the normalized class-attribute $\tilde{\wedge}(y)\in \mathbb{R}^M$
 and  the normalized feature map $\tilde{\phi}(x,y)\in \mathbb{R}^{d\cdot M},$
\begin{align} \label{nclassa}
 &[\tilde{\wedge}(y)]_n=\begin{cases} \sqrt{\alpha_n} & \mbox{if } y\in n\\ 0 & \mbox{otherwise}\end{cases} &
&\tilde{\phi}(x,y)=\tilde{\wedge}(y)\otimes x 
\end{align}
The norm of these vectors are normalized to 1, independent
of  $y$, i.e., $\|\tilde{\wedge}(y)\|_2=1,$ $\|\tilde{\phi}(x,y)\|_2=\|x\|_2$
for $y\in \mathcal{Y}_{s}$, and the class attribute for each node $n$ is fixed to
$0$ or $\sqrt{\alpha_n}$ for all labels.
The choice of $\alpha$ is crucial and we present several alternatives
(in our experiments, we choose between them using a hold-out set). For
instance, using $\alpha_n= 1$ on the leaves $n\in \mathcal{L}$ and 0
otherwise will recover the flat model and lose all the information in
the hierarchy.  To refrain from having a large number zero weight and
preserve the information in the hierarchy, we consider setting
$\alpha$ optimizing:
\begin{equation}
\begin{aligned}\label{beta_1}
&\min &&\sum \alpha_n^{\rho}\\ 
&\mbox{s.t.}& &\sum_{n\in \overline{\mathcal{A}}(l)}  \alpha_n=1, && \forall  l\in \mathcal{L}\\
&&&\alpha_n\ge0 &&\forall n\in [M]
\end{aligned}
\end{equation}
where $\rho>1$.  In Section \ref{sec:invar}, we will show that as
$\rho\rightarrow 1$, we obtain weights that remedy the effect of the
redundant nodes shown in
Figure \ref{fig:nregularization}.  

We use \eqref{beta_1} with $\rho=2$ as a possible way of setting the
weights.  However, when $\rho=1$, the optimization problem
\eqref{beta_1} is no longer strongly convex and it is possible to recover weights of zeroes for most nodes.  Instead, for $\rho=1$, we consider the alternative
optimization for selecting weights:
\begin{equation}
\begin{aligned}\label{nbeta_1}
&\max  & &\min_n\alpha_n&&\\ 
&\mbox{s.t.}& &\sum_{n\in \overline{\mathcal{A}}(l)  }  \alpha_n=1, \;\;\; &&\forall  l\in \mathcal{L}\\
&&&\alpha_n\ge0, &&\forall n\in [M] \\ 
&&&\alpha_n\ge\alpha_p,& &\forall n\in [M], \forall p\in \mathcal{P}(n)
\end{aligned}
\end{equation} 
We refer to the last constraint as a ``directional constraint'', as it
encourage more of the information to be carried by the leaves and results more even distribution of $\alpha$.

For some DAG structures, constraining the sum $\sum_{n\in
  \overline{\mathcal{A}}(l)}\alpha_n$ to be exactly one can
result in very flat solution.  For DAG structures we therefore relax
the constraint to
\begin{align}\label{con:range}
  1\le \sum_{n\in \overline{\mathcal{A}}(l) }  \alpha_n\le T, & &\forall  l\in \mathcal{L}.
 \end{align} 
for some parameter $T$ ($T=1.5$ in our experiments).

Another source of the imbalance is the non-uniformity of the required margin, which results from the norm of the differences of class-attributes, $\|\wedge(y)-\wedge(y')\|_2$.
The loss term of each instance in (\ref{HSVM2}) is,
$\max_{y\in \mathcal{Y}}w \cdot (\wedge(y)-\wedge(y_{i}))\otimes x+\triangle(y,y_i)$.
And to have a zero loss $\forall y\in \mathcal{Y}$,
\begin{align*}
\triangle(y,y_i)\le
w\cdot ((\wedge(y)-\wedge(y_{i}))\otimes x) 
\end{align*}
$\triangle(y,y_i)$ works as the margin requirement to have a zero loss for $y$. 
The RHS of the bound scales as norm of $\wedge(y)-\wedge(y_i)$
scales. 
 This calls for the use of structural
error that scales with  the bound. 
Define normalized structural
error $\tilde{\triangle}(y,y_i)$
\begin{align}
\tilde{\triangle}(y,y_i)=\|\tilde{\wedge}(y)-\tilde{\wedge}(y_i)\|=\sqrt{\sum_{n\in
y\triangle y_i} 
\alpha_n} \label{wserror}
\end{align}
and $y\triangle y'=(y_i-y) \cup (y-y_i)$
, and $\tilde{\wedge}(y)$
and
$\alpha$ are defined in (\ref{nclassa})(\ref{beta_1}).
Without the normalization, this is the square root of the hamming distance, and is similar to a tree induced distance in  \cite{dekel2004large}. This view of nonuniform margin gives a justification that the square root of hamming distance or tree induced
distance is preferable than hamming distance.
  
\subsection{Normalized Hierarchical SVM model}
 Summarizing the above discussion, we propose the Normalized
 Hierarchical SVM (NHSVM), which is given in terms of the following objective:
 \begin{align}
\min_W\lambda \underset{n}{\sum} \|W_n\|^2_2+ \underset{i}{\sum}\max_{y\in\mathcal{Y}}  \sum_{n \in y}\sqrt{\alpha_n}W_n x_i -  
 &\sum_{n \in y_i} \sqrt{\alpha_n}W_n x_i  +\tilde{\triangle}(y,y_i)
 \label{NHSVM}
\end{align}
 Instead of imposing a weight for each node, with change of variables $U_n=\sqrt{\alpha_n}W_n$,
 we can write optimization (\ref{NHSVM}) as changing regularization, 
 \begin{align}\label{NHSVM2}
\min_U\lambda \underset{n}{\sum} \dfrac{\|U_n\|^2_2}{\alpha_n}+ \underset{i}{\sum}\max_{y\in\mathcal{Y}}  \sum_{n \in y}U_n x_i -  
 &\sum_{n \in y_i} U_n x_i  +\tilde{\triangle}(y,y_i)
\end{align}
Also optimization \eqref{NHSVM} is equivalently written as 
 \begin{align}
\min_W\lambda  \|w\|^2_2+ \underset{i}{\sum}\max_{y\in\mathcal{Y}}w\cdot ((\tilde{\wedge}(y)-\tilde{\wedge}(y_i))\otimes x_i)+\tilde{\triangle}(y,y_i)
\end{align}
 Note that
for the single-label problem,  normalized hierarchical SVM can be viewed as a multi-class SVM changing the feature map function to  (\ref{nclassa}) and the loss term  to \eqref{wserror}. Therefore, it can be easily applied  to problems where flat SVM is used, and
also popular optimization method for SVM, such as  \cite{Shalev-Shwartz:2007:PPE:1273496.1273598}  \cite{lacoste2013block},  can be used.

Another possibly variant of optimization (\ref{NHSVM2}) which we experiment with is
obtained by dividing inside the max with
 $\|\tilde{\wedge}(y)-\tilde{\wedge}(y_i)\|_2$:
   \begin{align}
\min_W\lambda  \|w\|^2_2+ \underset{i}{\sum}\max_{y\in\mathcal{Y}}w\cdot \left (\frac{\tilde{\wedge}(y)-\tilde{\wedge}(y_i)}{\|\tilde{\wedge}(y)-\tilde{\wedge}(y_i)\|_2}\otimes x_i\right )+1
 \label{NHSVM3}
\end{align}
 There are two interesting property of the optimization \eqref{NHSVM3}. The norm of the vector right side of $w$ is normalized  $\left \|\frac{\tilde{\wedge}(y)-\tilde{\wedge}(y_i)}{\|\tilde{\wedge}(y)-\tilde{\wedge}(y_i)\|_2}\otimes x_i\right \|_2=\|x_i\|_2$. Also the loss term per instance at the decision boundary, which is also the required margin, is normalized to 1. However, because normalized class attribute in (\ref{NHSVM3}) does not decompose w.r.t nodes as in (\ref{NHSVM}), 
loss augmented
inference in (\ref{NHSVM3}) is not efficient for multi-label problems.

\subsection{Invariance property of the normalized hierarchical SVM}\label{sec:invar}

As we saw in figure \ref{fig:nregularization}, different hierarchical structures  can be used to  describe the same data, 
and this causes undesired regularization problems. However, this is a common problem in real-world datasets. For
instance, a {\em action } movie label can be further categorized into a {\em
cop-action} movie and a {\em
hero-action} movie in one dataset whereas the other dataset uses a action movie as a label. 
Therefore, it is desired for the learning method of
hierarchical
model to adapts to this differences and learn a similar model if given dataset describes
similar  data.
 Proposed normalization can be viewed as an adaptation to this kind of distortions. In particular,  we show that NHSVM is invariant to  the node duplication.  

 Define duplicated nodes as follows. 
Assume that there are no unseen nodes in the dataset, i.e., $\forall n\in\mathcal{[M]},\exists
i, n\in \mathcal{A}(y_i)$.
Define two  nodes  $n_1$ and $n_2$ in $[M]$ to be {\em duplicated}  if $\forall i, 
 n_1\in y_i\iff n_2 \in y_i$.
Define the minimal graph $M(\mathcal{G})$ to
be  the graph having  a representative node per each duplicated
node set by merging each duplicated
node set to a node. 

\begin{thm}[Invariance property of NHSVM] \label{thm:invarinace}
Decision
boundary of NHSVM with $\mathcal{G}$ is arbitrarily close to that   of NHSVM with the minimum graph $M(\mathcal{G})$ as $\rho$ in (\ref{beta_1}) approaches 1, $\rho>1$. 
\proof 
We prove by showing that  for any $\mathcal{G,}$ variable $\alpha$ in (\ref{beta_1}) can
be reduced to one variable per each set of  duplicated nodes in
$\mathcal{G}$ using the optimality conditions, and optimizations    (\ref{beta_1})(\ref{NHSVM}) are equivalent to the corresponding optimizations of  $M(\mathcal{G})$  by  change of
the variables .

Assume there are no duplicated leaves, however, the proof can be easily generalized
for the duplicated leaves by introducing an additional 
constraint on $\mathcal{Y}$. 

Let $\mathcal{F}(n')$ be a mapping from node $n'$ in graph $M(\mathcal{G})$ to a corresponding  set of duplicated nodes in $\mathcal{G}$.
Denote the set of nodes in $\mathcal{G}$ as $\mathcal{N}$, and the set of nodes in $M(\mathcal{G})$ as $\mathcal{N}'$,
 and the set of leaves in $M(\mathcal{G})$ as $\mathcal{L}'$.


Consider (\ref{beta_1}) for $\mathcal{G}$. Note that (\ref{beta_1}) has a constraint on sum of $\alpha_{n}$ to be 1 for $n\in\{n\in \bar{\mathcal{A}}(l)| l\subseteq\mathcal{L}\}$. By the
 definition of the  duplicity, if two nodes $n_1$ and $n_2$ are duplicated nodes,
they are the ancestors of the same set of the leaves, and term $\alpha_{n_1}$ appears in the first constraints of (\ref{beta_1}) if and only if term $\alpha_{n_2}$ appears, thus we conclude that all the duplicated nodes will appear altogether. Consider a change
of variable for each $n'\in \mathcal{N}'$ 
\begin{align}
K_{n'}=\sum_{n \in \mathcal{F}(n')}\alpha_{n}  \label{sub_k}
\end{align}
  Then, (\ref{beta_1})   are functions of $K_{n'}$ and (\ref{beta_1})
 decompose w.r.t $K_{n'}$. From the convexity of function $x^\rho$ with $\rho>1$,
  $x>0$,
 and Jensen's inequality, $(\frac{1}{|\mathcal{F}(n')|}K_{n'})^\rho\le\frac{1}{|\mathcal{F}(n')|}\sum_{n\in
 \mathcal{F}(n')}\alpha_n^\rho$,
 minimum of (\ref{beta_1}) is attained when $\alpha_{n}= \frac{1}{|\mathcal{F}(n')|}K_{n'}$ for $\forall n\in \mathcal{F}(n')$. As $\epsilon$ approaches  0, where\ $\epsilon=\rho-1>0$,
\begin{align}
\sum_{n\in\mathcal{N} } \alpha_{n}^{\rho }=\sum_{n'\in \mathcal{N}'} |\mathcal{F}(n')  |\left(
 \dfrac{K_{n'}}{|\mathcal{F}(n')|}\right )^{\rho} =|\mathcal{F}(n')|^{\epsilon}  K_{n'}^{\rho} \label{KK}
\end{align}

Plugging (\ref{KK}) (\ref{sub_k}) into (\ref{beta_1}), 
\begin{align*}
&\min&& \sum_{n'\in \mathcal{Y}'} K_{n'}^{\rho}\\
&\mbox{s.t.}&& \sum_{n'\in y'} K_{n'}=1, && \forall\ y'\in \mathcal{Y}'
\end{align*}
 These formulations are same as (\ref{beta_1}) 
 for $M(\mathcal{G})$.
 
  Thus  given $n'$,  $\alpha_{n}= \frac{K_n}{|\mathcal{F}(n')|}$ is fixed 
for
$\forall n \in \mathcal{F}(n')$, and with the same argument for $W_{n}$ in (\ref{NHSVM}), change of variables gives , $W'_{n'}=\sum_{n \in \mathcal{F}(n')}W_{n}$. Then (\ref{NHSVM}) is a minimization  w.r.t $W'_{n'}$, 
and the minimum is when  $W_{n}= \frac{W'_{n'}}{|\mathcal{F}(n')|}$ for $\forall n\in \mathcal{F}(n')$, plugging
this in (\ref{NHSVM}),
\begin{align}
\lambda \sum_{n\in \mathcal{N}}|\mathcal{F}(n')|&\dfrac{\|W'_{n'}\|^2_2}{|\mathcal{F}(n')|^2} +
\sum_i\max_{y\in \mathcal{Y}}\left (\sum_{n\in y} |\mathcal{F}(n')|\cdot
\ \sqrt{\frac{K_n}{|\mathcal{F}(n')|}}  \cdot
\frac{W'_{n'}}{|\mathcal{F}(n')|}\right.  \nonumber \\
 &-\left. \sum_{n\in y_i}|\mathcal{F}(n')|\cdot\sqrt{\frac{K_{n'}}{|\mathcal{F}(n')|}} \frac{W'_{n'}}{|\mathcal{F}(n')|}\right )\cdot x_i 
+\tilde{\triangle}(y,y_i)\label{plug_W}
\end{align}
By substituting $W_n''=\frac{1}{\sqrt{|\mathcal{F}(n')|}}W'_{n'}$,
\begin{align*}
\eqref{plug_W}=\lambda \sum_{n} \|W''_n\|^2_2 +\sum_i\max_{y\in\mathcal{Y}}
\left (\sum_{n\in y}  
\sqrt{K_{n'}} W''_n-
\sum_{n\in y_i}\sqrt{K_{n'}} W''_n\right )
\cdot x_i+\tilde{\triangle}(y,y_i) 
\end{align*}

(\ref{NHSVM}),(\ref{beta_1}) for $\mathcal{G}$ are equivalent
to those of $M(\mathcal{G})$, thus two solutions are equivalent with a change of
variables and the decision boundaries are the same.

\qed  
\end{thm}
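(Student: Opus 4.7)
The plan is to reduce both the weight optimization \eqref{beta_1} and the main NHSVM problem \eqref{NHSVM} on $\mathcal{G}$ to the corresponding problems on $M(\mathcal{G})$ by exploiting the symmetry within each duplicate class. First I would introduce, for each $n'$ in $M(\mathcal{G})$, the set $\mathcal{F}(n') \subseteq [M]$ of duplicate copies of $n'$ in $\mathcal{G}$ and the aggregated variable $K_{n'} := \sum_{n \in \mathcal{F}(n')} \alpha_n$. The structural observation driving everything is that if two nodes are duplicated then they lie in $\overline{\mathcal{A}}(l)$ for exactly the same set of leaves $l$, so the constraints of \eqref{beta_1} depend on the $\alpha_n$'s only through the sums $K_{n'}$, and in those variables they coincide with the constraints of \eqref{beta_1} posed on $M(\mathcal{G})$.

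Next I would handle the $\alpha$-objective. For fixed values of the sums $K_{n'}$, the objective $\sum_n \alpha_n^{\rho}$ is strictly convex for $\rho > 1$, so by Jensen on each fiber the minimum over a duplicate class is attained at $\alpha_n = K_{n'}/|\mathcal{F}(n')|$, with value $|\mathcal{F}(n')|^{\rho-1} K_{n'}^{\rho}$. As $\rho \to 1^+$ the prefactor $|\mathcal{F}(n')|^{\rho-1}$ tends uniformly to $1$, so the reduced program in the $K_{n'}$ variables converges to the version of \eqref{beta_1} posed on $M(\mathcal{G})$, and by continuity of the minimizer of a strictly convex program the optimal $K_{n'}$ values on $\mathcal{G}$ converge to those on $M(\mathcal{G})$. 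Substituting the equalized $\alpha_n$ back into \eqref{NHSVM}, I would then perform the matching reduction on the weights $W$: duplicated nodes appear in exactly the same labels $y$ with identical scalar multipliers $\sqrt{\alpha_n}$, so the loss depends on $W$ only through $\sum_{n \in \mathcal{F}(n')} W_n$, and the strictly convex regularizer $\sum_n \|W_n\|_2^2$ is again minimized by equal distribution inside each duplicate class. A rescaling $W''_{n'} = \sqrt{|\mathcal{F}(n')|}\,W_n$ (for any $n \in \mathcal{F}(n')$) should collapse the $|\mathcal{F}(n')|$ factors and yield exactly the NHSVM objective \eqref{NHSVM} on $M(\mathcal{G})$ with $\sqrt{K_{n'}}$ playing the role of $\sqrt{\alpha_n}$; since the predictor $\sum_{n \in y}\sqrt{\alpha_n}W_n \cdot x$ is invariant under this reparameterization, the decision boundaries must coincide once the optimal $K$'s coincide.

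The main obstacle I anticipate is the limit $\rho \to 1^+$: the reduction to $M(\mathcal{G})$ is not exact at finite $\rho$ because the prefactor $|\mathcal{F}(n')|^{\rho-1}$ reweights the contribution of different duplicate classes, so I would have to quantify a continuity/stability statement saying that small perturbations of $\alpha$ cause small perturbations of the NHSVM optimizer, and hence of its decision boundary. The cleanest route is to view the objective through \eqref{NHSVM2}, which is jointly continuous in $(\alpha, W)$ on a compact region bounded away from $\alpha_n = 0$, and invoke a standard stability result for regularized strongly convex optimization. A minor side issue is that if $\mathcal{G}$ contains duplicated leaves then $\mathcal{Y}$ itself must be identified across duplicates; I would handle this by adding the corresponding equality constraint on labels before running the argument above, leaving the main line of reasoning untouched.
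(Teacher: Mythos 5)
Your proposal follows essentially the same route as the paper's proof: aggregating each duplicate class via $K_{n'}=\sum_{n\in\mathcal{F}(n')}\alpha_n$, equalizing within classes by Jensen's inequality to get the $|\mathcal{F}(n')|^{\rho-1}K_{n'}^{\rho}$ reduced objective, performing the matching collapse and $\sqrt{|\mathcal{F}(n')|}$ rescaling of the weights, and passing to the limit $\rho\to 1^{+}$. If anything, you are more careful than the paper about the limiting step, since you explicitly flag the need for a stability argument for the optimal $K_{n'}$ and the resulting decision boundary, which the paper takes for granted.
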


\section{Shared SVM: Learning with Shared Frobenius norm}\label{sec:SharedSVM}

In the NHSVM, we set the weights $\alpha$ based the graphical structure
of the hierarchy, but disregarding the data itself.  We presented
several options for setting the weights, but it is not clear what the
best setting would be, or whether a different setting altogether would
be preferable.  Instead, here we consider discriminative learning the
weights from the data by optimizing a joint objective over the weights
and the predictors.  The resulting optimization is equivalent to  regularization with a new norm which we  call {\em shared Frobenius norm}  or {\em shared norm}. It explicitly  incorporates the information of  the label structure $\mathcal{G}$. Regularization with the shared Frobenius norm promotes the models to utilize shared information, thus it  is a complexity measure  suitable for  structured learning. An efficient algorithm for tree structure is discussed in section \ref{sec:opt}.

Consider the
formulation \eqref{NHSVM2} as a joint optimization over both $\alpha$ and
$U=[U_1^T\; U_2^T\; \dots U_M^T]^T$ with fixed
$\tilde{\triangle}(y,y_i)=\triangle(l,l_i)$ (i.e.~we no longer
normalize the margins, only the regularization):
\begin{equation} 
\begin{aligned} \label{SSVMp}
&\min_{U,\alpha}&& \lambda \underset{n}{\sum} \dfrac{\|U_n\|^2_2}{\alpha_n}+ \underset{i}{\sum}\max_{l\in[Y]}  \sum_{n \in \overline{\mathcal{A}}(l)}U_n x_i -  
 \sum_{n \in \overline{\mathcal{A}}(l_i)} U_n x_i  +\triangle(l,l_i) && \\
&\mbox{s.t.}& &\sum_{n\in \overline{\mathcal{A}}(l)}  \alpha_n\le1, && \forall  l\in [Y]\\
&&&\alpha_n\ge0 &&\forall n\in [M]
\end{aligned}
\end{equation}
We can think of the first term as a regularization norm
$\|\cdot\|_{s,\mathcal{N}}$ and write
\begin{align} \label{ssvm}
&\min_U\lambda  \|U\|^2_{s,\mathcal{N}}+ 
\underset{i}{\sum}\max_{l\in |Y|}U_l\cdot x_i - U_{l_i}\cdot x_i  +\triangle(l,l_i) \end{align}
where the the {\em structured shared Frobenius norm} $\|\cdot\|_{s,\mathcal{N}}$ is defined as:
\begin{equation}\label{rho2}
\begin{aligned}
 \|U\|_{s,\mathcal{N}}=&\min_{ a\in\mathbb{R}^M,W\in \mathbb{R}^{M\times d}}& &\|A\|_{2\rightarrow\infty}\|V\|_F   \\
&\mbox{s.t. } & &AV=U\\ 
&&& A_{l,n}= \begin{cases}0 & otherwise\\ a_n
& n\in \overline{\mathcal{A}}(l)\end{cases} \\
&&& a_n\ge0, & \forall n\in [M]
\end{aligned}
\end{equation}
where $\|A\|_{2\rightarrow \infty}$ is the maximum of the $\ell_2$ norm of
row vectors of $A$.  Row vectors of $A$ can be viewed as coefficient
vectors, and row vectors of $W$ as factor vectors which decomposes the
matrix $U$.  The factorization is constrained, though, and must
represent the prescribed hierarchy.  We will refer (\ref{ssvm}) to
{\em Shared SVM} or {\em SSVM}.

To better understand the SSVM, we can also define the {\em shared Frobenius norm}  without the structural constraint as 
\begin{align}
 \|U\|_{s}=\min_{ AV=U} \|A\|_{2\rightarrow \infty}\|V\|_F   \label{eq:snorm1}
\end{align}
The {\em Shared Frobenius norm} is a norm between the trace-norm (aka
nuclear norm) and the max-norm (aka $\gamma_2:1\rightarrow\infty$
norm), and an upper bounded by Frobenius norm:
\begin{thm}\label{thm:compare}
For $\forall U\in \mathbb{R}^{r\times c}$
\begin{align*}
 \dfrac{1}{\sqrt{rc}}\|U\|_* \le\dfrac{1}{\sqrt{c}}\|U\|_s\le\|U\|_{\max},
 \;\;\;\;\|U\|_{s}\le\|U\|_{s,\mathcal{N}}\le \|U\|_F
\end{align*}
where $\|U\|_*=\min_{AW^T=U}\|A\|_F\|W\|_F$ is then the trace norm, and $\|U\|_{\max}=$
$\min_{AW^T=U}$ $\|A\|_{2\rightarrow
\infty}\|W\|_{2\rightarrow
\infty}$ is  so-called the max norm \cite{srebro2005rank}.

\proof The first  inequality follows from the fact that $\frac{1}{\sqrt{r}}\|U\|_{F}\le \|U\|_{2\rightarrow\infty}$,
and the second  inequality  is from taking $A=I$, or $A_{l,n}=1$ when $n$ is  an
unique node for $l$ or 0 for all other nodes in (\ref{rho2}) respectively.  \qed
\end{thm}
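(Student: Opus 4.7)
The plan is to split the theorem into four scalar inequalities and prove each by a short manipulation of factorizations. The three "factorization" norms $\|\cdot\|_*$, $\|\cdot\|_s$, $\|\cdot\|_{s,\mathcal{N}}$, $\|\cdot\|_{\max}$ all have the form $\min_{AV=U} f(A) g(V)$ where $f$ and $g$ are either $\|\cdot\|_F$ or $\|\cdot\|_{2\to\infty}$ (with varying constraints on the sparsity pattern of $A$), so the bounds follow from the elementary facts that $\|X\|_{2\to\infty}\le\|X\|_F\le\sqrt{\#\text{rows}(X)}\,\|X\|_{2\to\infty}$, applied inside the min.

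For $\tfrac{1}{\sqrt{rc}}\|U\|_*\le \tfrac{1}{\sqrt{c}}\|U\|_s$, I multiply both sides by $\sqrt{c}$ and prove the equivalent bound $\|U\|_*\le \sqrt{r}\,\|U\|_s$. Take any factorization $AV=U$ with $A\in\mathbb{R}^{r\times k}$ achieving (nearly) the infimum in the definition of $\|U\|_s$. Since $\|A\|_F^2=\sum_{i=1}^r\|A_i\|_2^2\le r\,\|A\|_{2\to\infty}^2$, we get $\|U\|_*\le\|A\|_F\|V\|_F\le\sqrt{r}\,\|A\|_{2\to\infty}\|V\|_F$; taking the infimum yields the bound. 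For $\tfrac{1}{\sqrt{c}}\|U\|_s\le\|U\|_{\max}$, equivalently $\|U\|_s\le\sqrt{c}\,\|U\|_{\max}$, I do the symmetric trick on the right factor: take a factorization $U=AW^{T}$ with $W\in\mathbb{R}^{c\times k}$ achieving the max-norm infimum, observe that $U=A\,(W^T)$ is admissible for $\|U\|_s$, and bound $\|W^T\|_F=\|W\|_F\le\sqrt{c}\,\|W\|_{2\to\infty}$.

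For $\|U\|_s\le\|U\|_{s,\mathcal{N}}$, the inequality is immediate from the definitions: the feasible set in \eqref{rho2} is a strict subset of the factorizations allowed in \eqref{eq:snorm1} (the hierarchical sparsity pattern on $A$ is an extra constraint), so the minimum over the smaller set is $\ge$ the minimum over the larger one. For $\|U\|_{s,\mathcal{N}}\le\|U\|_F$, I exhibit one feasible point: set $a_n=1$ if $n$ is a leaf (each leaf being the unique node associated with one label $l$) and $a_n=0$ for internal nodes. Then each row $A_l$ has exactly one nonzero entry (at the leaf $n_l$ of $l$) equal to $1$, so $\|A\|_{2\to\infty}=1$; setting $V_{n_l}=U_l$ and $V_n=0$ on internal nodes gives $AV=U$ with $\|V\|_F=\|U\|_F$, so $\|U\|_{s,\mathcal{N}}\le\|U\|_F$.

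None of the steps poses a real obstacle. The only part that requires a little care is verifying that the constructed $(A,V)$ in the fourth inequality is genuinely feasible for \eqref{rho2}, i.e.\ $a_n\ge 0$ and the sparsity pattern of $A$ agrees with the prescription $A_{l,n}=a_n\cdot\mathbf{1}[n\in\overline{\mathcal{A}}(l)]$; both conditions are trivial here since leaves belong to their own ancestor set and $a_n\in\{0,1\}$. Everything else is a one-line application of a Cauchy–Schwarz-type $\ell_2$ vs.\ $\ell_{2\to\infty}$ inequality on one of the two factors.
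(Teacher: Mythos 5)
Your proposal is correct and follows essentially the same route as the paper's (much terser) proof: the first chain comes from applying $\|X\|_F\le\sqrt{\#\text{rows}(X)}\,\|X\|_{2\to\infty}$ to the left factor (with $r$) and the right factor (with $c$) of a near-optimal factorization, and the second chain comes from the feasible-set inclusion together with the explicit leaf-supported factorization $A_{l,n}=\mathbf{1}[n=n_l]$, which is exactly the paper's ``$A_{l,n}=1$ when $n$ is a unique node for $l$'' construction. No gaps; you have simply written out the details the paper leaves implicit.
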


We compare the shared norm to the other norms to illustrate the behavior
of the shared norm, and summarize in table \ref{compare_norms}.  An
interesting property of the shared norm is that if there is no possible sharing, it reduces to Frobenius norm, which is the norm used for multi-class SVM.  This differs from the trace norm, which we can see from specifically in disjoint feature in table \ref{compare_norms}. Therefore, this property justifies the view of SSVM that it extends multi-class SVM
to shared structure.

         \begin{table}[h!]
\begin{center}
\begin{tabular}{ |l|c | c | c| c| }
\hline
   & $\|U\|_{s}$& $\|U\|_{s,\mathcal{N}}$ & $\|U \|_F$ & $\|U\|_*$\\
\hline
  Full sharing & $\|u\|_2$ & $\|u\|_2$ & $\sqrt{Y}\|u\|_2$ & $\sqrt{Y}\|u\|_2$\\
\hline
  No sharing  & .& $\|U\|_F$ & $\|U\|_F$ & $\cdot$ \\
  \hline
  Disjoint feature & $\sqrt{\sum_l \|u_l\|_2^2}$& $\sqrt{\sum_l \|u_l\|_2^2}$ & $\sqrt{\sum_l \|u_l\|^2_2}$ & $\sum_l \|u_l\|_2$ \\
  \hline
  Factor scaling & $\max_i |a_i|{\|u\|_2}$ & . &  $\sqrt{\sum_i a_i^2}\|u\|_2$ & $\sqrt{\sum_i a_i^2}\|u\|_2$ \\
  \hline\end{tabular}
\end{center}

  \caption{ Comparing $\|U\|_{s}, \|U\|_{s,\mathcal{N}}, \|U\|_F$ and
  $\|U\|_* $
in different situations. See the text for   details.
(1) Full sharing,
$U=[u\; u\;
\dots\; u]^T,\exists n',\forall l,n'\in \bar{\mathcal{A}}(l)$. (2) No sharing,  $\forall l\neq l', \bar{\mathcal{A}}(l)\cap \bar{\mathcal{A}}(l')=\emptyset$.
(3) Disjoint  feature, 
$U=[u_{1}\; u_{2}\;
\dots \; u_Y]^T, \forall l_1\neq l_2, \mbox{Supp}(u_{l_1})\cap \mbox{Supp}(u_{l_2})=\emptyset$. (4) Factor scaling, $U=[a_1 u\; a_2 u
\dots \; a_Y u]$.
}\label{compare_norms}
\end{table} 

We first show a lower bound for $\|\cdot\|_{s}$, $\|\cdot\|_{s,\mathcal{N}}$ which will be useful for the later proofs.
\begin{lem} \label{lower_bound}For $U\in \mathbb{R}^{Y \times d}$,
\begin{align*}
\|U\|_{s,\mathcal{N}}\ge \|U\|_{s}\ge\max_y\|U_y\|_2^{2}
\end{align*}
where $U_y$ is $y$-th row vector of $U$.
\proof
 Let $A\in \mathbb{R}^{Y\times M}, V\in \mathbb{R}^{M \times D}$ be the matrices which attain minimum in 
 $\|U\|_s=\min_{AV=U,\|A\|_{2\rightarrow\infty}\le 1}\|V\|_F$. Since  $A_{r,\cdot} V_{\cdot, c}=U_{r,c}$ and from the cauchy-schwarz,  $\|U_{r,c}\|\le \|A_{r,\cdot}\|_2\cdot \|V_{\cdot, c}\|_2= \|V_{\cdot, c}\|_2$, and if we square both sides and sum over $c$, $\|U_{r,\cdot}\|^2_2\le  \|V\|_F^2=\|U\|^{2}_s$ which holds for all $r$. \qed
\end{lem}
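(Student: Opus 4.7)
The plan is to dispose of the two inequalities separately. The first, $\|U\|_{s,\mathcal{N}}\ge \|U\|_{s}$, is already contained in Theorem~\ref{thm:compare} (it follows because every structured factorization is admissible in the unconstrained definition \eqref{eq:snorm1}), so no new work is needed. All the effort goes into the second inequality, for which I would bound each individual row norm $\|U_r\|_2$ by the minimum value of $\|V\|_F$ over admissible factorizations $U = AV$ with $\|A\|_{2\to\infty} \le 1$.

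The first step is to pass to an optimal factorization. By the homogeneity $A \mapsto cA,\, V \mapsto c^{-1}V$ preserving $AV$ and $\|A\|_{2\to\infty}\|V\|_F$, I may take a minimizing pair $(A,V)$ with $\|A\|_{2\to\infty}=1$, so that $\|U\|_s=\|V\|_F$. The problem then reduces to showing $\|V\|_F \ge \|U_r\|_2$ for every row index $r$.

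The second step is a Cauchy--Schwarz estimate applied column by column. For any column $c$, since $U_{r,c} = A_{r,\cdot}\,V_{\cdot,c}$, Cauchy--Schwarz gives
\[
|U_{r,c}|^{2} \le \|A_{r,\cdot}\|_2^{2}\,\|V_{\cdot,c}\|_2^{2} \le \|V_{\cdot,c}\|_2^{2},
\]
using $\|A_{r,\cdot}\|_2 \le \|A\|_{2\to\infty} = 1$. Summing over $c$ yields
\[
\|U_r\|_2^{2} \le \sum_{c}\|V_{\cdot,c}\|_2^{2} = \|V\|_F^{2} = \|U\|_s^{2},
\]
and taking the maximum over $r$ finishes the argument.

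There is no real obstacle here: the argument is essentially two lines once the homogeneity reduction is made. The only subtlety worth being careful about is to verify that the infimum defining $\|\cdot\|_s$ is attained (or, if one prefers, to run the same estimate for an arbitrary admissible $(A,V)$ and only take the infimum at the end), and to ensure that the normalization $\|A\|_{2\to\infty}=1$ does not conflict with the constraints in \eqref{eq:snorm1} (it does not, since the constraint set there is scale-invariant in the pair $(A,V)$).
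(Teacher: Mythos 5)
Your proof is correct and follows essentially the same route as the paper's: take a minimizing factorization normalized so that $\|A\|_{2\to\infty}=1$, apply Cauchy--Schwarz entrywise to $U_{r,c}=A_{r,\cdot}V_{\cdot,c}$, and sum over columns to get $\|U_r\|_2^2\le\|V\|_F^2=\|U\|_s^2$. Your added care about the homogeneity normalization and the attainment of the infimum is a harmless refinement, and like the paper you actually prove $\|U\|_s^2\ge\max_y\|U_y\|_2^2$ (the lemma as displayed mixes a norm with a squared norm, which is a typo in the statement, not in either proof).
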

 Following are the detailed descriptions for table \ref{compare_norms} and  the sketch of the proofs. 

\begin{description}
\item 
[Full sharing] 
If  all weights are  same for all classes, i.e., $U=[u\; u\;
\dots\; u]^T\in \mathbb{R}^{Y\times d}$ for $u\in \mathbb{R}^{d}$, and there exists a node $n$ that it
is shared among all $y$, i.e., $\exists n,\forall l,n\in \bar{\mathcal{A}}(l)$, then $\|U\|_{s,\mathcal{N}}^{2}=\|u\|^{2}_2$
whereas $\|U\|^2_F=\|U\|_*^2=Y\cdot\|u\|^{2}_2$.

$\|U\|_{s,\mathcal{N}}^{2}=\|u\|^{2}_2$ can be shown with  matrix  $A=[{\bf{1}_{Y,1}} \;  {\bf{0}_{Y,M-1}}]\in \mathbb{R}^{Y\times M}$  and $V= \begin{bmatrix} u\\ {\bf{0}_{M-1,d}}\end{bmatrix}$  where  ${\bf{n}_{r,c}}\in\mathbb{R}^{r\times c}$ is a matrix with all elements set to $n$. $U=AV$ and the factorization attains the minimum of \eqref{eq:snorm1} since it attains the lower bound from lemma \ref{lower_bound}. $\|U\|_*^2=Y\cdot\|u\|^{2}_2$ is easily shown from the fact that $U$ is a rank one matrix with a singular value of $\sqrt{Y}\cdot\|u\|_2$.

\item[No sharing]\label{lem:no_sharing}  If there is no shared node, i.e., $\forall l,l'\in [Y],l\neq l', \bar{\mathcal{A}}(l)\cap \bar{\mathcal{A}}(l')=\emptyset$, then  $\|U\|_{s,\mathcal{N}}^{2}=\|U\|^{2}_F$.

 To show this, let $A$ and $V$ be the matrices which attain the minimum of (\ref{rho2}).
  $m$-th element of $A_{y}$ is zero  for all $y$ except one and $V_{m,d}$ is nonzero only for one $y$ such
  that $m\in y$.
Therefore, (\ref{rho2})
 decomposes w.r.t $A_{y}$ and $V_{y,d}$, where $V_{y,d}$
 is the $d$-th column vector of
 $V$  taking only for row $y$.
  \begin{align*}
 \min_{AV=U}\|V\|^{2}_F= \sum_y\sum_d\min_{A_yV_{y,d}=U_{y,d}}\|V_{y,d}\|^{2}_2
 \end{align*}
  Given $\|A_{y}\|_2=1$, 
\begin{align*}
\|V_{y,d}\|_2=\|A_{y}\|_2\|V_{y,d}\|_2\ge |A_{y}\cdot V_{y,d}|=|U_{y,d}|\\
\end{align*}
And  let $A_y=V_{y,d}/\|V_{y,d}\|_2$ which attains the lower bound.
\begin{align*}
\therefore\min_{AV=U}\|V\|^{2}_F= \sum_y\sum_d |U_{y,d}|^2=\|U\|^2_F
\end{align*}

\item[Disjoint feature] If  $U=[u_{1}\; u_{2}\;
\dots \; u_Y]^T\in \mathbb{R}^{Y\times d}$ for $l\in [Y]$, $u_{l}\in \mathbb{R}^{d}$, and the support of $w_y$ are all disjoint, i.e., $\forall y_1\neq y_2, \mbox{Supp}(u_{y_1})\cap \mbox{Supp}(u_{y_2})=\emptyset$, then $\|U\|_{s,\mathcal{N}}^{2}=\|U\|_F^{2}=\sum_y\|u_{y}\|^{2}_2$
and $\|U\|_*^2=(\sum_y \|u_y\|_{2})^2$.
 
For $\|\cdot\|_s$, it is similar to  no sharing. The factorization decomposes w.r.t. each column $u$. For the trace norm, since the singular values are invariant to permutations of rows and columns, $U$ can be transformed to a block diagonal matrix by  permutations of rows and columns, and the singular values decompose w.r.t block matrices with corresponding singular values of $\|u_y\|$.

\item[Factor scaling] 
If  $U=[a_1 u\; a_2 u
\dots \; a_Y u]\in \mathbb{R}^{Y\times d}$ for $l\in [L]$, $u\in \mathbb{R}^{d}$, 
 then $\|U\|_{s}^{2}=\max_{l}a_l^2\|u\|_2^{2}$
and $\|U\|_F^2=\|U\|_*^2=\|a\|_2^{2}\cdot\|u\|_2^2. $
 
 Proof is similar to full sharing. For $\|\cdot\|_s$, $A=\dfrac{1}{\max_i a_i}[{[a_1 \; a_2 \; \dots a_Y]^T \; \bf{0}_{Y,M-1}} ]$  and $V=\max_i a_i \begin{bmatrix} u\\ {\bf{0}_{M-1,d}}\end{bmatrix}$  is a feasible solution which attains the minimum in lemma \ref{lower_bound}. For the trace norm, singular values can be easily computed with knowing $U$ is a rank $1$ matrix.  

\end{description}
\section{Optimization} \label{sec:opt}
 In this section, we discuss the details of optimizing objectives \eqref{NHSVM} and \eqref{SSVMp}. Specifically, we show how to obtain the most violating label for multi-labels problems for  objective \eqref{NHSVM} and an efficient algorithm to optimize objective \eqref{SSVMp}.  
\subsection{Calculating the most violating label for multi-label problems}
We optimize our training objective \eqref{NHSVM} using SGD \cite{Shalev-Shwartz:2007:PPE:1273496.1273598}.
 In order to do so, the most challenging part is calculating
\begin{align} \label{argmax_}
\hat{y}_{i}=\arg\max_{y\in\mathcal{Y}}  \sum_{n \in y}\sqrt{\alpha_n}W_n x_i -  
 &\sum_{n \in y_i} \sqrt{\alpha_n}W_n x_i  +\tilde{\triangle}(y,y_i)=\arg\max_{y\in\mathcal{Y}}
 L_{i}(y)
 \end{align}
 at each iteration. For single label problems, we can calculate $\hat{y}_i$ by enumerating all the labels. However, for a multi-label problem, this is intractable
because of the exponential size of the label set. Therefore, in this subsection,  we describe how to calculate $\hat{y}_i$ for  multi-label problems. 

If $L_i(y)$ decomposes as a sum of functions with respect to its nodes, i.e., $L_{i}(y)=\sum_n L_{i,n}(\text{1}\{n\in y\})$, then 
$\hat{y}_i$ can be found efficiently. 
Unfortunately,  
 $\tilde{\triangle}(y,y_i)$ does not decompose with respect to the nodes.   In order to allow efficient computation for  multi-label problems, we actually replace $\tilde{\triangle}(y,y_i)$ with a decomposing approximation  $\triangle_\mathcal{L} l(y,y_i)=|\{y\cap \mathcal{L}\}-\{ y_{i}\cap \mathcal{L}\}|$
$=\sum_{l\in\mathcal{L}}\left ({\bf{1}}_{\{l\in y- y_{i}\}}-{\bf{1}}_{\{l\in y\cap y_{i}\}}\right)+|\{ y_i\cap \mathcal{L}\}|$ instead. When $\triangle_\mathcal{L} l(y,y_i)$ is used and the  graph $\mathcal{G}$ is a tree,   $\hat{y}_i$ can be computed in time $O(M)$ using dynamic programming. 

When the graph $\mathcal{G}$ is a DAG,  dynamic programming is not applicable.
However, finding \eqref{argmax_} in a DAG structure can be formulated
into  the following integer programming.  
\begin{align}
\hat{z}=&\underset{ z\in \{0,1\}^M}{\arg\min} &\sum_{n=1}^Mz_n \cdot  r_n   \label{integerprogram} \\
&\mbox{s.t} 
&\sum_{c \in \mathcal{C}(n)}z_c \ge z_n, & & \forall n \nonumber\\
&&z_c \le z_n,  && \forall n, \forall c \in \mathcal{C}(n) \nonumber\\
\nonumber
&& \sum_{l\in \mathcal{L}}z_l\ge1  
\end{align}
where $r_n=\sqrt{\alpha_n}W_n x_i +{\bf
{1}}_{\{n\notin  y_{i},n\in\mathcal{L}\}}-{\bf
{1}}_{\{n\in y_{i},n\in\mathcal{L}\}}$. The feasible label  from \eqref{integerprogram} is the set of labels where if a node $n$ is in the label $y$, at least one of its child node is in $y$, i.e.,  $\forall n\notin \mathcal{L}, n\in y \implies \exists c\in\mathcal{C}(n), c\in y$, and all the parents of $n$ are in the label, i.e., $\forall n\in y\implies \forall p\in \mathcal{P}(n), p\in y$. The feasible set  is equivalent to $\mathcal{Y}_m$.
 The search problem (\ref{integerprogram}) can be shown to be NP-hard by reduction from the set cover problem. We relax the  integer program into a linear program for training. Last constraint of $\sum_{l\in \mathcal{L}}z_l\ge1$ is not needed for an integer program, but yields a tighter LP relaxation. In testing, we rely on the binary integer programming only if the solution to
LP is not integral. In practice, integer programming solver is effective for this problem, only 3 to 7
times slower than linear relaxed program using gurobi solver \cite{gurobi}.

\subsection{Optimizing with the shared norm}
 Optimization \eqref{SSVMp} is a convex  optimization jointly  in  $U$ and $\alpha$, and thus,  has a global optimum. 

\begin{lem}
Optimization \eqref{SSVMp} 
is a convex optimization jointly in $U$ and $\alpha$.  \proof
 Let $f(U,\alpha)=\sum_n \sum_d f_{n,d}(U_n,\alpha_n)$ where $f_{n,d}={U}_{n,d}^2/\alpha_n$. The Hessian of each $f_{n,d }$ can be calculated easily by differentiating twice. Then, the Hessian is a semipositive
 definite matrix for $\forall \alpha_n\ge0$, since   if $\alpha_n>0$, $\nabla^{2} f_{n,d}=\begin{bmatrix} \frac{\partial^2 f_{n,d}}{(\partial U_{n,d})^2} & \frac{\partial^2 f_{n,d}}{\partial\alpha_n\partial U_{n,d}} \\ \frac{\partial^2 f_{n,d}}{\partial\alpha_n\partial U_{n,d}} & \frac{\partial^2 f_{n,d}}{(\partial\alpha_n )^2}\end{bmatrix}=\frac{2}{\alpha_n}\begin{bmatrix} 1 & -\frac{U_{n,d}}{\alpha_n} \\ -\frac{U_{n,d}}{\alpha_n} & \frac{U_{n,d}^2}{\alpha_n^2}\end{bmatrix}$, and if $\alpha_n=0$ we can assume $\|U_n\|_2=0$ by restricting the domain and  the hessian to be  a zero matrix. Thus, $ \underset{n}{\sum} \dfrac{\|U_n\|^2_2}{\alpha_n}$ is a convex function jointly in $U_n$ and $\alpha_n$, and the lemma follows from the fact that the rest of the objective function in \eqref{SSVMp} is convex in $U_n$. \qed
\end{lem}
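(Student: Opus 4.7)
The overall plan is to split the objective of \eqref{SSVMp} into the regularization piece $R(U,\alpha) = \lambda \sum_n \|U_n\|_2^2/\alpha_n$ and the data-fit piece $D(U) = \sum_i \max_{l\in[Y]}\bigl(\sum_{n\in \overline{\mathcal{A}}(l)} U_n x_i - \sum_{n\in \overline{\mathcal{A}}(l_i)} U_n x_i + \triangle(l,l_i)\bigr)$, show each is convex in $(U,\alpha)$, and observe that the feasible set is convex because every constraint is linear in $\alpha$ (and $U$ is unconstrained). Since a sum of convex functions on a convex set gives a convex program, the lemma would follow.

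For $D(U)$, the inner expression is affine in $U$ and independent of $\alpha$ for each fixed $l$, so each term is a pointwise maximum of affine functions, hence convex in $(U,\alpha)$; summing over $i$ preserves convexity. The real work is showing that $R$ is jointly convex. The term that appears for each $n$ is $g_n(U_n,\alpha_n) = \|U_n\|_2^2/\alpha_n$, and I would prove this is jointly convex on $\{(u,t) : t>0\}\cup\{(0,0)\}$ using the perspective-function construction: if $h(u)=\|u\|_2^2$ is convex, then its perspective $t\, h(u/t)=\|u\|_2^2/t$ is jointly convex in $(u,t)$ for $t>0$; this is standard. Alternatively, one can verify this by differentiating twice and checking that the Hessian $\nabla^2 g_n$ is PSD for $\alpha_n>0$, by factoring the $2\times 2$ block (for each coordinate of $U_n$) as $\frac{2}{\alpha_n}\begin{bmatrix}1\\-U_{n,d}/\alpha_n\end{bmatrix}\begin{bmatrix}1 & -U_{n,d}/\alpha_n\end{bmatrix}$, which is visibly rank-one and PSD. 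Summing the convex $g_n$ over $n$ and multiplying by $\lambda\ge 0$ keeps $R$ convex.

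The one subtle point, which I expect to be the main obstacle, is the boundary behavior at $\alpha_n = 0$: the expression $\|U_n\|_2^2/\alpha_n$ is not defined there, and the constraint set only enforces $\alpha_n\ge 0$. The standard remedy is to define $g_n(0,0)=0$ and $g_n(U_n,0)=+\infty$ whenever $U_n\ne 0$; this is exactly the closure of the perspective function and is known to preserve joint convexity (the closure of a convex function is convex). Equivalently, one can restrict the effective domain to $\{\alpha_n=0 \Rightarrow U_n=0\}$, which is convex. Under either convention, the argument above goes through on the closed epigraph, and \eqref{SSVMp} is a convex program, as claimed.
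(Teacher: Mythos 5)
Your proposal is correct and follows essentially the same route as the paper: the key step in both is the joint convexity of the quadratic-over-linear term $\|U_n\|_2^2/\alpha_n$, which the paper establishes by the same rank-one Hessian factorization you give as your alternative, and both handle the boundary $\alpha_n=0$ by restricting the domain so that $U_n=0$ there. Your explicit treatment of the loss term as a pointwise maximum of affine functions and of the constraints as linear, and your framing via the perspective function, are just slightly more careful packagings of the same argument.
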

 Since it is not clear how to  jointly optimize efficiently with respect to $U$ and $\alpha$,  we present an efficient method to optimize \eqref{SSVMp} alternating between $\alpha$ and $V_{n}=\dfrac{U_n}{\sqrt{\alpha_n}}$. Specifically, we show how to calculate the optimal $\alpha$ for a fixed $U$ in closed form in time $O(M)$ when  $\mathcal{G}$ is a tree where $M$ is the number of nodes in the graph, and for fixed $\alpha$, we optimize the objective using SGD \cite{Shalev-Shwartz:2007:PPE:1273496.1273598} with change of variables $V_{n}=\dfrac{U_n}{\sqrt{\alpha_n}}$.

As for the calculation of $\alpha$,  
optimization (\ref{SSVMp}) for fixed $U$ 
is an optimization with respect to  $\alpha$, i.e.,
 \begin{align}\label{optimize_alpha}
&\min_{\alpha_n}&
 \lambda \sum_{n\in [M]}\dfrac{\|U_n\|_2^2}{\alpha_n}&
\\
&\mbox{s.t}&\sum_{n\in \mathcal{A}(y)} \alpha_n\le 1,&&\forall y\in \mathcal{Y} \nonumber
\\
& &\alpha_n\ge 0,&&\forall n\in \mathcal{N} \nonumber.
 \end{align}

\begin{algorithm}
  \caption{Calculate the optimal $\alpha$ in \eqref{optimize_alpha} for the tree structure $\mathcal{G}$ in $O(M)$.  We assume nodes are sorted in increasing order of depth,i.e., $\forall n, p\in \mathcal{P}(n), n>p$.
}\label{argmin_alpha}
  \begin{algorithmic}[1]
    \Procedure{ArgminAlphaTree}{$U,\mathcal{G}$}\Comment{return $\arg\min \alpha$}
    \INPUT {$U\in \mathbb{R}^{M\times d},$ a tree graph $\mathcal{G}$}
    \OUTPUT{$\alpha \in \mathbb{R}^{M}$.}
    \INIT  {$\alpha=N=E=[0\;0\dots\;0] \in\mathbb{R}^M, L=[1 \;1\dots1]\in\mathbb{R}^M$}
    \For {$n = M \to 1$} \Comment{Calculate minimum norm with a bottom-up order}
        \State $N_n\gets \|U_{n,\cdot}\|_{2}^2,E_n\gets 0$
      \If{$|\mathcal{C}(n)|\neq0$} \Comment{If $n$ is not a leaf}
      \State $S\gets \sum_{c\in\mathcal{C}(n)}N_c$
        \If{$\sqrt{S}+\sqrt{N_n}\neq0$} 
        \State $N_n \gets (\sqrt{N_n}+\sqrt{S})^2,E_n\gets \dfrac{\sqrt{N_n}}{\sqrt{S}+\sqrt{N_n}}$
        \EndIf{}
      \EndIf{}

    \EndFor
    \State $L_1\gets 1- E_1$
    \For {$n = 2 \to M$} \Comment{Calculate minimum $\alpha$ with a top-down order}
        \State $p\gets \mathcal{P}(n)$
        \State $\alpha_n=\sqrt{L_{p}\cdot E_n}$
        \State $L_n=L_{p}\cdot(1-E_n)$
    \EndFor
      \State \textbf{return} $\alpha$
    \EndProcedure
  \end{algorithmic}
\end{algorithm}
 Algorithm \ref{argmin_alpha} shows how to calculate optimum $\alpha$ in  \eqref{optimize_alpha} in time $O(M)$ for a tree structure.
 The following briefly describes the algorithm \ref{argmin_alpha}. Let $f(n,l)=\min_{\sum_{n\in D(n)}\alpha_n\le l}$ $\sum_{n\in \bar{D}(n)}$ $\dfrac{\|U_n\|^2_2}{\alpha_n}$
 where $\bar{D}$ denotes the union set of $\{n\}$ and descendent nodes of $n$.
the following recursive relationship
 holds, since  $\mathcal{G}$ has a tree structure. 
 
\begin{align} 
f(n,l)=\begin{cases} \dfrac{\|U_n\|^2_2}{l} & \mbox{if $n$ is a leaf node} \\ \min_{0<k<1}\dfrac{\|U_n\|^2_2}{l\cdot k}+\sum_{c\in C(n)}f(c,l(1-k))  &  otherwise\end{cases}
\label{recursive}
\end{align}
If $n$ is a parent node of  leaf nodes,
\begin{align}
f(n,l)= \min_{0<k<l}\dfrac{B_1}{l\cdot k}+\dfrac{B_2}{l(1-k)}
\label{eq:min_k}
\end{align}
where $C(n)$ denotes the set of children nodes of $n$, $B_1=\|U_n\|^2_2$ and
$B_2=\sum_{c\in C(n)}\|U_{c}\|^2_2$. 
This has a
closed form solution, 
\begin{align}
f(n,l)= 
\frac{1}{l}(\sqrt{B_{1}}+\sqrt{B_{2}})^2 
\label{leaf_f}
\end{align}
and the minimum is attained at $k=\frac{\sqrt{B_1}}{\sqrt{B_1}+\sqrt{B_2}}$.
 For nodes $p$ of $n$, $f(p,l)$
will also have a form of (\ref{leaf_f}), since  the equation (\ref{leaf_f})  has a form of leaf node,  and the recursive relationship
(\ref{recursive}) holds. We  continue this process until the root node $r$ is reached,
and $f(r,1)$ is the optimum. The optimal $\alpha$ can be calculate backward. 

In the experiments, we optimize $\alpha$ using algorithm \ref{argmin_alpha} with $U_n=\sqrt{\alpha_n} V_n$ after a fixed number of epochs of SGD with respect to $V$, and repeat this until the objective function converges. We find that the algorithm is efficient enough to scale up to large datasets.

\section{Experiments}
 In this section, the performance of NHSVM and SSVM are evaluated empirically from synthetic
 datasets to a large scale data. Data statistics
 is summarized in table \ref{exper_spec}.   Results are compared with HSVM \cite{cai2004hierarchical}, and flat SVM, as well as, HR-SVM \cite{gopal2013recursive} in competition data.  HR-SVM is similar to HSVM with  multi-task loss.  For each experiments, the different parameters are tested on the the hold out dataset. Fixed set of $\lambda$ is tested, $\lambda\in\{10^{-8},10^{-7},\dots,10^{2}\}$. For NHSVM is tested with $\rho=2$, and $\rho=1$  in \eqref{NHSVM} and \eqref{NHSVM3}. Also $\rho=2$ is tested with directional constraints. For both SWIKI, $T=1.5$ is used in \eqref{con:range}.
And each model with the parameters which had the best holdout error  is trained
 with all the training data, and we report test errors.  We also added comparisons with
FastXML\cite{prabhu2014fastxml} in the competition dataset. FastXML is very efficient ranking method suitable for large structures. Since FastXML  predicts rankings of full labels rather than list of labels, we predicted with the same number of labels as NHSVM, and compared the result. As for the training time of HR-SVM, it is parallelizable and faster than NHSVM since it does not rely on linear programming, but not as fast as FastXML.

\begin{figure}
\begin{center}
\begin{tabular}{| l | c | c |c|c|c|}
    \hline
     & $M$& $d$&$N$&$|\mathcal{L}|$& $\overline{\mathcal{L}}$\\
    \hline
    Synthetic(B)  & 15 & 1K& 8K & 8 & 1\\
    Synthetic(U)  & 19 & 1K& 10K & 11 & 1\\
        IPC & 553& 228K& 75K& 451 & 1
\\   
    SWIKI d5 & 1512& 1000&41K&1218 & 1.1\\
    ImageNet & 1676&51K&100K&1000 & 1\\
    DMOZ10  & 17221 & 165K& 163K & 12294 & 1\\
    SWIKI  & 50312& 346K&456K&36504 & 1.8\\
    \hline
    \end{tabular}
  \end{center}
  \caption{Data statistics: $M$ is the number of nodes in the graph. $d$
  is the dimension of the features. $N$ is the number of the instances.  $|\mathcal{L}|$
  is the number of labels.
 $\overline{\mathcal{L}}$ is the average labels per instance. $\overline{\mathcal{L}}=1$
 denotes a single-label dataset.
 }\label{exper_spec}
\end{figure}

\subsection{Synthetic Dataset}
For the intuitive hierarchical synthetic datasets, we compare the performance of the different models. For the balanced synthetic data, a weight vector $W_n\in\mathbb{R}^d$ for each node $n\in[2^3-1]$ in the complete balanced tree with depth 4 and an instance vector $x_i\in\mathbb{R}^d$,$i\in [N]$,$N=15000,$ for each instances are sampled from the standard multivariate normal distribution. Instances are assigned
to labels which have maximum potential.
For the unbalanced synthetic data,
 we sample $x_i\in \mathbb{R}^d$ from the multivariate normal distribution
 with $d=1,000, i\in [N], N=10,000$, and normalize its norm to 1.  We divide
 the space $\mathbb{R}^d$ with a random hyperplane recursively so that the divided spaces form an unbalanced binary tree structure, a binary tree growing only
in one direction.  Specifically, we divide the space  into two spaces with a random hyperplane, which form two child
spaces, and recursively
divide only one of the child space with a random hyperplane until the depth of the binary tree reaches 10. 
Each $x$ is assigned to leaf nodes if $x$ falls into the corresponding space. $\rho=2$ is used. For results  see the Table \ref{syn}. HSVM fails to exploit the hierarchical structure of the unbalanced dataset with  the accuracy close to flat model, whereas NHSVM achieves higher accuracy by 5\% over flat model. The accuracy gain of NHSVM against HSVM for the balanced dataset, shows the advantage of (\ref{NHSVM2}) and normalized structured loss(\ref{wserror}). For the unbalanced dataset, SSVM further achieves 2\% higher accuracy compared to NHSVM learning the underlying structure from the data.  For the balanced dataset, SSVM performs similar to NHSVM.

\begin{figure}
\begin{center}
 \begin{tabular}{| c | c | c|c|}

    \hline
    Method & Balanced& Unbalanced \\
    \hline
     SSVM & 63.6 & 75.1 \\
     NHSVM &63.5 & 73.1 \\
     HSVM &62.7   & 68.6 \\
     Flat SVM  &60.5 & 68.5\\
    \hline  
    \end{tabular}
  \caption{Accuracy on synthetic datasets.}\label{syn}
\end{center}
\end{figure}

\subsection{Benchmark Datasets}

We show the results on  several benchmark datasets in different fields without
restricting domain to the document classification, such as  ImageNet in table
\ref{bench_result}. Results  show a consistent improvement
over our base model.     NHSVM outperforms our
 base methods. SSVM shows an additional increase in the performance.
DMOZ 2010 and SWIKI-2011 are
from LSHTC competition\footnote{http://lshtc.iit.demokritos.gr/}.  
IPC\footnote{http://www.wipo.int/classifications/ipc/} is a single label patent document dataset. DMOZ 2010 is a  single label web-page collection. 
SWIKI-2011
is a multi-label dataset of wikipedia pages, depth is cut to 5 (excluded
labels with depth more than 5). 
ImageNet data \cite{ILSVRCarxiv14} is a single label image data with
SIFT BOW features from development kit 2010.
SWIKI and ImageNet have DAG structures, and the others have tree structures.
\begin{figure}
\begin{center}
    \begin{tabular}{| l | c |c|c|c| }
    \hline
      Method & IPC 
      &DMOZ 
      &  SWIKI d5 & Imagenet \\
    \hline
%

      SSVM &  52.9     
      &45.5 
      & *      & *\\ 
      NHSVM & 52.2     
      &45.5 
      & 61 &8.2\\
      HSVM     & 50.6  
      &45.0 
      & 59 & 7.5\\
      Flat SVM & 51.7 
      & 44.2 
      & 57.7 & 7.4\\
    \hline
    
    \end{tabular}
  \caption{Accuracy on benchmark datasets: * denotes that SSVM was not able to be applied due to the graphical structure of the data.}
\label{bench_result}
\end{center}
\end{figure}
\subsection{Result on LSHTC Competition }
 In table \ref{comp_result}, we show the result on full competition dataset, SWIKI-2011, and compare with results currently reported. NHSVM was
 able to adapt to the large scale of SWIKI-2011 dataset with the improved
 results. Only 98,519 features appear in the test set are used. With a computer with  Xeon(R) CPU E5-2620  processor, optimization took around 3 weeks with  matlab code.
\begin{figure}
\begin{center}
    \begin{tabular}{ |c | c |}
    \hline
      Method & Accuracy \\
    \hline
      NHSVM     &  42.38 \\
      HSVM      &  38.7 \\
       HR-SVM*\cite{gopal2013recursive}\    & 41.79 \\
       FastXML**\cite{prabhu2014fastxml}\ & 31.6\\
      Competition Winner    & 37.39  \\
    \hline
    \end{tabular}
  \caption{Results on full S-WIKI. *The inference of HR-SVM relies on the other meta learning method\cite{gopal2010multilabel} for high accuracy. ** NHSVM is used to predict the number of labels in the inference.   }\label{comp_result}
\end{center}
\end{figure} 

\section{Conclusion}

This chapter addressed issues of non-uniform regularization and margin in hierarchical SVM by introducing a new model,  normalized hierarchical
SVM. We
show its invariance property over duplicating nodes. From the motivation
of normalized hierarchical SVM, a new norm, structured shared Frobenius norm, is proposed.
Structured shared Frobenius norm favors matrices 
with a shared structure. 
 We  introduced an efficient method for learning in a large size
multi-label problem in complicated structures. The experimental results show the benefit of our method including an improved results on the very large competition dataset. 

\chapter{Conclusion}

In this dissertation, we first investigated surrogate losses of structured prediction.  We presented a wide general class of surrogate losses while maintaining the efficiency of  inference. Efficiency was based on the decomposability. We also explored the limitation of this approach and presented a practical and efficient inference methods for the generalized the surrogate loss. Efficiency and the performance improvement were empirically shown in several problems including the models utilizing the deep networks.

 In the second part of the dissertation, we showed that the regularization bias was induced by the structural imbalance in the hierarchical classification problem. We presented an approach to remove the bias. We extended the data dependent method to adapt to the data as well as the structure, which can be represented as a new norm suitable for the unbalanced structure. Empirical study shows that our normalization improves over the unnormalized method, which plays an important role in high performance for  a large dataset with an imbalanced label structure.


\appendix
\begin{singlespace}
\bibliography{main}

\begin{thebibliography}{}

\end{thebibliography}


\begin{thebibliography}{10}

\bibitem{amit2007uncovering}
Yonatan Amit, Michael Fink, Nathan Srebro, and Shimon Ullman.
\newblock Uncovering shared structures in multiclass classification.
\newblock In {\em Proceedings of the 24th international conference on Machine
  learning}, pages 17--24. ACM, 2007.

\bibitem{bakir2007predicting}
G{\"o}khan BakIr, Thomas Hofmann, Bernhard Sch{\"o}lkopf, Alexander~J Smola,
  Ben Taskar, and SVN Vishwanathan.
\newblock {\em Predicting structured data}.
\newblock MIT press, 2007.

\bibitem{bauer2014efficient}
Alexander Bauer, N~Gornitz, Franziska Biegler, K-R Muller, and Marius Kloft.
\newblock Efficient algorithms for exact inference in sequence labeling svms.
\newblock {\em Neural Networks and Learning Systems, IEEE Transactions on},
  25(5):870--881, 2014.

\bibitem{belanger2016structured}
David Belanger and Andrew McCallum.
\newblock Structured prediction energy networks.
\newblock In {\em International Conference on Machine Learning}, pages
  983--992, 2016.

\bibitem{bradford1998pruning}
Jeffrey~P Bradford, Clayton Kunz, Ron Kohavi, Cliff Brunk, and Carla~E Brodley.
\newblock Pruning decision trees with misclassification costs.
\newblock In {\em European Conference on Machine Learning}, pages 131--136.
  Springer, 1998.

\bibitem{cai2004hierarchical}
Lijuan Cai and Thomas Hofmann.
\newblock Hierarchical document categorization with support vector machines.
\newblock In {\em Proceedings of the thirteenth ACM international conference on
  Information and knowledge management}, pages 78--87. ACM, 2004.

\bibitem{Cai_2007}
Lijuan Cai and Thomas Hofmann.
\newblock 2007,exploiting known taxonomies in learning overlapping concepts.
\newblock In {\em In: Proceedings of the 20th International Joint Conference on
  Artificial Intelligence (IJCAI '07)}, pages 714--719, 2007.

\bibitem{Cesa-Bianchi:2006:HCC:1143844.1143867}
Nicol\`{o} Cesa-Bianchi, Claudio Gentile, and Luca Zaniboni.
\newblock Hierarchical classification: combining bayes with svm.
\newblock In {\em Proceedings of the 23rd international conference on Machine
  learning}, ICML '06, pages 177--184, New York, NY, USA, 2006. ACM.

\bibitem{charniak1996statistical}
Eugene Charniak.
\newblock {\em Statistical language learning}.
\newblock MIT press, 1996.

\bibitem{chen2015learning}
Liang-Chieh Chen, Alexander Schwing, Alan Yuille, and Raquel Urtasun.
\newblock Learning deep structured models.
\newblock In {\em Proceedings of the 32nd International Conference on Machine
  Learning (ICML-15)}, pages 1785--1794, 2015.

\bibitem{choi2016fast}
Heejin Choi, Ofer Meshi, and Nathan Srebro.
\newblock Fast and scalable structural svm with slack rescaling.
\newblock In {\em Artificial Intelligence and Statistics}, pages 667--675,
  2016.

\bibitem{choi2015normalized}
Heejin Choi, Yutaka Sasaki, and Nathan Srebro.
\newblock Normalized hierarchical svm.
\newblock {\em arXiv preprint arXiv:1508.02479}, 2015.

\bibitem{NIPS2016_6485}
Corinna Cortes, Vitaly Kuznetsov, Mehryar Mohri, and Scott Yang.
\newblock Structured prediction theory based on factor graph complexity.
\newblock In D.~D. Lee, M.~Sugiyama, U.~V. Luxburg, I.~Guyon, and R.~Garnett,
  editors, {\em Advances in Neural Information Processing Systems 29}, pages
  2514--2522. Curran Associates, Inc., 2016.

\bibitem{daume2006practical}
Harold~Charles Daume and Daniel Marcu.
\newblock {\em Practical structured learning techniques for natural language
  processing}.
\newblock Citeseer, 2006.

\bibitem{dekel2004large}
Ofer Dekel, Joseph Keshet, and Yoram Singer.
\newblock Large margin hierarchical classification.
\newblock In {\em Proceedings of the twenty-first international conference on
  Machine learning}, page~27. ACM, 2004.

\bibitem{Dumais:2000:HCW:345508.345593}
Susan Dumais and Hao Chen.
\newblock Hierarchical classification of web content.
\newblock In {\em Proceedings of the 23rd annual international ACM SIGIR
  conference on Research and development in information retrieval}, SIGIR '00,
  pages 256--263, New York, NY, USA, 2000. ACM.

\bibitem{durbin1998biological}
Richard Durbin, Sean~R Eddy, Anders Krogh, and Graeme Mitchison.
\newblock {\em Biological sequence analysis: probabilistic models of proteins
  and nucleic acids}.
\newblock Cambridge university press, 1998.

\bibitem{elisseeff2001kernel}
Andr{\'e} Elisseeff and Jason Weston.
\newblock A kernel method for multi-labelled classification.
\newblock In {\em Advances in neural information processing systems}, pages
  681--687, 2001.

\bibitem{elisseeff2002kernel}
Andr{\'e} Elisseeff and Jason Weston.
\newblock A kernel method for multi-labelled classification.
\newblock In {\em Advances in neural information processing systems}, pages
  681--687, 2002.

\bibitem{elkan2001foundations}
Charles Elkan.
\newblock The foundations of cost-sensitive learning.
\newblock In {\em International joint conference on artificial intelligence},
  volume~17, pages 973--978. Lawrence Erlbaum Associates Ltd, 2001.

\bibitem{finley2008training}
Thomas Finley and Thorsten Joachims.
\newblock Training structural svms when exact inference is intractable.
\newblock In {\em Proceedings of the 25th international conference on Machine
  learning}, pages 304--311. ACM, 2008.

\bibitem{forsyth2003modern}
David~A Forsyth and Jean Ponce.
\newblock A modern approach.
\newblock {\em Computer vision: a modern approach}, pages 88--101, 2003.

\bibitem{friedman2001elements}
Jerome Friedman, Trevor Hastie, and Robert Tibshirani.
\newblock {\em The elements of statistical learning}, volume~1.
\newblock Springer series in statistics New York, NY, USA:, 2001.

\bibitem{gimpel2010softmax}
Kevin Gimpel and Noah~A Smith.
\newblock Softmax-margin crfs: Training log-linear models with cost functions.
\newblock In {\em Human Language Technologies: The 2010 Annual Conference of
  the North American Chapter of the Association for Computational Linguistics},
  pages 733--736. Association for Computational Linguistics, 2010.

\bibitem{goodfellow2014generative}
Ian Goodfellow, Jean Pouget-Abadie, Mehdi Mirza, Bing Xu, David Warde-Farley,
  Sherjil Ozair, Aaron Courville, and Yoshua Bengio.
\newblock Generative adversarial nets.
\newblock In {\em Advances in neural information processing systems}, pages
  2672--2680, 2014.

\bibitem{gopal2010multilabel}
Siddharth Gopal and Yiming Yang.
\newblock Multilabel classification with meta-level features.
\newblock In {\em Proceedings of the 33rd international ACM SIGIR conference on
  Research and development in information retrieval}, pages 315--322. ACM,
  2010.

\bibitem{gopal2013recursive}
Siddharth Gopal and Yiming Yang.
\newblock Recursive regularization for large-scale classification with
  hierarchical and graphical dependencies.
\newblock In {\em Proceedings of the 19th ACM SIGKDD international conference
  on Knowledge discovery and data mining}, pages 257--265. ACM, 2013.

\bibitem{gurobi}
Inc. Gurobi~Optimization.
\newblock Gurobi optimizer reference manual, 2015.

\bibitem{gusfield1997algorithms}
Dan Gusfield.
\newblock {\em Algorithms on strings, trees and sequences: computer science and
  computational biology}.
\newblock Cambridge university press, 1997.

\bibitem{Huang:1998:AHI:290747.290774}
Jing Huang, S.~Ravi Kumar, and Ramin Zabih.
\newblock An automatic hierarchical image classification scheme.
\newblock In {\em Proceedings of the sixth ACM international conference on
  Multimedia}, MULTIMEDIA '98, pages 219--228, New York, NY, USA, 1998. ACM.

\bibitem{joachims2009cutting}
Thorsten Joachims, Thomas Finley, and Chun-Nam~John Yu.
\newblock Cutting-plane training of structural svms.
\newblock {\em Machine Learning}, 77(1):27--59, 2009.

\bibitem{katakis2008multilabel}
Ioannis Katakis, Grigorios Tsoumakas, and Ioannis Vlahavas.
\newblock Multilabel text classification for automated tag suggestion.
\newblock In {\em Proceedings of the ECML/PKDD}, volume~18, 2008.

\bibitem{kazama2007new}
Jun��ichi Kazama and Kentaro Torisawa.
\newblock A new perceptron algorithm for sequence labeling with non-local
  features.
\newblock In {\em Proceedings of the 2007 Joint Conference on Empirical Methods
  in Natural Language Processing and Computational Natural Language Learning
  (EMNLP-CoNLL)}, 2007.

\bibitem{kiperwasser2016simple}
Eliyahu Kiperwasser and Yoav Goldberg.
\newblock Simple and accurate dependency parsing using bidirectional lstm
  feature representations.
\newblock {\em Transactions of the Association for Computational Linguistics},
  4:313--327, 2016.

\bibitem{krizhevsky2012imagenet}
Alex Krizhevsky, Ilya Sutskever, and Geoffrey~E Hinton.
\newblock Imagenet classification with deep convolutional neural networks.
\newblock In {\em Advances in neural information processing systems}, pages
  1097--1105, 2012.

\bibitem{lacoste2013block}
Simon Lacoste-Julien, Martin Jaggi, Mark Schmidt, and Patrick Pletscher.
\newblock Block-coordinate frank-wolfe optimization for structural svms.
\newblock In {\em ICML 2013 International Conference on Machine Learning},
  pages 53--61, 2013.

\bibitem{lewis2004rcv1}
David~D Lewis, Yiming Yang, Tony~G Rose, and Fan Li.
\newblock Rcv1: A new benchmark collection for text categorization research.
\newblock {\em The Journal of Machine Learning Research}, 5:361--397, 2004.

\bibitem{manning1999foundations}
Christopher~D Manning, Christopher~D Manning, and Hinrich Sch{\"u}tze.
\newblock {\em Foundations of statistical natural language processing}.
\newblock MIT press, 1999.

\bibitem{marcus1993building}
Mitchell~P Marcus, Mary~Ann Marcinkiewicz, and Beatrice Santorini.
\newblock Building a large annotated corpus of english: The penn treebank.
\newblock {\em Computational linguistics}, 19(2):313--330, 1993.

\bibitem{margineantu2002class}
Dragos~D Margineantu.
\newblock Class probability estimation and cost-sensitive classification
  decisions.
\newblock In {\em European Conference on Machine Learning}, pages 270--281.
  Springer, 2002.

\bibitem{Mccallum98improvingtext}
Andrew Mccallum, Ronald Rosenfeld, Tom Mitchell, and Andrew Ng.
\newblock Improving text classification by shrinkage in a hierarchy of classes,
  1998.

\bibitem{pmlr-v48-meshi16}
Ofer Meshi, Mehrdad Mahdavi, Adrian Weller, and David Sontag.
\newblock Train and test tightness of lp relaxations in structured prediction.
\newblock In Maria~Florina Balcan and Kilian~Q. Weinberger, editors, {\em
  Proceedings of The 33rd International Conference on Machine Learning},
  volume~48 of {\em Proceedings of Machine Learning Research}, pages
  1776--1785, New York, New York, USA, 20--22 Jun 2016. PMLR.

\bibitem{mohri2012foundations}
Mehryar Mohri, Afshin Rostamizadeh, and Ameet Talwalkar.
\newblock {\em Foundations of machine learning}.
\newblock MIT press, 2012.

\bibitem{nguyen1995minimizing}
Nguyen~Duc Nghia, Do~Duy Chinh, and Pham~Canh Duong.
\newblock Minimizing the product of two discrete convex functions.
\newblock {\em ACTA MATHEMATICA VIETNAMICA}, 20(2):265--267, 1995.

\bibitem{nowozin2011structured}
Sebastian Nowozin, Christoph~H Lampert, et~al.
\newblock Structured learning and prediction in computer vision.
\newblock {\em Foundations and Trends{\textregistered} in Computer Graphics and
  Vision}, 6(3--4):185--365, 2011.

\bibitem{prabhu2014fastxml}
Yashoteja Prabhu and Manik Varma.
\newblock Fastxml: A fast, accurate and stable tree-classifier for extreme
  multi-label learning.
\newblock In {\em Proceedings of the 20th ACM SIGKDD international conference
  on Knowledge discovery and data mining}, pages 263--272. ACM, 2014.

\bibitem{ratinov2009design}
Lev Ratinov and Dan Roth.
\newblock Design challenges and misconceptions in named entity recognition.
\newblock In {\em Proceedings of the Thirteenth Conference on Computational
  Natural Language Learning}, pages 147--155. Association for Computational
  Linguistics, 2009.

\bibitem{RatliffSubgradient}
Nathan Ratliff, J.~Andrew~(Drew) Bagnell, and Martin Zinkevich.
\newblock ({O}nline) subgradient methods for structured prediction.
\newblock In {\em AISTATS}, 2007.

\bibitem{Rousu06kernel-basedlearning}
Juho Rousu, Craig Saunders, Sandor Szedmak, and John Shawe-Taylor.
\newblock Kernel-based learning of hierarchical multilabel classification
  models.
\newblock {\em JOURNAL OF MACHINE LEARNING RESEARCH}, 7:1601--1626, 2006.

\bibitem{ILSVRCarxiv14}
Olga Russakovsky, Jia Deng, Hao Su, Jonathan Krause, Sanjeev Satheesh, Sean Ma,
  Zhiheng Huang, Andrej Karpathy, Aditya Khosla, Michael Bernstein,
  Alexander~C. Berg, and Li~Fei-Fei.
\newblock {ImageNet Large Scale Visual Recognition Challenge}, 2014.

\bibitem{sarawagi2008accurate}
Sunita Sarawagi and Rahul Gupta.
\newblock Accurate max-margin training for structured output spaces.
\newblock In {\em Proceedings of the 25th international conference on Machine
  learning}, pages 888--895. ACM, 2008.

\bibitem{schwing2015fully}
Alexander~G Schwing and Raquel Urtasun.
\newblock Fully connected deep structured networks.
\newblock {\em arXiv preprint arXiv:1503.02351}, 2015.

\bibitem{Shalev-Shwartz:2007:PPE:1273496.1273598}
Shai Shalev-Shwartz, Yoram Singer, and Nathan Srebro.
\newblock Pegasos: Primal estimated sub-gradient solver for svm.
\newblock In {\em Proceedings of the 24th international conference on Machine
  learning}, ICML '07, pages 807--814, New York, NY, USA, 2007. ACM.

\bibitem{shalev2011pegasos}
Shai Shalev-Shwartz, Yoram Singer, Nathan Srebro, and Andrew Cotter.
\newblock Pegasos: Primal estimated sub-gradient solver for svm.
\newblock {\em Mathematical programming}, 127(1):3--30, 2011.

\bibitem{shalev2013accelerated}
Shai Shalev-Shwartz and Tong Zhang.
\newblock Accelerated proximal stochastic dual coordinate ascent for
  regularized loss minimization.
\newblock {\em Mathematical Programming}, pages 1--41, 2013.

\bibitem{Silla:2011:SHC:1937796.1937884}
Carlos~N. Silla, Jr. and Alex~A. Freitas.
\newblock A survey of hierarchical classification across different application
  domains.
\newblock {\em Data Min. Knowl. Discov.}, 22(1-2):31--72, January 2011.

\bibitem{snoek2006challenge}
Cees~GM Snoek, Marcel Worring, Jan~C Van~Gemert, Jan-Mark Geusebroek, and
  Arnold~WM Smeulders.
\newblock The challenge problem for automated detection of 101 semantic
  concepts in multimedia.
\newblock In {\em Proceedings of the 14th ACM international conference on
  Multimedia}, pages 421--430. ACM, 2006.

\bibitem{srebro2005rank}
Nathan Srebro and Adi Shraibman.
\newblock Rank, trace-norm and max-norm.
\newblock In {\em Learning Theory}, pages 545--560. Springer, 2005.

\bibitem{Taskar03}
B.~Taskar, C.~Guestrin, and D.~Koller.
\newblock Max-margin {M}arkov networks.
\newblock In {\em Advances in Neural Information Processing Systems}. MIT
  Press, 2003.

\bibitem{TjongKimSang:2003:ICS:1119176.1119195}
Erik~F. Tjong Kim~Sang and Fien De~Meulder.
\newblock Introduction to the conll-2003 shared task: Language-independent
  named entity recognition.
\newblock In {\em Proceedings of the Seventh Conference on Natural Language
  Learning at HLT-NAACL 2003 - Volume 4}, CONLL '03, pages 142--147,
  Stroudsburg, PA, USA, 2003. Association for Computational Linguistics.

\bibitem{tsochantaridis2004support}
Ioannis Tsochantaridis, Thomas Hofmann, Thorsten Joachims, and Yasemin Altun.
\newblock Support vector machine learning for interdependent and structured
  output spaces.
\newblock In {\em Proceedings of the twenty-first international conference on
  Machine learning}, page 104. ACM, 2004.

\bibitem{Vural:2004:HMM:1015330.1015427}
Volkan Vural and Jennifer~G. Dy.
\newblock A hierarchical method for multi-class support vector machines.
\newblock In {\em Proceedings of the twenty-first international conference on
  Machine learning}, ICML '04, pages 105--, New York, NY, USA, 2004. ACM.

\bibitem{Weinberger_largemargin}
Kilian Weinberger and Olivier Chapelle.
\newblock Large margin taxonomy embedding with an application to document
  categorization, 2009.

\end{thebibliography}
\bibliographystyle{plain}
\end{singlespace}


\end{document}